\documentclass[letterpaper, 10 pt, conference]{ieeeconf}  
\IEEEoverridecommandlockouts

\usepackage[T1]{fontenc}
\usepackage{tgtermes}
\usepackage{subcaption}  

\usepackage{dsfont}
\usepackage{color}
\definecolor{pink}{rgb}{0.58,0,0.83}
\definecolor{orange}{rgb}{1,0.5,0}
\definecolor{lightgreen}{rgb}{0.2, 0.8, 0.2}
\definecolor{lightyellow}{rgb}{0.84, 0.65, 0.13}

\usepackage[colorlinks,linkcolor=blue,anchorcolor=blue,citecolor=red]{hyperref}
\usepackage{amsmath,amsfonts}
\usepackage{algorithmic}
\usepackage[linesnumbered, ruled]{algorithm2e}
\usepackage{array}

\usepackage{makecell}
\usepackage{textcomp}
\usepackage{stfloats}
\usepackage{url}
\usepackage{verbatim}
\usepackage{graphicx}
\usepackage{amssymb}
\usepackage{dsfont}
\usepackage{bbding}
\usepackage{booktabs}
\usepackage{multirow}
\usepackage{tablefootnote}
\usepackage{cite}
\usepackage{xcolor}
\usepackage{threeparttable}
\usepackage{caption}

\newcounter{definition}
\renewcommand{\thedefinition}{\arabic{definition}}
\newenvironment{definition}[1][]{%
  \refstepcounter{definition}%
  \par\medskip\noindent
  \textbf{Definition~\thedefinition%
    \if\relax\detokenize{#1}\relax\else\ (\normalfont #1)\fi.}\normalfont
}{\par\medskip}

\newcounter{problem}
\renewcommand{\theproblem}{\arabic{problem}}
\newenvironment{problem}[1][]{%
  \refstepcounter{problem}%
  \par\medskip\noindent
  \textbf{Problem~\theproblem%
    \if\relax\detokenize{#1}\relax\else\ (\normalfont #1)\fi.~}\normalfont
}{\par\medskip}

\newcounter{assumption}
\renewcommand{\theassumption}{\arabic{assumption}}

\newcounter{lemma}
\renewcommand{\thelemma}{\arabic{lemma}}

\newcounter{theorem}
\renewcommand{\thetheorem}{\arabic{theorem}}
\newenvironment{theorem}[1][]{%
  \refstepcounter{theorem}%
  \par\medskip\noindent
  \textbf{Theorem~\thetheorem%
    \if\relax\detokenize{#1}\relax\else\ (\normalfont #1)\fi.~}\normalfont
}{\par\medskip}

\newtheorem{remark}{Remark}
\usepackage{arydshln}
\usepackage{multicol}
\usepackage{soul}
\usepackage{tablefootnote} 
\definecolor{ForestGreen}{RGB}{34,139,34}
\def\BibTeX{{\rm B\kern-.05em{\sc i\kern-.025em b}\kern-.08em
		T\kern-.1667em\lower.7ex\hbox{E}\kern-.125emX}}
\usepackage{balance}
\begin{document}
	\title{\bf Simultaneous Arrival Control for Distributed Multi-Robot Systems with Curvature and Constant-Speed Constraints}
	\author{
		Zhouru Xiao$^{1}$, Yang Lu$^{2}$, Weijia Yao$^{1}$, Min Liu$^{1}$, Yaonan Wang$^{1}$
  \thanks{$^{1} $Zhouru Xiao, Weijia Yao, Min Liu and Yaonan Wang are with the School of Artificial Intelligence and Robotics, Hunan University, China (email: {xzr798@hnu.edu.cn,}{wjyao@hnu.edu.cn}) (Corresponding author: Weijia Yao). The work of Yao was supported by the National Natural Science Foundation of China under Grant 62573182.}
        \thanks{$^{2}$Yang Lu is with the College of Intelligence Science and Technology, National University of Defense Technology, China (email: {luyang18@mail.sdu.edu.cn}).}
      }

		
	
	\maketitle
	\thispagestyle{empty}
	\pagestyle{empty}
	
	\begin{abstract}
The simultaneous arrival of multiple mobile robots at a target point is crucial for cooperation tasks such as cooperative encirclement, disaster relief, and environmental monitoring.  Although the simultaneous arrival problem itself is already complex, the problem becomes more challenging when there are constraints on the robot trajectory curvatures and the speeds are required to be constant (possibly different for different robots), and the control law for robots needs to be distributed. These constraints are typical for a multi-robot system consisting of, e.g., fixed-wing UAVs. To address this challenge, this paper proposes a distributed switching control method based on the maximum consensus protocol. By exploiting the geometric properties of Dubins paths along with optimization principles, a virtual time variable is introduced, and a hybrid control law that combines optimal control with saturated proportional control is designed. Under the proposed control law, each robot is driven to approach the maximum virtual time among its neighbors, thereby achieving  simultaneous arrival under some mild conditions. Furthermore, we prove that in certain cases the proposed method attains a theoretically optimal arrival time. The approach is scalable and real-time, with low communication overhead. Its effectiveness and robustness are validated through extensive simulations and experiments.

	\end{abstract}
	
	
	\definecolor{limegreen}{rgb}{0.2, 0.8, 0.2}
	\definecolor{forestgreen}{rgb}{0.13, 0.55, 0.13}
	\definecolor{greenhtml}{rgb}{0.0, 0.5, 0.0}
	
	\section{Introduction}
Cooperative motion planning of multi-robot systems represents a challenge in control theory and robotics. Achieving efficient, robust, and distributed coordination becomes particularly difficult in the presence of physical constraints and disturbances from dynamic environments. In particular, the distributed simultaneous arrival problem requires a group of mobile robots, subject to robot trajectory curvature and constant-speed constraints, to reach a target location simultaneously from arbitrary initial states without a centralized coordination mechanism. This problem relates to diverse application scenarios, such as cooperative target interception and cooperative encirclement. Relevant studies on this topic have been reported in \cite{Li2020,9482855,8303222,11004631} and the references therein.

The simultaneous arrival problem is similar to the multi-robot rendezvous problem \cite{795787,doi:10.1137/040620552,doi:10.1137/040620564}, but with stricter requirements on the target points. In the general rendezvous problem \cite{1381658,4200856,5783895}, robots are only required to gather at the same location, and such a location is typically \emph{not} predetermined. In contrast, the simultaneous arrival problem demands that all robots reach a \emph{pre-specified} target location at exactly the same time, which imposes higher requirements on control strategies in terms of accuracy and temporal consistency \cite{Li2020}. This problem resembles the cooperative missile strike and interception problem; however, unlike missile systems, the multi-robot distributed simultaneous arrival problem places greater emphasis on individual kinematic constraints (e.g., curvature limitation and constant speed), and necessitate much stricter requirements on target arrival precision and time synchronization. This problem even requires precise simultaneous arrival in the undesirable scenario where robots are close to the targets in distance but the their initial headings direct towards the opposite direction to the target, thereby highlighting the value as well as the challenge of this problem in cooperative tasks.

For the distributed simultaneous arrival problem, several cooperative control approaches have been proposed, which can be broadly categorized into three classes: time-to-go estimation guidance laws, leader-following methods, and consensus-based algorithms.
Time-to-go estimation guidance laws regulate the impact or arrival time by explicitly estimating the remaining flight time of each robot. For example, \cite{1597196} combined proportional navigation guidance (PNG) with time-to-go error feedback to achieve the desired impact time, while \cite{doi:10.2514/1.G001349} designed a Lyapunov-based guidance law using the heading error to achieve time control under constrained initial conditions.
Leader-following methods achieve coordination by letting the rest of the robots adjust their motion relative to a designated leader or reference trajectory. For instance, \cite{4200856} proposed a discontinuous time-invariant control strategy, and \cite{ZHENG2013401} employed a bearing-based control law to contract the polygon formed by multiple robots' positions, thereby achieving coordinated rendezvous. However, these methods often rely on centralized information or designated leaders, limiting their scalability and robustness. In contrast, consensus algorithms can achieve global time synchronization through local information exchange and have been extensively studied in the literature \cite{1333204,9143934}, with demonstrated effectiveness even under communication delays and switching topologies \cite{11004631}. In fact, many existing cooperative guidance methods employ consensus-based strategies and have demonstrated strong guidance performance \cite{8106791,8318665,11018241}.

Nevertheless, existing studies still exhibit two limitations. First, most studies do not fully account for the physical saturation constraints of actuators in the form of maximum curvature or steering angle limits, which often prevents theoretical control laws from being effectively implemented on real systems. Second, when robots start extremely close to the target, conventional approaches suffer from severely limited maneuvering space and lack effective trajectory generation and time coordination mechanisms, resulting in infeasible paths. For instance, although \cite{9482855} considers saturation constraints, it has not considered scenarios in which robots originate near the target. Moreover, many theoretical results have been validated only in simulation environments, with limited experimental verification on real robotic platforms.

Dubins path theory offers an important insight for addressing the aforementioned challenges (i.e., distributed simultaneous arrival subject to kinematic and saturation constraints with adverse initial conditions). It provides a geometrically feasible and time-optimal path generation method for mobile systems subject to curvature constraints, such as unmanned ground vehicles and fixed-wing aircraft \cite{doi:10.2514/1.59512}. Since Dubins introduced the shortest-path construction for such vehicles in 1957, his model has been widely adopted for minimum-time trajectory planning on various nonholonomic platforms \cite{MLYNCH2003173,MATVEEV2011515}. Under the constant-speed assumption, the shortest path coincides with the minimum-time path, making Dubins paths a natural basis for addressing both temporal coordination and geometric constraints in the simultaneous arrival problem. By integrating Dubins paths with consensus protocols, the challenges posed by kinematic and saturation constraints under extreme initial conditions can be effectively addressed within a distributed control framework.

\textbf{Contributions:}  
To realize curvature-constrained and constant-speed distributed simultaneous arrival control for a heterogeneous nonholonomic multi-robot system, we propose a distributed switching control method based on the maximum consensus protocol. By exploiting the geometric properties of Dubins paths along with optimization principles, a virtual time variable is introduced, and a hybrid control law that combines optimal control with saturated proportional control is designed. Under the proposed control law, each robot is driven to progressively approach the maximum virtual time among its neighbors, thereby achieving  simultaneous arrival. The proposed approach exhibits four notable advantages: 1) It is distributed and scalable, making it applicable to multi-robot systems of arbitrary sizes;  2) It provides a rigorous mechanistic analysis of the simultaneous arrival problem; 3) Under a fixed communication frequency, the communication burden is low, as any two neighboring robots need to transmit and receive at most a single scalar, namely the virtual time variable; 4) Extensive simulations of large-scale multi-robot systems and experiments with multiple quadrotors validate the effectiveness, and we demonstrate the method's capability of collision avoidance.

\textbf{Notations:}  
Let an integer set be denoted by $\mathbb{Z}_i^j := \{ m \in \mathbb{Z} : i \le m \le j\}$, where $i,j \in \mathbb{Z}$ and $i \le j$. We use boldface to represent vectors $\mathbf{v} \in \mathbb{R}^n$ in an $n$-dimensional real vector space, with the $i$-th component denoted by $v_i$, where $i \in \mathbb{Z}_1^n$. For any set $\mathcal{S}$, its complement in the universal set $\mathcal{U}$ is denoted by $\mathcal{S}^c := \mathcal{U} \setminus \mathcal{S}$.

\textbf{Graphs:}  
We define the  \emph{node set} as $\mathcal{V} := \{1, \ldots, N\}$ to represent robots, and the  \emph{edge set} $\mathcal{E} \subseteq (\mathcal{V} \times \mathcal{V})$ encodes the communication links between neighboring robots. The neighbor set of robot $i$ is defined as $\mathcal{N}_i := \{ j \in \mathcal{V} : (i,j) \in \mathcal{E} \}$. In this paper, we consider only undirected graphs, meaning that if $(i,j) \in \mathcal{E}$, then robots $i$ and $j$ can share information bidirectionally. Please see \cite{MesbahiEgerstedt+2010} for an introduction to graph theory.

\section{Preliminaries And Problem Formulation}\label{PreliminariesAndProblemFormulation}
Consider a group of $N$ mobile robots moving in the space $\mathbb{R}^2$, subject to nonholonomic motion constraints. For the $i$-th robot, $i \in \mathbb{Z}_1^N$, its state is represented as
$\boldsymbol{\xi}_i = [\boldsymbol{p}_i^{\top}, \theta_i ] \in \mathbb{R}^2 \times \mathbb{S}^1$, where $\boldsymbol{p}_i = [x_i, y_i]^\top$ denotes the position coordinates and $\theta_i \in [0, 2\pi)$ denotes the heading angle. The $i$-th robot's kinematics are governed by the nonholonomic constraints:
\begin{equation}\label{0001}
\begin{aligned}
& \dot{x}_i = v_i \cos \theta_i, \\
& \dot{y}_i = v_i \sin \theta_i, \\
& \dot{\theta}_i = \omega_i,
\end{aligned}
\end{equation}
where $v_i > 0$ is a \emph{constant} linear speed and $\omega_i \in [-\bar{\omega}_i, \bar{\omega}_i]$ is the controllable angular velocity input. Furthermore, due to the limitations of the robot's steering mechanism, the minimum turning radius is given by
\begin{equation}\label{0002}
\rho_i = \frac{v_i}{\bar{\omega}_i}.
\end{equation}

To describe the relative position of robot $i$ with respect to a fixed target point $\boldsymbol{p}_i^d \in \mathbb{R}^2$, we introduce a polar-coordinate representation of the relative state vector
$\boldsymbol{\zeta}_i = [r_i, \phi_i]^\top \in \mathbb{R}_{\ge 0} \times (-\pi, \pi]$, 
defined as (see Fig.~\ref{fig:0001}):
\begin{equation}\label{0003}
\begin{aligned}
& r_i = \| \boldsymbol{p}_i - \boldsymbol{p}_i^d \|, \\
& \phi_i = \arg(\boldsymbol{p}_i - \boldsymbol{p}_i^d) - \theta_i,
\end{aligned}
\end{equation}
where $\arg(\cdot)$ denotes the angle between a vector in the plane and the positive $x$-axis, taking values in $(-\pi, \pi]$. 


\begin{figure}[!htbp]
    \centering
    \includegraphics[width=3in]{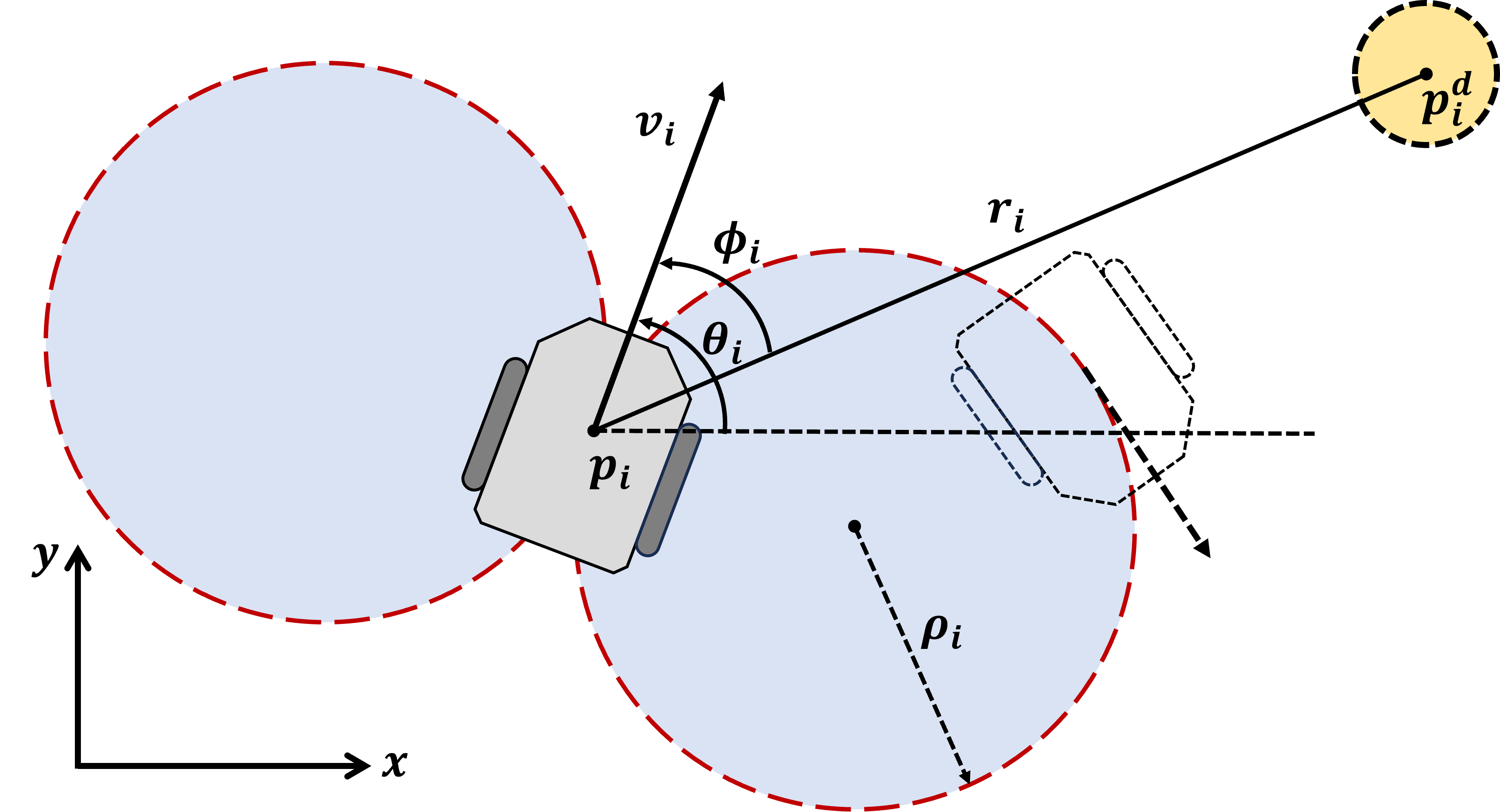}
    \caption{The relative position between a vehicle and the target.}
    \label{fig:0001}
\end{figure}

\begin{problem}
Consider a group of $N$ nonholonomic robots whose kinematics and minimum turning radii are given by \eqref{0001} and \eqref{0002}, respectively, where each robots move at possibly different constant speeds $v_i$. Let $\boldsymbol{p}_i^d$ denote the target position of robot $i$, and let the inter-robot communication be modeled by an undirected connected graph $\mathcal{G} = (\mathcal{V}, \mathcal{E})$. The motion planning objective is to design the angular velocity control inputs $\Omega = [\omega_1, \omega_2, \dots, \omega_N]^\top \in \mathbb{R}^N$ in \eqref{0001} such that the resulting robot trajectories $\boldsymbol{p}_i(t)$ satisfy:

1) (\textbf{Simultaneous Arrival}) There exists a time $t^*$ such that $\|\boldsymbol{p}_i(t^*) - \boldsymbol{p}_i^d\| = 0$ for all $i\in\mathbb{Z}_1^N$.

2) (\textbf{No Premature Arrival}) For any time $t < t^*$, it holds that $\|\boldsymbol{p}_i(t) - \boldsymbol{p}_i^d\| > 0$ for all $i\in\mathbb{Z}_1^N$.

3) (\textbf{Constraint Satisfaction}) The robot trajectories satisfy the constant speed constraint in \eqref{0001} and the minimum turning radius constraint in \eqref{0002}.
\end{problem}

It is noteworthy that each control input $\omega_i$ depends solely on the state $(r_i, \phi_i)$ of robot $i$, and the states of its neighbors set $\mathcal{N}_i$ via communication, and thus the control law is distributed and scalable.

\begin{definition}[Optimal Arrival Time]
The \emph{optimal arrival time is defined as $t^* = \min_{t \in \mathcal{T}} t$, where the set $\mathcal{T} = \{ t\ge 0 : \| \boldsymbol{p}_i(t) - \boldsymbol{p}_i^d \| = 0, \, \forall i \in \mathbb{Z}_1^N \}$.}  
\end{definition}

\begin{remark}
    Compared with single-robot navigation tasks as in \cite{7484276} and \cite{7300456}, our problem involves multiple robots and imposes a temporal coordination constraint requiring simultaneous arrival. Compared with tasks that guide robots to desired poses as in \cite{10886648} and \cite{10540263}, although the terminal orientations in our problem are unconstrained, the additional requirement of simultaneous arrival increases the problem's complexity.
\end{remark}

\section{Max-Consensus-Driven Simultaneous Arrival Control}\label{Time_Coordinated_Vector_Field}

This section introduces a maximum consensus approach by incorporating virtual time variables. The proposed approach not only guarantees that all robots arrive at their respective target positions simultaneously, but also is capable of addressing the simultaneous arrival problem, including undesirable initial states where some robots are initially very close to their targets.

\subsection{Virtual Time Variable}

We introduce a virtual time variable $T(r,\phi):\mathbb{R}^2\to\mathbb{R}$, defined as the theoretically minimal time required for a robot to reach its target under curvature constraints. Given the robot's constant speed, this variable is in a one-to-one correspondence with the Dubins path length $L$ without considering the terminal heading, i.e., $L = v T$, where $v$ is constant. Although $T$ does not necessarily equal the actual time spent by the robot, it serves as a metric to characterize the optimal traveling time in trajectory planning and control. Leveraging this variable, the trajectory planning problem boils down to the analysis of time optimality, establishing a direct connection to classical time-optimal control problems. For the case of a single Dubins-car-modelled robot pursuing a stationary target, this reformulation corresponds to the classical minimum-time control problem, which has been systematically studied in \cite{doi:10.2514/1.59512}. 
In this section, we omit the subscript $i$ for variables associated with robot $i$.

The time optimal control law for the Dubins paths without terminal heading constraints \cite{9482855} is given by
\begin{equation}\label{eq:0006}
\omega^*(r,\phi) =
\begin{cases}
-\bar{\omega}, & (r,\phi) \in S_- \cup C_-, \\
0, & (r,\phi) \in S_0, \\
\bar{\omega}, & (r,\phi) \in S_+ \cup C_+,
\end{cases}
\end{equation}
where the sets $C_-$, $C_+$, $S_-$, $S_+$, and $S_0$ are defined as
\begin{equation}\label{eq:0007}
\begin{aligned}
& C_- = \{ (r,\phi) : r \le 2 \rho \sin\phi, \phi > 0 \}, \\
& C_+ = \{ (r,\phi) : r \le -2 \rho \sin\phi, \phi < 0 \}, \\
& S_- = \{ (r,\phi) : r > -2 \rho \sin\phi, \phi < 0 \}, \\
& S_+ = \{ (r,\phi) : r > 2 \rho \sin\phi, \phi > 0 \}, \\
& S_0 = \{ (r,\phi) : \phi = 0 \}.
\end{aligned}
\end{equation}

\begin{remark}
The term $2\rho \sin \phi$ in \eqref{eq:0007} represents the lateral offset corresponding to the heading error $\phi$ at the radius of curvature $\rho$. It is used to define the boundaries of the regions $C_\pm$ and $S_\pm$ for the bang-bang angular velocity control $\omega^*(r,\phi)$\cite{7799011}. The sign of $\phi$ only determines the turning direction, either a left turn (corresponding to $C_+$, $S_+$) or a right turn (corresponding to $C_-$, $S_-$), and does not affect the time required to reach the target. That is, for the two initial states $(r,\phi)$ and $(r,-\phi)$, the time to reach the goal under the control law in (\ref{eq:0006}) is equal. Therefore, unless otherwise stated, we may assume $\phi \geq 0$ in the following discussion.
\end{remark}

Based on (\ref{eq:0006}) and (\ref{eq:0007}), and under the assumption of no terminal heading constraints, a Dubins path differs from the typical three-segment structure in \cite{SHKEL2001179,10938343} and contains at most two segments: a single circular arc or a straight line ($C$ or $S$), a circular arc followed by a straight line ($CS$), or a double circular arc ($CC$). Based on the above analysis, we introduce the following definition.

\begin{definition}[Workspace Partitioning]
For a robot with minimum turning radius $\rho$, the \emph{curvature-constrained region} is defined as $\mathcal{D}=\{(r,\phi) : r \le 2\rho\sin|\phi|\}$, and the \emph{curvature-feasible region} as its complement $\mathcal{D}^c$.
\end{definition}

This partition reflects the geometric threshold characteristics of Dubins path types. Within the curvature-constrained region $\mathcal{D}$, the robot's optimal Dubins paths are of the $CC$ or $C$ type, where the path curvature always reaches the robot's maximum allowable value. In contrast, within the curvature-feasible region $\mathcal{D}^c$, the optimal Dubins paths are of the $CS$ or $S$ type, where the path can be realized through a combination of straight and arc segments. 

The study \cite{9482855} has pointed out that within the curvature-constrained region $\mathcal{D}$, the path length $L$ exhibits nonlinear characteristics as the initial condition changes, whereas in the curvature-feasible region, $L$ changes approximately linearly. While the study proposed a numerical method to estimate $L$ in the curvature-feasible region, it lacks an effective analytical approach for the curvature-constrained region. To address this,  we leverage the geometric construction rules of relaxed Dubins paths \cite{351019} to derive an analytical solution for the Dubins path length as a function of the initial position, excluding terminal heading constraints, as detailed below.

\begin{figure}[!htbp]
    \centering
    \begin{subfigure}{0.23\textwidth}
        \centering
        \includegraphics[width=\linewidth]{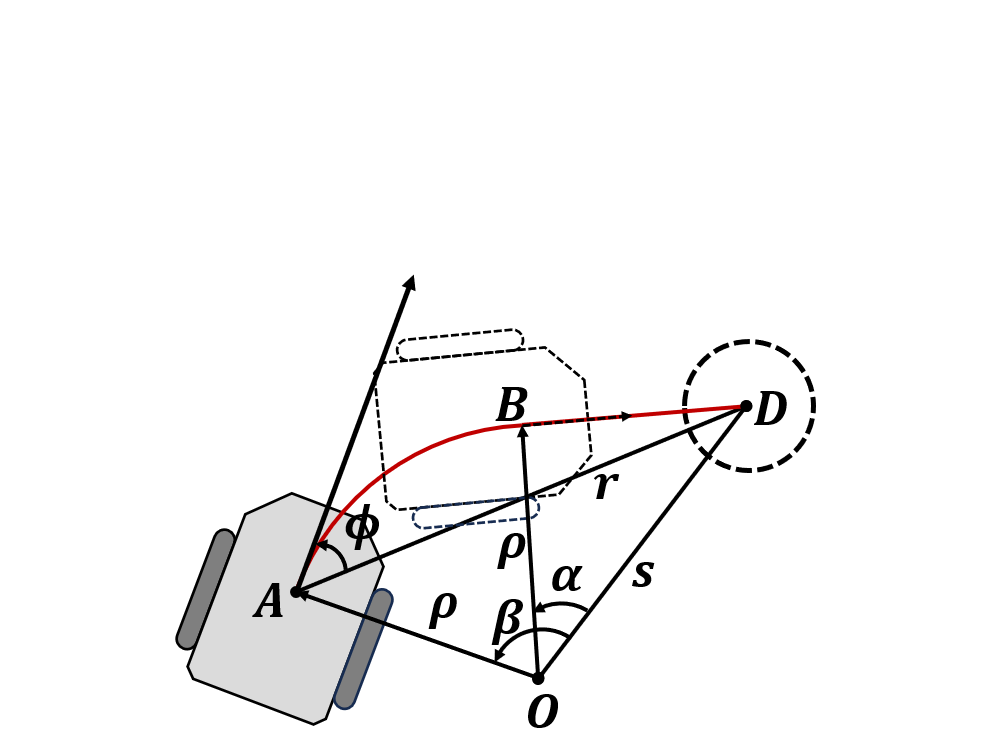}
        \caption{}
        \label{fig:subfig2}
    \end{subfigure}
    \begin{subfigure}{0.23\textwidth}
        \centering
        \includegraphics[width=\linewidth]{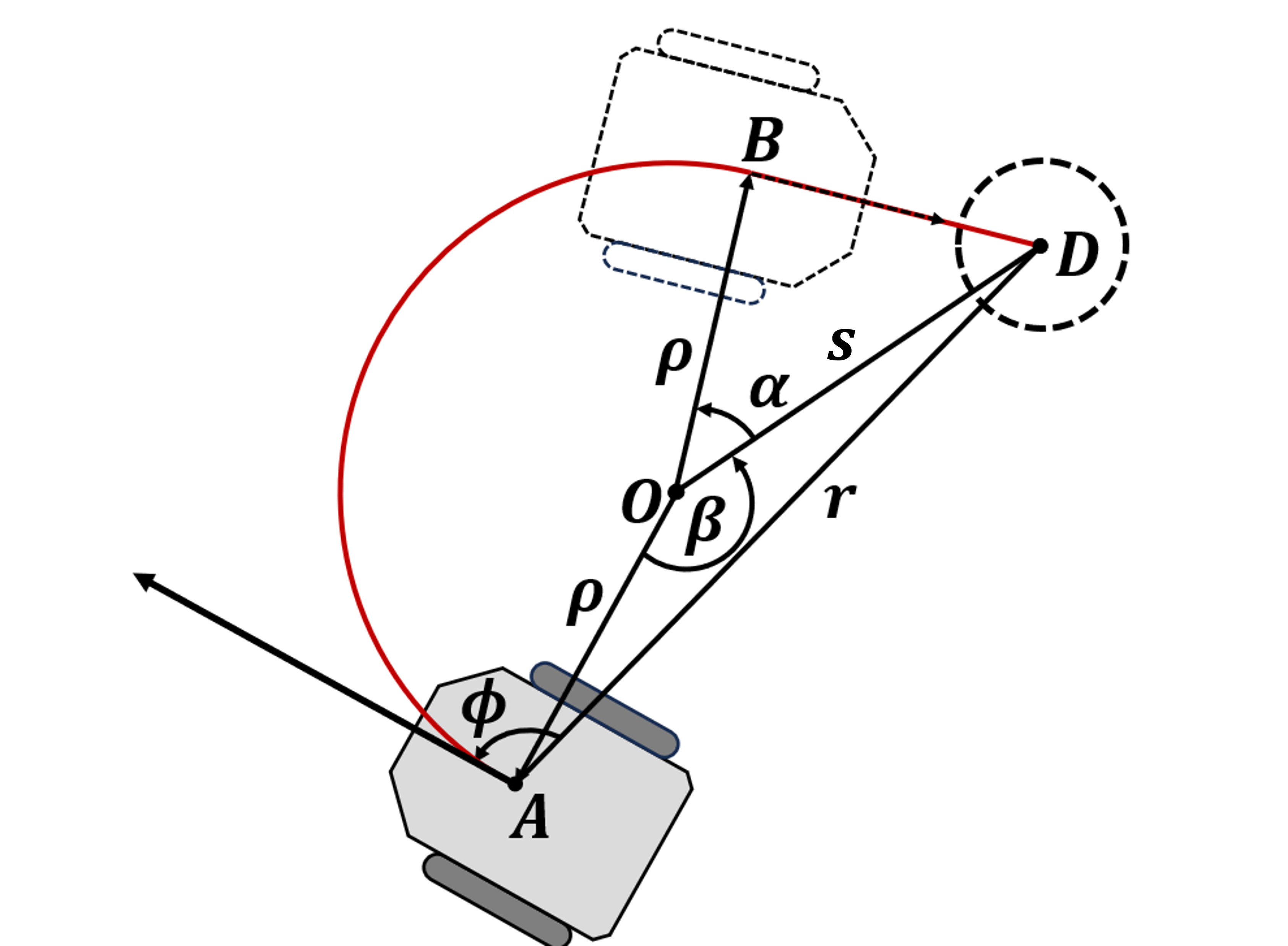}
        \caption{}
        \label{fig:subfig1}
    \end{subfigure}
    
    
    \begin{subfigure}{0.23\textwidth}
        \centering
        \includegraphics[width=\linewidth]{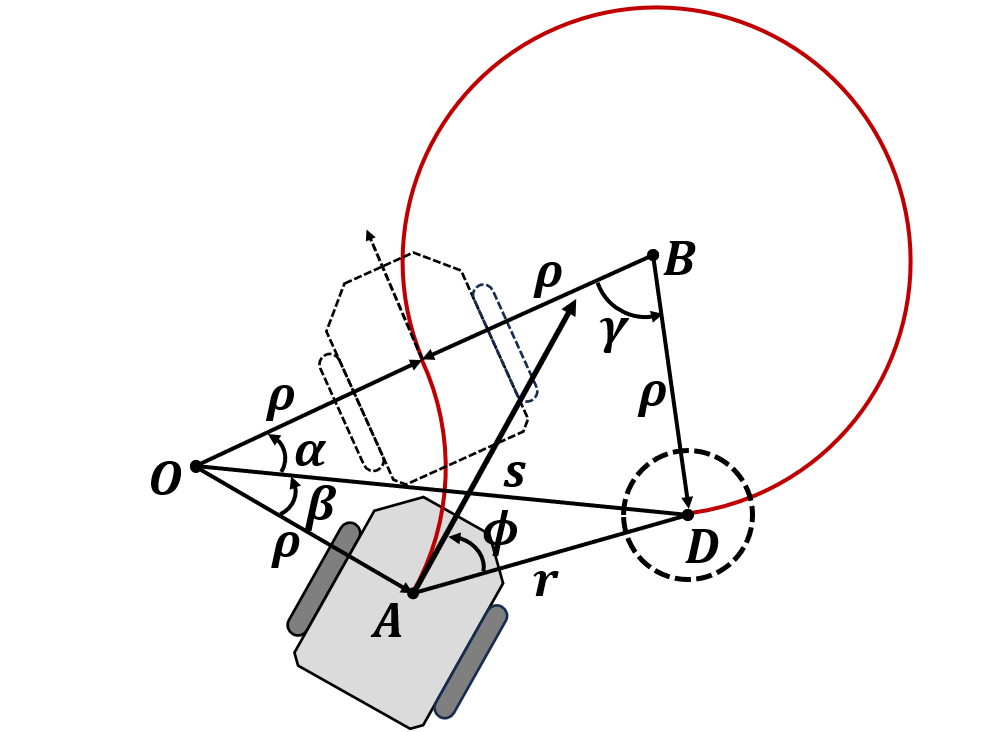}
        \caption{}
        \label{fig:subfig4}
    \end{subfigure}
    \begin{subfigure}{0.23\textwidth}
        \centering
        \includegraphics[width=\linewidth]{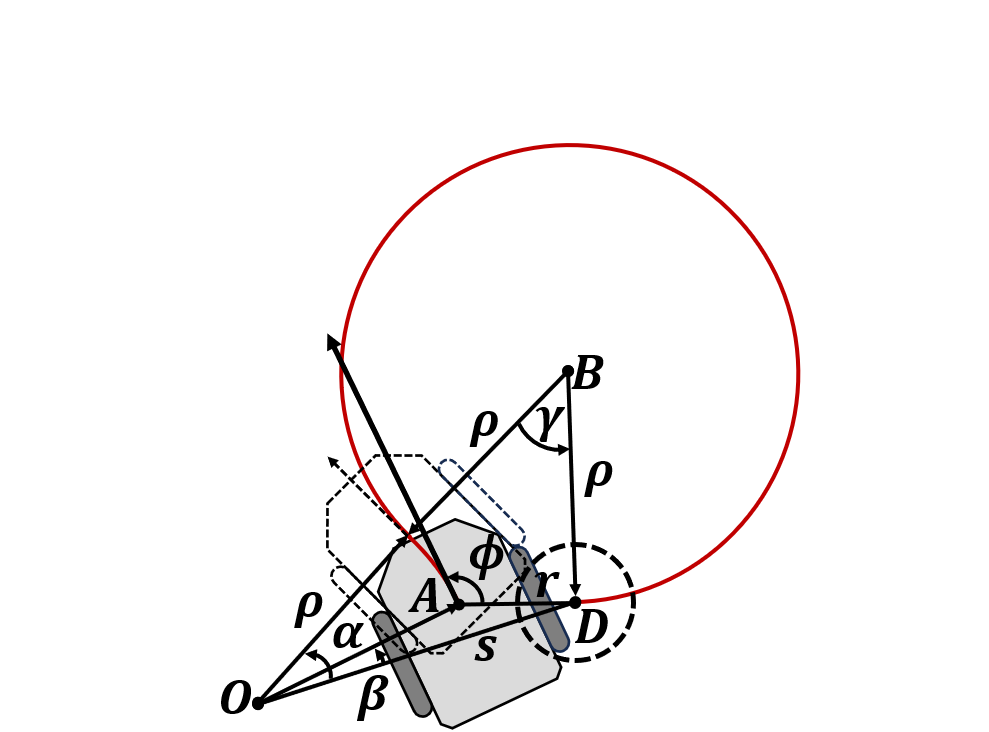}
        \caption{}
        \label{fig:subfig3}
    \end{subfigure}
    \caption{Illustration of four cases for the calculation of the Dubins path lengths without terminal heading constraints. (a) $\boldsymbol{\zeta}\in\mathcal{D}^c, |\phi|< \frac{\pi}{2}$; (b) $\boldsymbol{\zeta}\in\mathcal{D}^c, |\phi|\geq\frac{\pi}{2}$; (c) $\boldsymbol{\zeta}\in\mathcal{D}, |\phi|<\frac{\pi}{2}$; (d) $\boldsymbol{\zeta}\in\mathcal{D}, |\phi|\geq\frac{\pi}{2}$.}
    \label{fig:0002}
\end{figure}

\textbf{Case 1:} $\boldsymbol{\zeta}=[r, \phi]^\top \in\mathcal{D}^c$, let 
$s=\sqrt{r^{2}+\rho^{2}-2r\rho\sin|\phi|}$, $\alpha=\arccos{\frac{\rho}{s}}$, $\beta=\arccos{\frac{\rho-r\sin\phi}{s}}$, 
The Dubins path length is given by:  
\begin{equation}\label{eq:00017}
L(r, \phi)=
\begin{cases}
(\beta-\alpha)\rho+\sqrt{s^2-\rho^2}, & |\phi|< \frac{\pi}{2}, \\
(2\pi-\alpha-\beta)\rho+\sqrt{s^2-\rho^2}, & |\phi|\geq\frac{\pi}{2}.
\end{cases}
\end{equation}

\textbf{Case 2:} $\boldsymbol{\zeta}=[r, \phi]^\top \in\mathcal{D}$, let 
$s=\sqrt{r^{2}+\rho^{2}+2r\rho\sin |\phi|}$, $\alpha=\arccos{\frac{3\rho^2+s^2}{4\rho s}}$. $\beta=\arccos{\frac{\rho^2+s^2-r^2}{2\rho s}}$, $\gamma=\arccos{\frac{5\rho^2-s^2}{4\rho^2}}$, 
The Dubins path length is given by:  
\begin{equation}\label{eq:00018}
L(r, \phi)=
\begin{cases}
(2\pi+\alpha+\beta-\gamma)\rho, & |\phi|< \frac{\pi}{2},\\
(2\pi+\alpha-\beta-\gamma)\rho, & |\phi|\geq\frac{\pi}{2}.
\end{cases}
\end{equation}

The derivation of \eqref{eq:00017} and \eqref{eq:00018} follows from the law of cosines applied to the geometric configuration in Fig.~\ref{fig:0002}, and the monotonicity of the Dubins path length function $L$ is established in the following theorem.

\begin{theorem}[Monotonicity]\label{th:0001}
\emph{Let the minimum turning radius be $\rho$. For $(r,\phi)\in\mathcal{D}^c$, the Dubins path length function $L(r,\phi)$ is strictly increasing in $\phi$ over $[0,\pi]$.}
\end{theorem}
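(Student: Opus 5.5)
We would prove the claim by differentiating the closed-form expression \eqref{eq:00017} of Case~1 directly with respect to $\phi$, treating $r$ as fixed. By the symmetry noted in the Remark following \eqref{eq:0007}, it suffices to take $\phi\ge 0$, so $\sin|\phi|=\sin\phi$ and $s^2=r^2+\rho^2-2r\rho\sin\phi$. Membership in $\mathcal{D}^c$ means $r>2\rho\sin\phi$. This gives $s^2-\rho^2=r(r-2\rho\sin\phi)>0$, so $s>\rho$ and both $\alpha$ and the tangent length $\sqrt{s^2-\rho^2}$ are well defined and smooth in the interior.

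The key computation will be to express everything through $s$ and one geometric angle. We have
\[
\frac{ds}{d\phi}=-\frac{r\rho\cos\phi}{s},\qquad
\frac{d\alpha}{d\phi}=\frac{\rho}{s\sqrt{s^2-\rho^2}}\frac{ds}{d\phi},\qquad
\frac{d}{d\phi}\sqrt{s^2-\rho^2}=\frac{s}{\sqrt{s^2-\rho^2}}\frac{ds}{d\phi}.
\]
For $\beta$, note that $\cos\beta=(\rho-r\sin\phi)/s$ and $\sin\beta=r\cos\phi/s$ for $\phi\in[0,\pi/2)$, which follows from $s^2=(\rho-r\sin\phi)^2+r^2\cos^2\phi$. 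Hence $\beta$ is the polar angle of the vector $(\rho-r\sin\phi,\,r\cos\phi)$, and differentiating that angle gives
\[
\frac{d\beta}{d\phi}=\frac{r(r-\rho\sin\phi)}{s^2}.
\]

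\textbf{Branch $0\le\phi<\pi/2$.} Here $L=(\beta-\alpha)\rho+\sqrt{s^2-\rho^2}$. Collecting terms,
\[
\frac{dL}{d\phi}=\rho\frac{d\beta}{d\phi}-\frac{\rho^2}{s\sqrt{s^2-\rho^2}}\frac{ds}{d\phi}+\frac{s}{\sqrt{s^2-\rho^2}}\frac{ds}{d\phi}
=\rho\frac{d\beta}{d\phi}+\frac{\sqrt{s^2-\rho^2}}{s}\frac{ds}{d\phi}.
\]
Substituting the expressions above,
\[
\frac{dL}{d\phi}=\frac{r\rho}{s^2}\Bigl[(r-\rho\sin\phi)-\cos\phi\sqrt{s^2-\rho^2}\Bigr].
\]
To show the bracket is positive, we square it. Both sides are nonnegative, since $r-\rho\sin\phi>0$ on $\mathcal{D}^c$ and $\cos\phi\ge 0$. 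The required inequality $(r-\rho\sin\phi)^2>\cos^2\phi\,r(r-2\rho\sin\phi)$ expands to $(r\sin\phi-\rho)^2>0$. This holds strictly except on the curve $r\sin\phi=\rho$, where the derivative vanishes only at an isolated point. Isolated zeros of a nonnegative derivative still yield strict monotonicity.

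\textbf{Branch $\pi/2\le\phi\le\pi$.} Here $L=(2\pi-\alpha-\beta)\rho+\sqrt{s^2-\rho^2}$, so the sign of the $\beta$ term flips. In this range $\cos\phi\le 0$, so $\frac{ds}{d\phi}\ge 0$, and the $\alpha$ and tangent terms again combine to $\frac{\sqrt{s^2-\rho^2}}{s}\frac{ds}{d\phi}\ge 0$. We must also handle $\beta$ carefully, because the arccos formula in \eqref{eq:00017} defines $\beta\in[0,\pi]$, and its relation to the polar angle changes once $\cos\phi<0$. There $\sin\beta=|r\cos\phi|/s=-r\cos\phi/s$, which gives
\[
\frac{d\beta}{d\phi}=-\frac{r(r-\rho\sin\phi)}{s^2}.
\]
Then $-\rho\,\frac{d\beta}{d\phi}>0$, and every term in $\frac{dL}{d\phi}$ is nonnegative with at least one strictly positive.

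\textbf{Gluing and main obstacle.} Finally we check continuity of $L$ at $\phi=\pi/2$, where $\beta=\pi$ comes from $\cos\beta=(\rho-r)/s$ with $s=|r-\rho|$ and $r>2\rho$. At that point the two branch expressions agree: $(\beta-\alpha)\rho=(2\pi-\alpha-\beta)\rho$ exactly when $\beta=\pi$. With continuity at the junction, strict increase on each piece gives strict increase on $[0,\pi]$. We expect the main obstacle to be this careful branch bookkeeping for $\beta$ across $\phi=\pi/2$, including confirming that \eqref{eq:00017} is continuous there.
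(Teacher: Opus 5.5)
Your route is the same as the paper's: differentiate \eqref{eq:00017} on each branch and determine the sign of the result. Your bracket $\frac{r\rho}{s^2}\bigl[(r-\rho\sin\phi)-\cos\phi\sqrt{s^2-\rho^2}\bigr]$ coincides with the paper's $\frac{r\rho\cos\phi}{s^2}\bigl(\frac{s^2-\rho C}{A}-B\bigr)$ once you substitute $A=r|\cos\phi|$ and $s^2-\rho C=r(r-\rho\sin\phi)$. Your form has the small advantage of not dividing by $A$, which vanishes at $\phi=\pi/2$.

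However, the squaring step in the first branch is wrong as written. The correct identity is
\[
(r-\rho\sin\phi)^2-\cos^2\phi\,r(r-2\rho\sin\phi)=\sin^2\phi\,\bigl(r^2+\rho^2-2r\rho\sin\phi\bigr)=s^2\sin^2\phi ,
\]
which is the paper's $(s^2-\rho C)^2-A^2B^2=r^2s^2\sin^2\phi$ divided by $r^2$; it is not $(r\sin\phi-\rho)^2$. A check at $\phi=0$ shows your version fails there: both sides of your inequality equal $r^2$, so the bracket, and hence $dL/d\phi$, vanishes. The derivative is therefore zero exactly at the endpoint $\phi=0$, matching the paper's ``equality iff $\phi=0$''. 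On the curve $r\sin\phi=\rho$ it is strictly positive, the opposite of what you state. Strict monotonicity still follows, but this step must be redone with the correct identity.

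Second, your gluing at $\phi=\pi/2$ uses $r>2\rho$. That is exactly the case in which $\{\phi\in[0,\pi]:(r,\phi)\in\mathcal{D}^c\}$ is an interval. For $r\le 2\rho$ this set is $[0,\phi_0)\cup(\pi-\phi_0,\pi]$ with $\phi_0=\arcsin\frac{r}{2\rho}$, so $\phi=\pi/2$ is not in it. Increase on each piece then does not give increase across the gap, and you need one more comparison:
\begin{itemize}
\item As $\phi\to\phi_0^-$ or $\phi\to(\pi-\phi_0)^+$, one has $s\to\rho$, $\alpha\to 0$, $\sqrt{s^2-\rho^2}\to 0$ and $\cos\beta\to 1-\frac{r^2}{2\rho^2}$.
\item Hence $L$ tends to $\rho\beta_0$ on the left piece and to $\rho(2\pi-\beta_0)$ on the right piece, where $\beta_0=\arccos\bigl(1-\frac{r^2}{2\rho^2}\bigr)\le\pi$.
\item So every value on the left piece lies below $\rho\beta_0\le\rho(2\pi-\beta_0)$, which lies below every value on the right piece.
\end{itemize}
The paper's proof also skips this case. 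Since your argument explicitly relies on the junction at $\pi/2$, you should add it.
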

\begin{proof}
Consider $(r,\phi) \in \mathcal{D}^c$, where $r > 2 \rho \sin\phi$, and define
$A = \sqrt{s^2 - (\rho - r \sin\phi)^2}$, 
$B = \sqrt{s^2 - \rho^2}$, and
$C = \rho - r \sin\phi$.

\textbf{Case 1:} $\phi \in [0, \frac{\pi}{2}]$.  
Observe that $(s^2 - \rho C)^2 - A^2 B^2 = r^2 s^2 \sin^2\phi \ge 0$, which implies $\frac{s^2 - \rho C}{A} \ge B$. Therefore,
\begin{equation*}
\begin{aligned}
\frac{dL}{d\phi} 
&= \rho \left( \frac{d\beta}{d\phi} - \frac{d\alpha}{d\phi} \right) + \frac{d}{d\phi} \sqrt{s^2 - \rho^2} \\
&= \frac{r \rho \cos\phi (s^2 - \rho C)}{s^2 A} + \frac{r \rho^3 \cos\phi}{s^2 B} - \frac{r \rho \cos\phi}{B} \\
&= \frac{r \rho \cos\phi}{s^2} \left( \frac{s^2 - \rho C}{A} - B \right) \ge 0,
\end{aligned}
\end{equation*}
where the equality holds if and only if $\phi = 0$.

\textbf{Case 2:} $\phi \in [\frac{\pi}{2}, \pi]$.  
Observe that $s^2 - \rho C = r(r - \rho \sin\phi) > r \rho \sin\phi \ge 0$. Therefore,
\begin{equation*}
\begin{aligned}
\frac{dL}{d\phi} 
&= -\rho \left( \frac{d\alpha}{d\phi} + \frac{d\beta}{d\phi} \right) + \frac{d}{d\phi} \sqrt{s^2 - \rho^2} \\
&= \frac{r \rho^3 \cos\phi}{s^2 B} - \frac{r \rho \cos\phi (s^2 - \rho C)}{s^2 A} - \frac{r \rho \cos\phi}{B} \\
&= -\frac{r \rho \cos\phi}{s^2} \left( B + \frac{s^2 - \rho C}{A} \right) > 0.
\end{aligned}
\end{equation*}

In conclusion, for all $\phi \in [0, \pi]$, we have $\frac{dL}{d\phi} \ge 0$, with equality if and only if $\phi = 0$.  
Hence, the Dubins path length function $L(r,\phi)$ is strictly monotonically increasing with respect to $\phi$ over the interval $[0, \pi]$.
\end{proof}

\begin{remark}
     If $(r,\phi)\in \mathcal{D}$, then $L(r,\phi)$ is strictly decreasing with respect to $\phi \in [\arcsin\frac{r}{2\rho}, \pi-\arcsin\frac{r}{2\rho}]$,  and is strictly increasing with respect to $\phi \in [0, \arcsin\frac{r}{2\rho}]$ and $\phi \in [\pi-\arcsin\frac{r}{2\rho}, \pi]$. The variation of $L(r,\phi)$ with respect to $\phi$ is illustrated in Fig.~\ref{fig:0003}.
\end{remark}

\begin{figure}[!h]
    \centering
    \begin{subfigure}{0.23\textwidth}
        \centering
        \includegraphics[width=\linewidth]{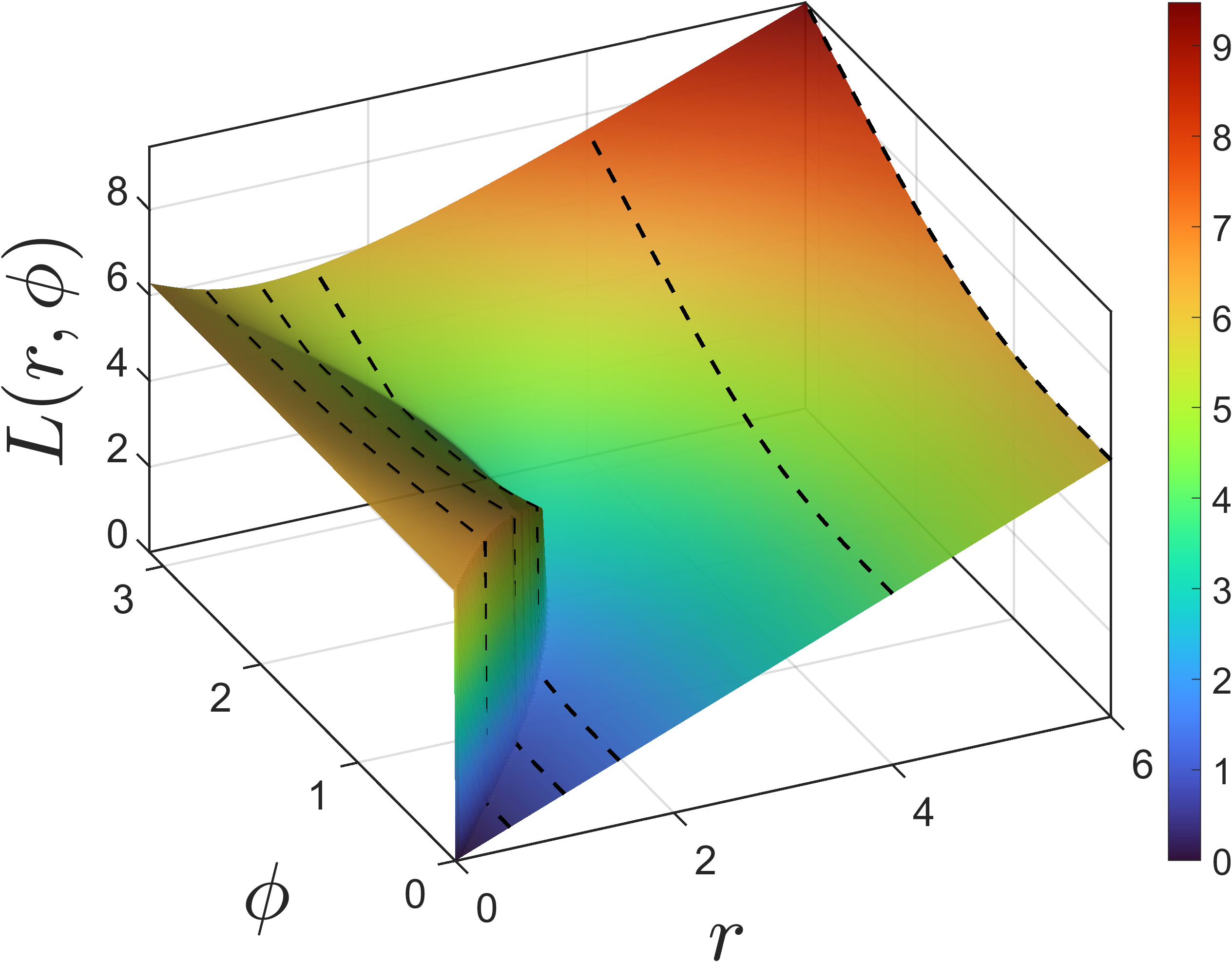}
        \caption{}
        \label{fig:subfig1}
    \end{subfigure}
    \hfill
    \begin{subfigure}{0.23\textwidth}
        \centering
        \includegraphics[width=\linewidth]{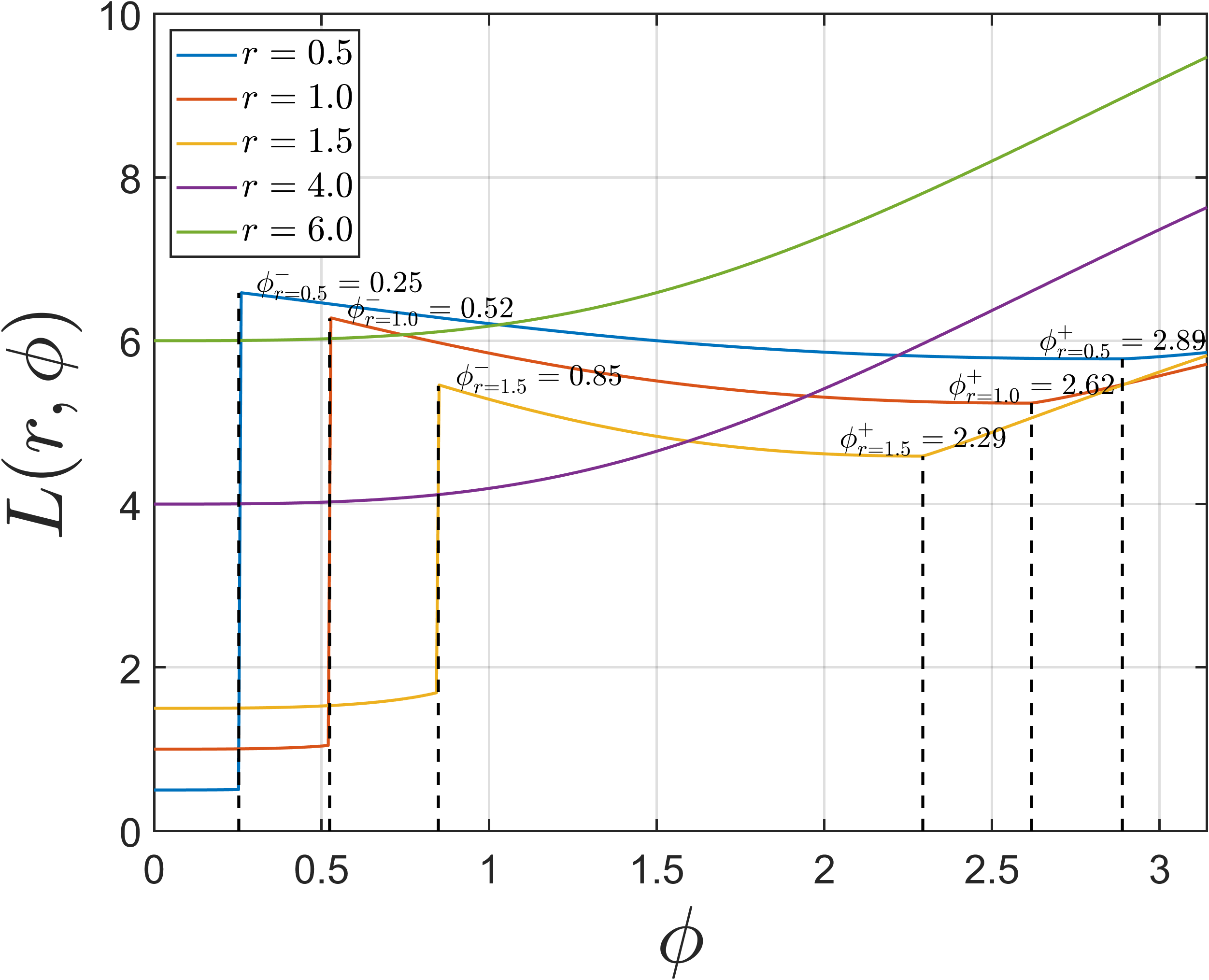}
        \caption{}
        \label{fig:subfig2}
    \end{subfigure}
    \caption{(a) The surface of $L(r,\phi)$ for $\rho=1$. The black dashed lines indicate the cross-sections at $r=0.5, 1, 1.5, 4, 6$. 
    (b) The corresponding variations of $L(r,\phi)$ with respect to $\phi$.}
    \label{fig:0003}
\end{figure}
\subsection{Max-Consensus Control Mechanism}

After introducing the virtual time variable $T$, our objective is to ensure that all robots achieve simultaneous arrival at the target under a unified virtual time scale. To this end, we employ a max-consensus mechanism to drive the virtual times of individual robots toward agreement, thereby achieving global synchronization. Prior work \cite{9143934} has investigated the use of max-consensus to address simultaneous arrival for multi-agent systems. However, the approach does not account for curvature constraints, and thus it is not readily applicable to nonholonomic mobile robots. Moreover, it cannot guarantee the optimal arrival time. The advantage of adopting the max-consensus mechanism is that even if some robots are initially very close to their targets, they will be regulated by the distributed control law to ``wait for" the ``slowest" robot in the team by moving more distances, and in this way, the distributed control law can ensure simultaneous arrival for all agents at the same final time.

Specifically, for the $i$-th robot, its desired virtual time is defined as
\begin{equation}\label{121212}
    T_i^d = \max_{j\in \mathcal{N}_i} T_j,
\end{equation}
and note that the maximum is attained over only the virtual time from the neighboring robots via communication. To ensure that the actual virtual time variable of robot $i$ is equal to the desired one (i.e., $T_i = T_i^d$), the control input $\omega_i$ must be designed to adjust the robot's state $(r_i,\phi_i)$ so that the discrepancy $|T_i^d-T_i|$ is minimized. We first consider a simple case: when $T_i = T_i^d$, the control law given in (\ref{eq:0006}) can be directly applied. For the case $T_i^d > T_i$, we discuss the following two scenarios:

\textbf{Case 1:} $\boldsymbol{\zeta}_i \in \mathcal{D}^c$. According to Theorem \ref{th:0001}, the virtual time variable $T_i$ is strictly increasing with respect to the bearing angle $\phi_i \in [0, \pi]$. Treating $T_i$ as a univariate function of $\phi_i$, the inverse function theorem guarantees the existence of a strictly monotone inverse function, which preserves the monotonicity of the original function. Therefore, the inverse function $T_i^{-1}(T_i^d)$ yields the desired bearing angle $\phi_i^d$. However, since $\phi_i$ is restricted to the interval $[0, \pi]$, if $T_i(\pi) < T_i^d$, no solution exists for the inverse function. In this case, we set the desired bearing angle to $\phi_i^d = \pi$. From (\ref{0003}), the corresponding desired heading is $\theta_i^d = \arg(\boldsymbol{p}_i - \boldsymbol{p}_i^d) - \phi_i^d$, and the heading error is defined as $\delta_i = \theta_i^d - \theta_i$. 
A saturated proportional controller is adopted to regulate the heading error, following \cite{doi:10.2514/1.G004074,doi:10.2514/1.G001719}. The control law is $\omega_i = \mathrm{Sat}_{a}^{b}(k_\theta \delta_i)$, where $k_\theta$ is the control gain and $a=-\bar{\omega}_i,\, b=\bar{\omega}_i$. The saturation function $\mathrm{Sat}_a^b:\mathbb{R}\to\mathbb{R}$ is defined as $\mathrm{Sat}_a^b(x)=x$ for $x\in[a,b]$, $\mathrm{Sat}_a^b(x)=a$ for $x<a$, and $\mathrm{Sat}_a^b(x)=b$ for $x>b$.

\textbf{Case 2:} $\boldsymbol{\zeta}_i \in \mathcal{D}$. Within the curvature-constrained region, the virtual time variable $T_i$ exhibits only local monotonicity, which is undesirable for control design. To prevent potential instability or discontinuous behavior in this region, the control law given in (\ref{eq:0006}) is adopted, aiming to drive the robot rapidly out of the curvature-constrained region and thereby restore system stability.

In summary, the distributed simultaneous-arrival control law can be expressed as
\begin{equation}
\omega_i =
\begin{cases}
\mathrm{Sat}_{-\bar{\omega}_i}^{\bar{\omega}_i} \big( k_\theta (\theta_i^d - \theta_i) \big), & \boldsymbol{\zeta}_i \in \mathcal{D}^c \text{ and } T_i \neq T_i^d, \\
\omega^*(r_i, \phi_i), & \text{otherwise}.
\end{cases}
\end{equation}

\subsection{Convergence Analysis}

Under the aforementioned hybrid control strategy, which combines the optimal Dubins-based control with the saturated proportional controller, we next show that all robots will  arrive at the target simultaneously if at least one robot succeeds in reaching its target.

\begin{theorem}\label{thm:002}
    \emph{For any distinct $i,j\in \mathbb{Z}_1^N$, there does not exist a time $t$ such that $\|\boldsymbol{p}_i(t) - \boldsymbol{p}_i^d\| = 0$ while $\|\boldsymbol{p}_j(t) - \boldsymbol{p}_j^d\|> 0$. In other words, when robot $i$ reaches its target, robot $j$ must have also arrived at its respective target for all  $j \ne i$.}
\end{theorem}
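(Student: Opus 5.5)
We argue by contradiction, working with the virtual times $T_k(t)=L_k(r_k(t),\phi_k(t))/v_k$ rather than with positions, because arrival is equivalent to $T_k=0$. Suppose some robot $i$ reaches its target at time $t_1$ while another robot is still away from its target. We may take $t_1$ to be the first instant at which any robot arrives. Then $T_i(t_1)=0$, and the set $\mathcal{J}=\{j: T_j(t_1)>0\}$ is nonempty. The graph $\mathcal{G}$ is connected, so some robot $i'\notin\mathcal{J}$, i.e.\ with $T_{i'}(t_1)=0$, has a neighbor $j\in\mathcal{J}$. Renaming, we may assume $i$ itself has a neighbor $j$ with $T_j(t_1)>0$. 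The goal is to show that robot $i$ could not have driven $T_i$ to zero while a neighbor's virtual time stays bounded away from zero.

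\textbf{Key monotonicity step.} Along the optimal law \eqref{eq:0006}, the virtual time obeys $\dot T=-1$. Under the proportional law, with the heading steered toward $\phi^d$ given by $T_i(\phi^d)=T_i^d$, the value $T_i$ is pushed toward $T_i^d$. We need a quantitative statement: whenever $\boldsymbol{\zeta}_i\in\mathcal{D}^c$ and $T_i<T_i^d$, the controller makes $\dot T_i\ge -1$, with $T_i$ increasing toward $T_i^d$. This uses Theorem~\ref{th:0001}, which says $\partial L/\partial\phi>0$, so a heading error $\delta_i$ of the appropriate sign increases $\phi_i$ and hence $T_i$. The second ingredient is what happens at small range. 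As $r_i\to0$ with $\phi_i\neq 0$, the state enters $\mathcal{D}$, because $2\rho_i\sin|\phi_i|>r_i$. The only way to reach $r_i=0$ is along the straight-line/optimal approach with $T_i\to0$ continuously under $\dot T_i=-1$. Near arrival, therefore, robot $i$ must have been using $\omega^*$ and thus had $T_i=T_i^d$, since otherwise it would be in $\mathcal{D}^c$ applying the saturated controller, or in $\mathcal{D}$. The $\mathcal{D}$ case needs separate treatment: there the path is of type $C$ or $CC$ and $T_i$ is bounded below in terms of $\phi_i$.

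\textbf{Closing the contradiction.} Let $M(t)=\max_k T_k(t)$. Each robot decreases at rate at most $1$ under $\omega^*$, so $\dot M\ge -1$, and $M$ is positive at $t_1$. If robot $i$ was eventually synchronized with its neighborhood maximum, $T_i=T_i^d=\max_{j\in\mathcal{N}_i}T_j$, on an interval $[t_0,t_1]$, then $T_i(t_1)=0$ forces $T_j(t_1)=0$ for all $j\in\mathcal{N}_i$. The reason is that each $T_j$ decreases at rate exactly $1$ when optimal and at rate no faster otherwise, so $T_j-T_i\le0$ is preserved. This contradicts $T_j(t_1)>0$. Propagating along the connected graph gives $T_k(t_1)=0$ for all $k$.

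\textbf{Main obstacle.} The hard step is justifying that robot $i$ cannot reach $r_i=0$ while $T_i<T_i^d$. Two regimes need care: robot $i$ may sit in $\mathcal{D}$ using $\omega^*$ with $T_i<T_i^d$, and the saturated controller may fail to track $\phi_i^d$ (including the $\phi_i^d=\pi$ saturation case). I would first show that under $\omega^*$ in $\mathcal{D}$ a $C$ or $CC$ arc leads into $\mathcal{D}^c$ before the target is reached, unless $\phi_i=0$. That leaves a terminal straight segment, which lies in $\mathcal{D}^c$, where the switching rule activates the proportional controller whenever $T_i\neq T_i^d$. A Lyapunov-type argument on $T_i^d-T_i$, using Theorem~\ref{th:0001}, would then show $T_i$ cannot reach $0$ before catching up to $T_i^d$. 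This step is where implicit assumptions about gains and sampling (the "mild conditions") may be needed.
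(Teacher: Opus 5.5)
\textbf{There is a genuine gap.} Your reduction matches the paper's: take the first arrival time $t_1$, and an arrived robot $i$ adjacent to some $j$ with $T_j(t_1)>0$. But everything after that rests on the step you yourself flag as the main obstacle, namely that robot $i$ cannot reach $r_i=0$ while $T_i<T_i^d$. That step is the entire content of the theorem, and it is not proved. What you offer in its place does not close it:
\begin{itemize}
\item ``The only way to reach $r_i=0$ is along the straight-line/optimal approach'' is asserted, not shown.
\item ``Robot $i$ must have been using $\omega^*$ and thus had $T_i=T_i^d$'' is a non sequitur. The switching law applies $\omega^*$ on all of $\mathcal{D}$ regardless of $T_i^d$, and excluding the saturated law right up to arrival is exactly what has to be proved.
\item Your plan for $\mathcal{D}$ fails as stated. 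By the chord--tangent relation, the terminal arc of any $C$ or $CC$ Dubins path satisfies $r_i=2\rho_i\sin|\phi_i|$ along its whole length. So it lies in $\partial\mathcal{D}\subset\mathcal{D}$ and does \emph{not} pass into $\mathcal{D}^c$ before the target. Whether $\omega^*$ actually rides such an arc depends on how that boundary is assigned in $C_\pm$, and you would have to check this directly.
\item Your closing rate comparison points the wrong way. $\dot T_i=-1$ and $\dot T_j\ge -1$ give $\frac{d}{dt}(T_j-T_i)\ge 0$, which does not preserve $T_j-T_i\le 0$. You do not need it anyway: while synchronized, $T_j\le T_i^d=T_i$ holds by definition, and $r_j\le v_jT_j$.
\end{itemize}

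The idea you are missing is the paper's: derive the contradiction in bearing-angle space rather than in $T$-space.
\begin{itemize}
\item Arrival along any bounded-curvature trajectory forces $\phi_i(t)\to 0$ as $t\to t_1$, because the line of sight aligns with the heading.
\item Meanwhile $T_i(t)\le t_1-t\to 0$, since the remaining actual path is admissible. Also $T_j\ge r_j/v_j$ stays bounded away from zero near $t_1$. Hence $T_i^d-T_i\ge T_j-T_i>0$ there.
\item By the strict monotonicity of Theorem~\ref{th:0001}, the commanded bearing $\phi_i^d=T_i^{-1}(T_i^d)$ then stays strictly above $\phi_i$. The controller therefore keeps driving the bearing away from zero, so $\phi_i$ cannot tend to $0$, which is a contradiction.
\end{itemize}
The paper's argument implicitly works only where $T_i^{-1}$ is defined, i.e.\ in $\mathcal{D}^c$, so your concern about the $\mathcal{D}$ branch is legitimate. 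But naming that concern does not substitute for the missing step.
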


\begin{proof}
    Suppose that at time $t_1>0$, at least one robot reaches its target, and let $\mathcal{I}\ne \emptyset$ denote the set of robots that reach their target, and the set of robots that have not reached their targets is denoted by $\mathcal{J} = \mathcal{I}^c$. We prove this by contradiction. Suppose that at $t=t_1$, there is at least one robot that has not reached its target; namely, $\mathcal{J}$ is non-empty. Since the communication graph $\mathcal{G}$ is connected, there must exist $i \in \mathcal{I}$ and $j \in \mathcal{J}$ such that $j \in \mathcal{N}_i$. Namely, robot $i$ reaches its target at time $t_1$ (i.e., $\|\boldsymbol{p}_i(t_1) - \boldsymbol{p}_i^d\| = 0$). Then we must have $\lim_{t\to t_1} \phi_i = 0$, since the bearing angle $\phi_i\to 0$ as the robot reaches the target. However, robot $j$ has not yet reached its target, so we have $T_i^d - T_i \geq T_j - T_i > 0$ by \eqref{121212}. This implies $\lim_{t\to t_1} \phi_i^d = \lim_{t\to t_1} T_i^{-1}(T_i^d) > \lim_{t\to t_1} T_i^{-1}(T_i)$ $=\lim_{t\to t_1} \phi_i$, indicating that the bearing angle of robot $i$ cannot reach zero as it approaches the target, a contradiction. 
\end{proof}

    \begin{figure}[!h]
    \centering
    \begin{subfigure}{0.23\textwidth}
        \centering
        \includegraphics[width=\linewidth]{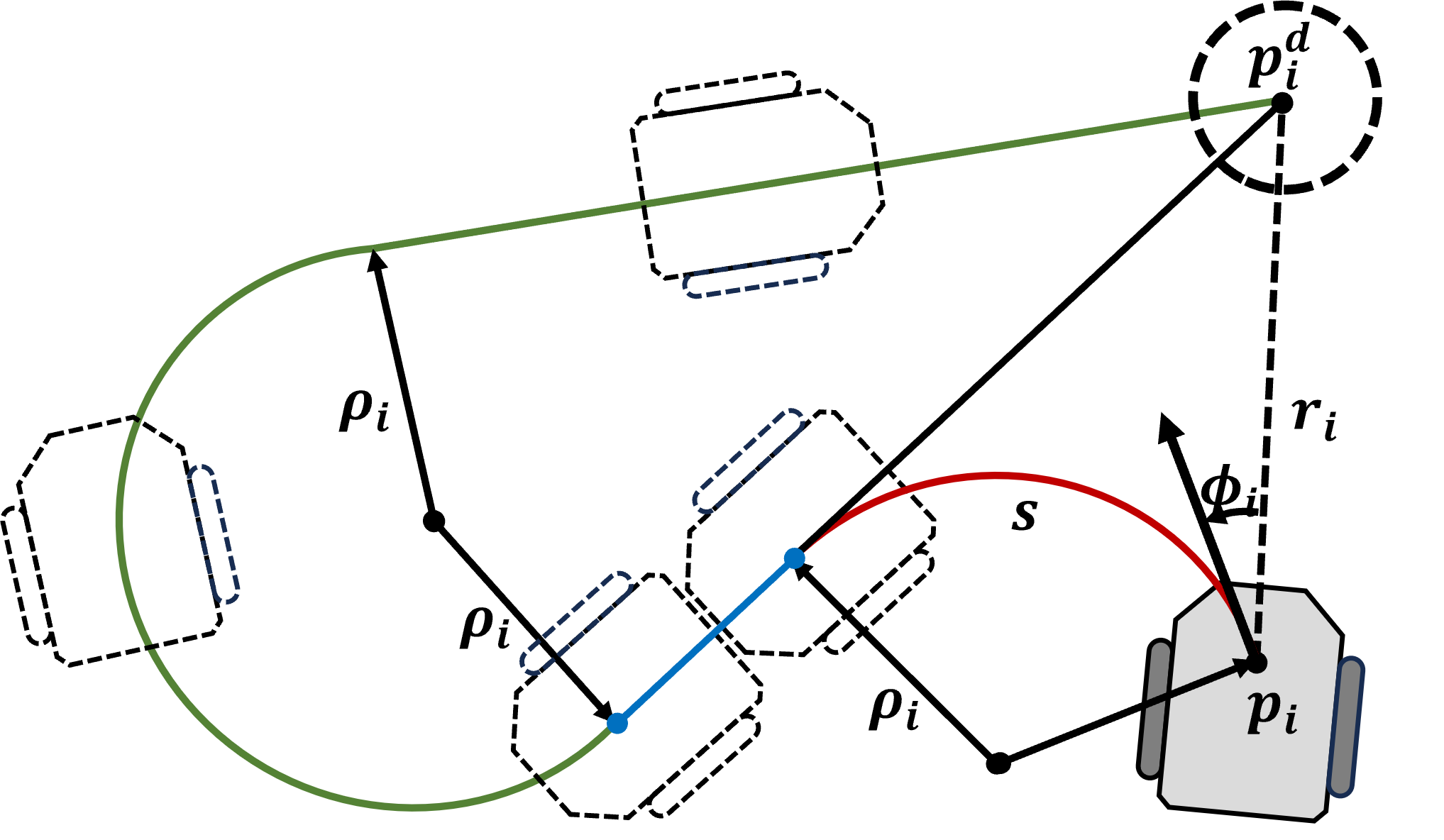}
        \caption{}
        \label{fig:subfig000}
    \end{subfigure}
    \begin{subfigure}{0.23\textwidth}
        \centering
        \includegraphics[width=\linewidth]{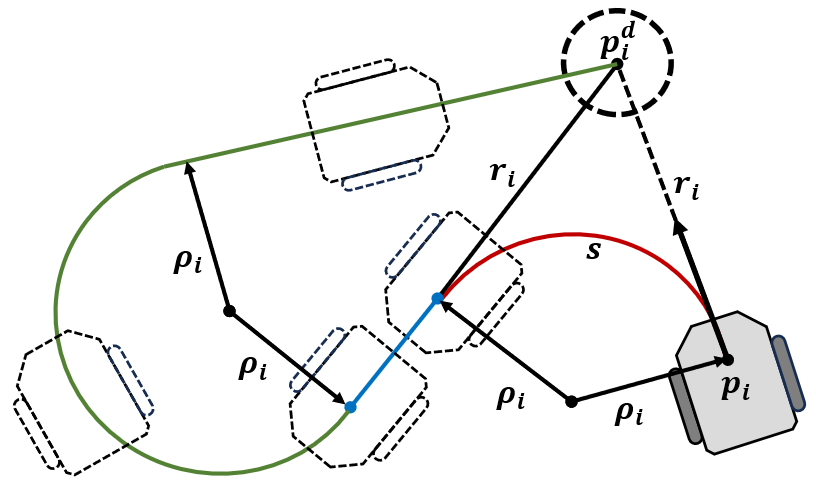}
        \caption{}
        \label{fig:subfig001}
    \end{subfigure}
     \begin{subfigure}{0.23\textwidth}
        \centering
        \includegraphics[width=\linewidth]{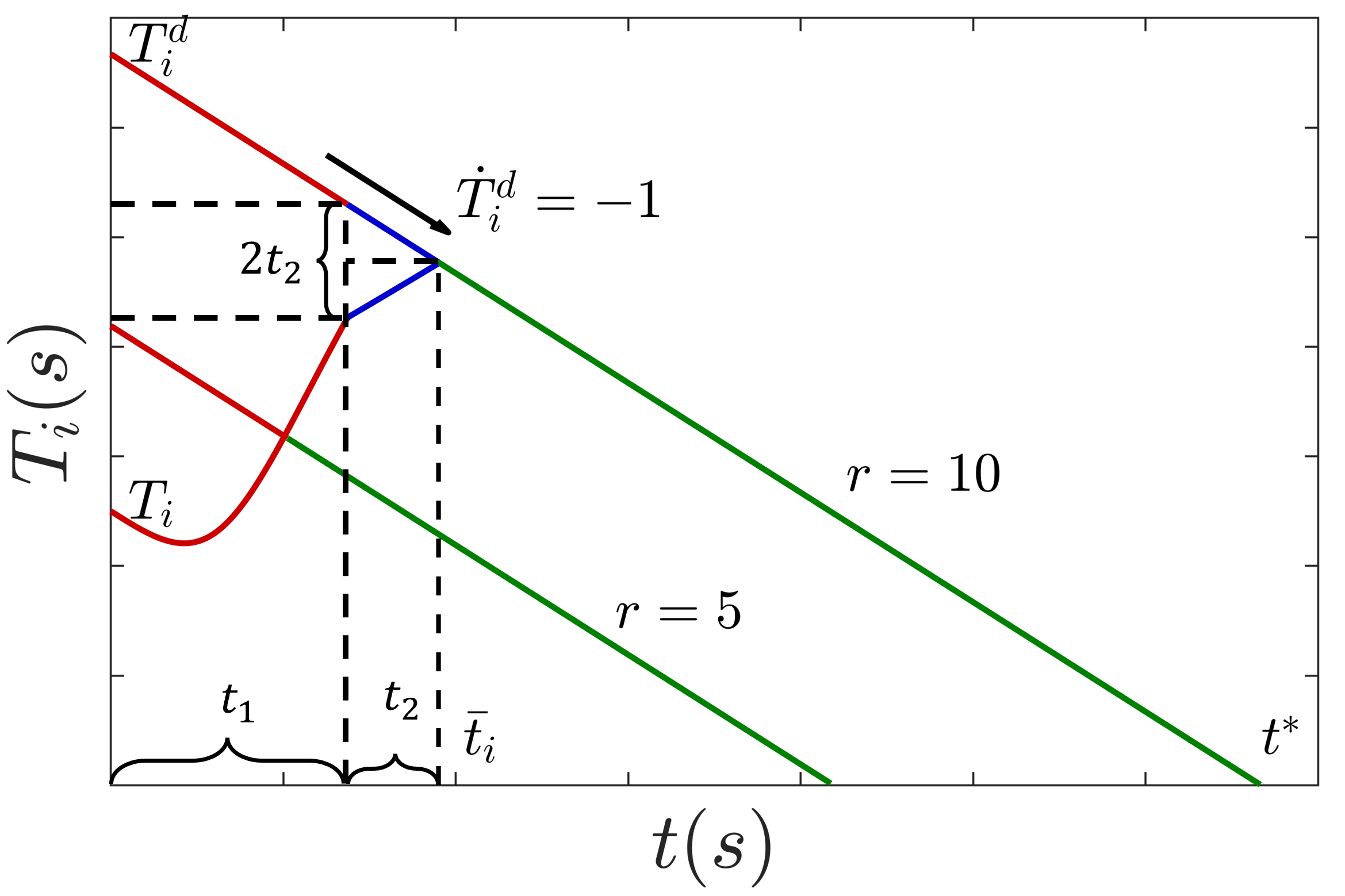}
        \caption{}
        \label{fig:subfig002}
    \end{subfigure}
    \begin{subfigure}{0.23\textwidth}
        \centering
        \includegraphics[width=\linewidth]{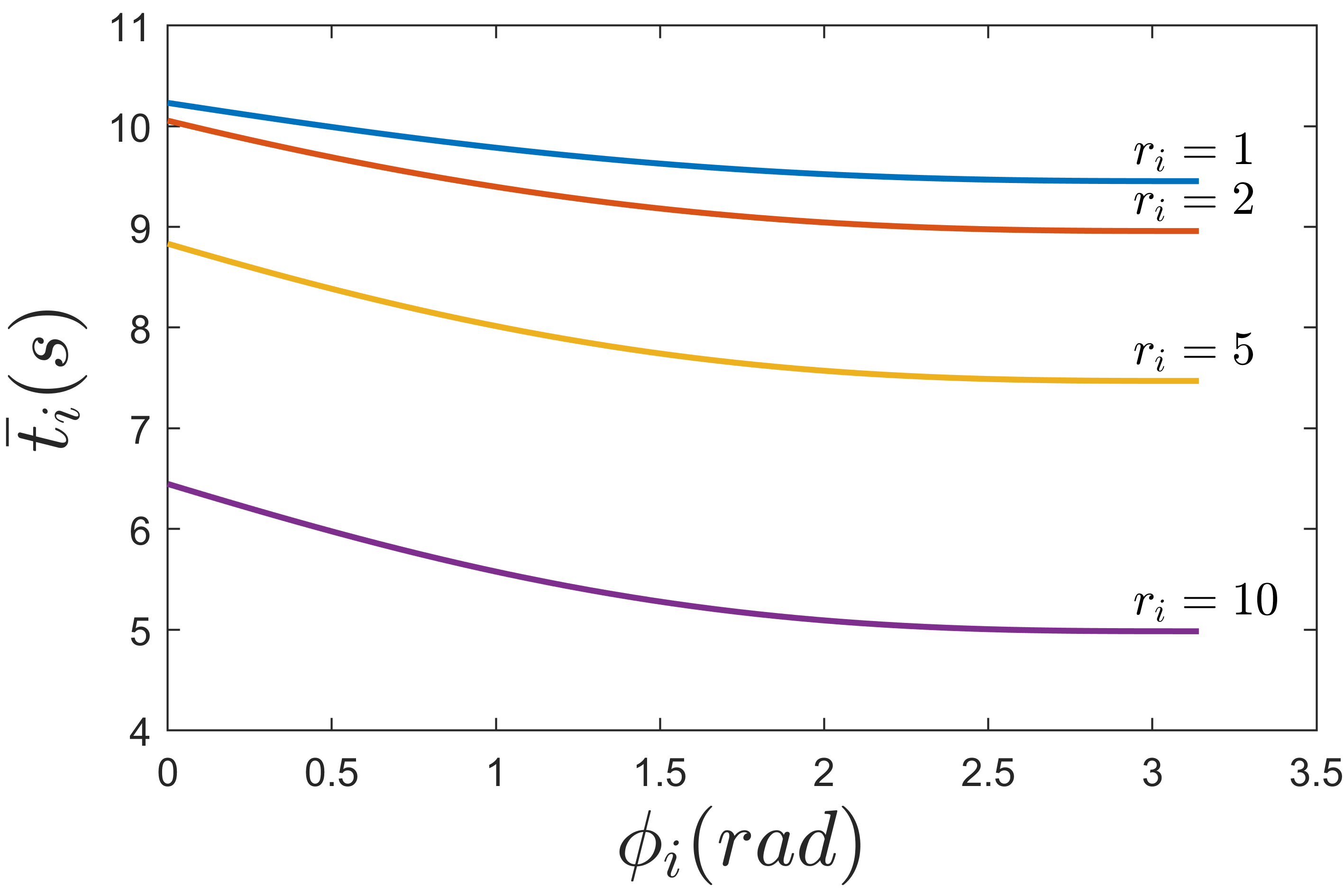}
        \caption{}
        \label{fig:subfig003}
    \end{subfigure}
    \caption{Illustration of the max-consensus process among robots. (a)(b) Ideal trajectories of robot $i$ when the initial state is $\phi_i=0$, where the red, blue, and green curves correspond to the curves of the same colors in (c); (c) Variation of the virtual times for robots $i$ and $j$; (d) Curves showing $\bar{t}_i$ versus the bearing angle $\phi_i$ for robot $i$ with $r_i=1,2,5,10$, while the initial state of robot $j$ is ($r_j=10, \phi_j=\pi$).}
    \label{fig:0004}
\end{figure}
The assumption that both $\mathcal{I}$ and $\mathcal{J}$ are non-empty is invalid. It follows that either $\mathcal{I}$ or $\mathcal{J}$ must be empty. In general, whether $\mathcal{I}=\emptyset$ or $\mathcal{J}=\emptyset$ depends on the initial states of the robots. We present a relatively relaxed theorem here.

\begin{theorem}\label{thm:003}
    \emph{Suppose robot $i^*$ satisfies the initial states $T_{i^*}=\max_{i\in \mathbb{Z}_1^N } T_i$. If the gain $k_\theta\to \infty$ and
    \[
        T_{i^*}\geq \sum_{i=1,i\neq i^*}^N \Bigg(\frac{T_i^d}{2}-\frac{r_i}{2v_i}-\frac{2\arctan{\frac{\rho_i}{r_i}}}{v_i}\Bigg) + 2\rho_{i^*},
    \]
    then all robots can simultaneously reach their target positions at $t^*=T_{i^*}$, and $t^*$ is the optimal arrival time.}
\end{theorem}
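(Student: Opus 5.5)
The proof splits into two parts. The first is optimality, which is a lower bound. The second is achievability, meaning every robot can in fact arrive exactly at $t^*=T_{i^*}$.

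\textbf{Optimality.} By definition, $T_{i^*}$ is the Dubins minimum time for robot $i^*$ from its initial state. No admissible control with constant speed and turning radius at least $\rho_{i^*}$ can bring robot $i^*$ to $\boldsymbol{p}_{i^*}^d$ before $T_{i^*}$. So every $t\in\mathcal{T}$ satisfies $t\ge T_{i^*}$, and the optimal arrival time is bounded below by $T_{i^*}$. It therefore suffices to show that the proposed law realizes arrival of all robots at exactly $T_{i^*}$.

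\textbf{Robot $i^*$.} I will first argue that robot $i^*$ always applies the optimal law $\omega^*$ of \eqref{eq:0006}. Since $T_{i^*}$ is the global maximum initially, $T_{i^*}^d\le T_{i^*}$, so robot $i^*$ is never in the proportional branch that increases time. Along the optimal law, $T_{i^*}$ decreases at unit rate, $\dot T_{i^*}=-1$.

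\textbf{Other robots $i\ne i^*$.} Here I will use max-consensus to show that each $T_i$ is driven up to the current maximum and then tracks it. With $k_\theta\to\infty$, the saturated proportional controller reduces to a bang-bang steer toward $\phi_i^d$. A robot with $T_i<T_i^d$ turns at maximal rate toward $\phi_i=\pi$, the "turn-away" maneuver. I will then bound the worst-case time $\bar t_i$ needed for robot $i$ to raise its virtual time to meet the decreasing maximum, as in Fig.~\ref{fig:0004}(c)--(d). In that transient, the maximum falls at rate $1$. While heading away from the target, $\dot T_i\approx +1$ once $\phi_i$ is near $\pi$. Turning from a generic bearing to $\pi$ costs at most a half circle, which bounds the transient. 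Solving the meeting condition $T_i^d - t = T_i(t)$ with these rates gives a catch-up time of roughly $\tfrac{T_i^d}{2}-\tfrac{r_i}{2v_i}-\tfrac{2}{v_i}\arctan\tfrac{\rho_i}{r_i}$. The arctan term is the arc length lost while tangentially leaving the circle of radius $\rho_i$ at distance $r_i$. After catch-up, $T_i=T_i^d$ and the robot switches to $\omega^*$, so $\dot T_i=-1$ and it stays synchronized.

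\textbf{Propagation and the main obstacle.} In the worst case, on a path graph, the catch-up delays accumulate additively as information propagates hop by hop from $i^*$. The summation condition in the statement then guarantees that all robots are synchronized before $T_{i^*}$ has decreased too far. The extra $2\rho_{i^*}$ margin keeps robot $i^*$ out of its curvature-constrained region $\mathcal{D}$, so that $T$ behaves monotonically via Theorem~\ref{th:0001}. Once all virtual times agree and decrease at unit rate, Theorem~\ref{thm:002} excludes premature arrival, and every robot reaches its target at $t^*=T_{i^*}$. The main obstacle I expect is this catch-up estimate: rigorously deriving the per-robot bound, including the arctan term, for the $k_\theta\to\infty$ limit. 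My first attempt would be an explicit comparison trajectory: a maximal turn to $\phi=\pi$, a straight run away, and a turn back. I would compute its virtual-time profile from \eqref{eq:00017} and then argue that the actual closed loop is no slower than this comparison.
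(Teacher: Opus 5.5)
Your outline follows the paper's proof step for step: the Dubins lower bound $t\ge T_{i^*}$, $\dot T_{i^*}=-1$ under $\omega^*$, a per-robot catch-up time $\bar t_i$, additive accumulation along a chain-shaped spanning tree rooted at $i^*$, and the $2\rho_{i^*}$ margin keeping $\boldsymbol{\zeta}_{i^*}\in\mathcal{D}^c$. The gap is the one step you flag yourself. The bound on $\bar t_i$ is only asserted ``roughly'', and the idea that produces it is missing: a reduction to the worst-case initial bearing. In the paper, with $k_\theta\to\infty$ the pre-consensus closed loop is \emph{exactly} a maximal-rate arc followed by a straight run directly away from the target. Hence $\bar t_i=t_1+t_2$, with $t_1$ the arc time and $t_2=\frac{T_i^d(t_1)-T_i(t_1)}{2}$. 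Inserting \eqref{eq:00017} gives $\bar t_i$ in closed form as a function of the initial $\phi_i$. A differentiation as in Theorem~\ref{th:0001} then gives $\partial\bar t_i/\partial\phi_i\le 0$, so the maximum is at $\phi_i=0$.

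Your own rate heuristic cannot yield the stated formula. With $\dot T_i\approx 1$ from the start and $T_i(0)=r_i/v_i$ at $\phi_i=0$, the meeting condition gives only $\frac{T_i^d}{2}-\frac{r_i}{2v_i}$, with no arctan term. The ``at most a half circle'' remark bounds the turning time, but the arctan correction comes from the exact value $T_i(t_1)$ at the end of the turn. Your comparison trajectory is also not the right object. There is nothing to compare against, since the $k_\theta\to\infty$ closed loop is the arc-plus-ray itself, and the ``turn back'' happens after consensus under $\omega^*$, so it does not contribute to $\bar t_i$.

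To close the step, evaluate at $\phi_i=0$. There the robot lies on a tangent from the target to its turning circle. The arc to $\phi_i=\pi$ subtends $\pi-2\arctan(\rho_i/r_i)$ and ends on the other tangent, again at distance $r_i$. At that point \eqref{eq:00017} gives $v_iT_i=r_i+\pi\rho_i+2\rho_i\arctan(\rho_i/r_i)$. Substituting yields $t_1+t_2=\frac{T_i^d}{2}-\frac{r_i}{2v_i}-\frac{2\rho_i\arctan(\rho_i/r_i)}{v_i}$. So the arctan term records that the arc falls short of a half turn by $2\rho_i\arctan(\rho_i/r_i)$, while $v_iT_i$ rises by $\pi\rho_i+2\rho_i\arctan(\rho_i/r_i)$. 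By units, the term in the statement is missing the factor $\rho_i$.

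One caveat if you want more rigor than the paper. On the outgoing ray, \eqref{eq:00017} gives $\dot T_i=\frac{r_i^2-\rho_i^2}{r_i^2+\rho_i^2}<1$. The gap therefore closes at a rate below $2$, and the formula for $t_2$ is an idealization that is accurate only for $r_i\gg\rho_i$. The paper makes the same idealization, so your $\dot T_i\approx+1$ matches its level of rigor, but a genuine upper bound on $\bar t_i$ would need this correction.
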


\begin{proof}
The purpose of taking $k_\theta\to \infty$ is to make the saturated proportional control closely approximate the optimal control, which requires $k_\theta\geq\frac{\bar{\omega}_i}{\delta_i}$ or $k_\theta\leq-\frac{\bar{\omega}_i}{\delta_i}$ at all times and facilitates analysis.

First, we show the following result: For robot $i$, if the neighbor with the largest virtual time is robot $j$, and $\boldsymbol{\zeta}_i \in \mathcal{D}^c$ is satisfied before reaching consensus with $j$, then we have
\begin{equation}\label{1515}
\bar{t}_i \leq \frac{T_i^d}{2}-\frac{r_i}{2v_i}-\frac{2\arctan\frac{\rho_i}{r_i}}{v_i},
\end{equation}
where $\bar{t}_i$ is the time when $T_i$ reaches $T^d_i$ under the condition that $T^d_i = T_{i^*}$. On one hand, robot $j$ satisfies $\dot{T}_j = -1$ under optimal control; on the other hand, since $T_j > T_i$ (if $T_j = T_i$, then $\bar{t}_i = 0$, and the result clearly holds), it follows that $\delta_i > 0$, and the heading rate control is $\omega_i = \lim_{k_\theta\to \infty}\mathrm{Sat}_{-\bar{\omega}_i}^{\bar{\omega}_i}\left( k_\theta \delta_i \right) = \pm \bar{\omega}_i$. By converting the time relation $T_i =\frac{L_i}{v_i}$ into a geometric form, $\bar{t}_i$ can be determined based on the path length before consensus is reached (i.e., the sum of the red circular arc length $s$ and the blue line segment length $l$ in Fig.~\ref{fig:subfig000}). Specifically, as shown in Fig.~\ref{fig:subfig002}, $\bar{t}_i = t_1 + t_2$, where $t_1 = \frac{s}{v_i}$, and $t_2 = \frac{l}{v_i} = \frac{T^d_i(t_1) - T_i(t_1)}{2}$. By combining this with equation \eqref{eq:00017}, an analytical expression for $\bar{t}_i$ in terms of the initial bearing angle $\phi_i$ is obtained. Using a differentiation method similar to that in Theorem \ref{th:0001}, we derive $\frac{\partial \bar{t}_i}{\partial \phi_i} \leq 0$, which implies that the mapping from the initial bearing angle $\phi_i$ to $\bar{t}_i$ is decreasing over $\phi_i \in [0, \pi]$ (see Fig.~\ref{fig:subfig003}). Therefore, the maximum value of $\bar{t}_i$ occurs at $\phi_i = 0$ (see Fig.~\ref{fig:subfig001}), yielding the inequality \eqref{1515}.

Then, since the communication graph $\mathcal{G}$ is connected, it admits a spanning tree. Let robot $i^*$ be chosen as the root of this spanning tree, serving as the reference node for propagating information through the network. Then, for each neighbor $j \in \mathcal{N}_{i^*}$, the time required to reach consensus satisfies $t_j\leq\bar{t}_j$, and the corresponding estimate can be propagated recursively along the spanning tree. In the worst-case scenario, $\mathcal{G}$ degenerates into a chain topology in which each robot communicates only with its immediate successor. In this case, the total time required for the network to achieve consensus is bounded by $t \leq \sum_{i=1,i\neq i^*}^N \bar{t}_i$. Furthermore, since $T_{i^*} - t \geq 2\rho_{i^*} \geq 2\rho_{i^*}|\sin{\phi_{i^*}}|$, robot $i^*$ remains in the curvature-feasible region $\boldsymbol{\zeta}_{i^*} \in \mathcal{D}^c$.

Finally, according to the optimality of Dubins paths, the time for robot $i^*$ to reach its target from the initial position is equal to $T_{i^*}$. Combining the above results, $t^*$ is the optimal arrival time as it satisfies $t^* \geq T_{i^*}$.
\end{proof}

\begin{remark}
    Our work primarily focuses on the distributed control algorithm based on the max-consensus protocol; therefore, collision avoidance among robots is not discussed in detail. Nevertheless, our method can be seamlessly integrated with existing internal collision avoidance strategies \cite{5746538,4543489}. In the third simulation, we implement a simple trigger-based cooperative collision avoidance mechanism to demonstrate the extensibility of the simultaneous arrival algorithm. Each robot is modeled as a disk of radius $r_c$. For any two distinct robots $i,j \in \mathbb{Z}_1^N$, collision avoidance is activated if the following conditions are satisfied simultaneously:  

1) $\|\boldsymbol{p}_i - \boldsymbol{p}_j\| < d_s,$ where $d_s > \sqrt{2\rho_i r_c + r_c^2} + \sqrt{2\rho_j r_c + r_c^2}$ ensures activation only when the robots are sufficiently close.  

2) $(\boldsymbol{p}_i - \boldsymbol{p}_j)^\top (\dot{\boldsymbol{p}}_i - \dot{\boldsymbol{p}}_j) < 0,$ indicating that the robots are moving toward each other along their relative velocity.  

3) Let the unit heading directions of robots $i$ and $j$ be $\boldsymbol{s}_i = [\cos\theta_i, \sin\theta_i]^\top$ and $\boldsymbol{s}_j = [\cos\theta_j, \sin\theta_j]^\top$, respectively. Define the safety circle of robot $j$ as $\mathcal{C}_j = \{\boldsymbol{q} \in \mathbb{R}^2 : \|\boldsymbol{q} - \boldsymbol{p}_j\| \le r_s\},$ with $r_s = 2r_c$. The anticipated trajectory of robot $i$ is given by $\boldsymbol{r}_i(\lambda) = \boldsymbol{p}_i + \lambda \boldsymbol{s}_i, \lambda \ge 0$. If there exists $\lambda^* > 0$ such that $\|\boldsymbol{r}_i(\lambda^*) - \boldsymbol{p}_j\| \le r_s$, the ray intersects the \emph{safety circle}. Similarly, for trajectory line segments $\boldsymbol{r}_i(\lambda) = \boldsymbol{p}_i + \lambda \boldsymbol{s}_i$ and $\boldsymbol{r}_j(\mu) = \boldsymbol{p}_j + \mu \boldsymbol{s}_j, \lambda,\mu \ge 0$, if there exist $(\lambda^*, \mu^*)$ such that $\boldsymbol{r}_i(\lambda^*) = \boldsymbol{r}_j(\mu^*)$, the trajectories intersect. If either ray–circle or the trajectory intersection holds, a potential collision is predicted.  

When all three conditions are satisfied, robots $i$ and $j$ adjust their headings in the agreed direction. For implementation simplicity, their angular velocities are set to the maximum values: $\omega_i = \bar{\omega}_i, \ \omega_j = \bar{\omega}_j.$

\end{remark}

\begin{figure}[!h] 
    \centering
    \begin{subfigure}{0.23\textwidth}
        \centering
        \includegraphics[width=\linewidth]{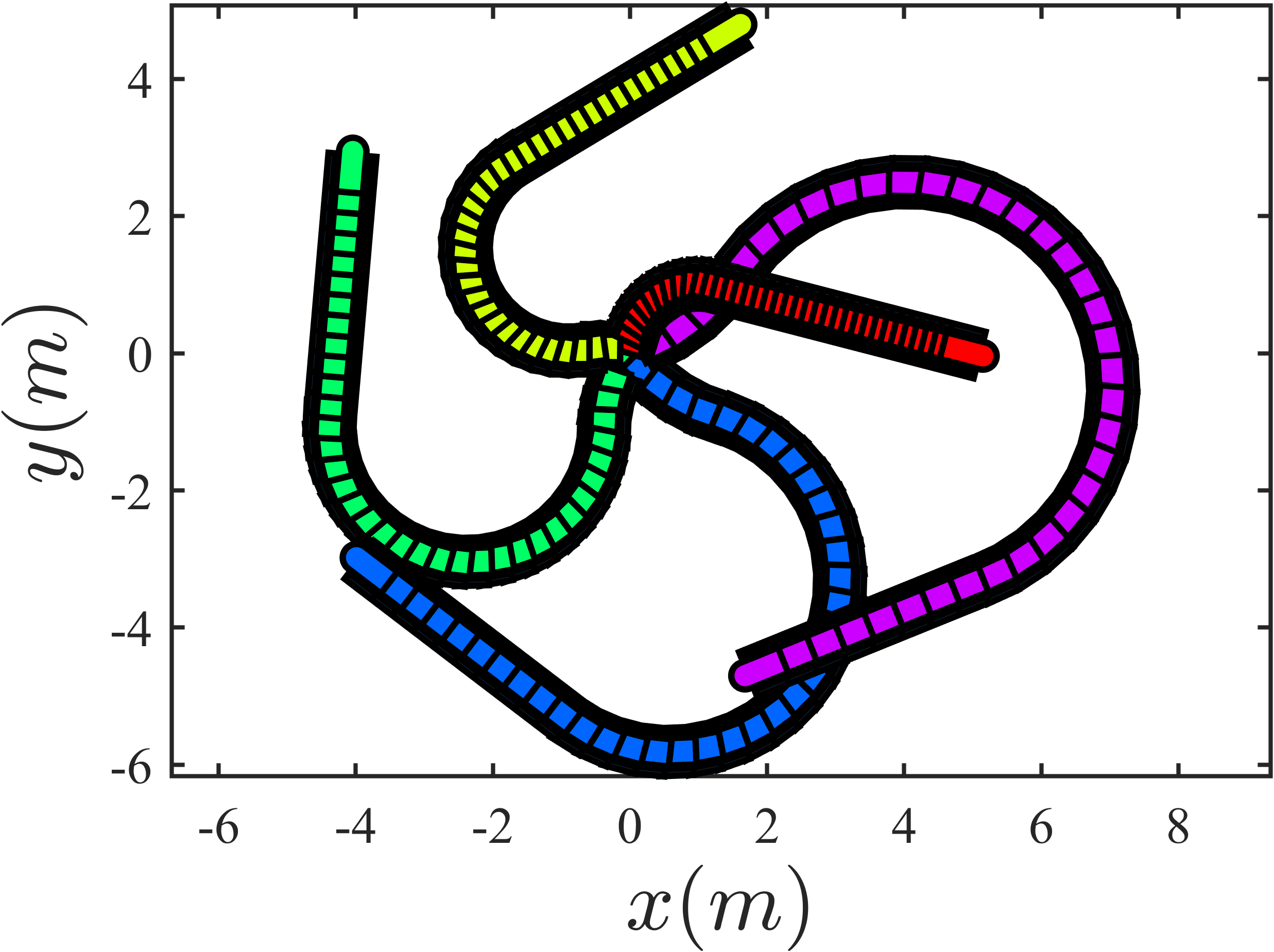}
        \label{fig:s1subfig1}
    \end{subfigure}
    \begin{subfigure}{0.23\textwidth}
        \centering
        \includegraphics[width=\linewidth]{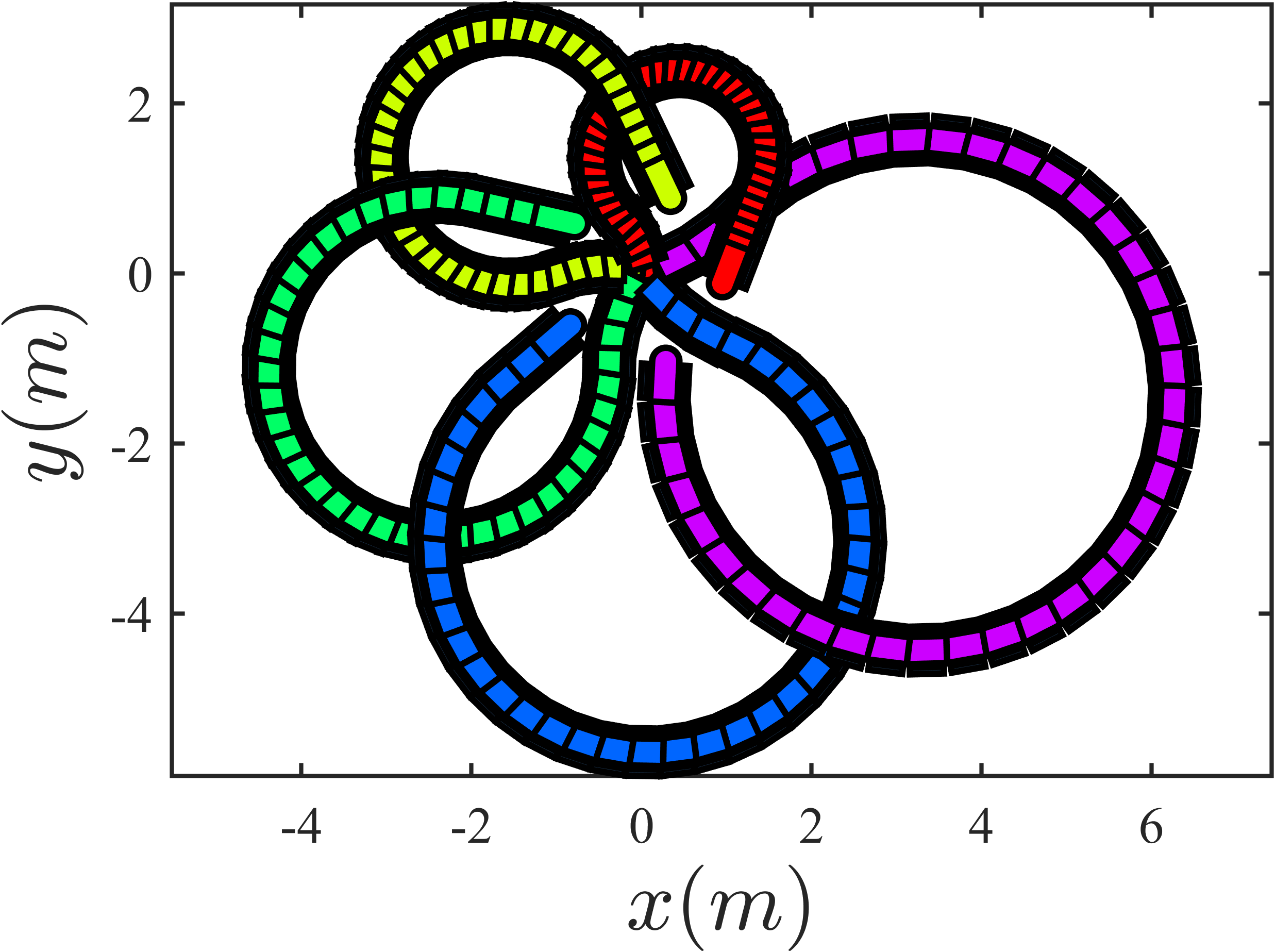}
        \label{fig:s1subfig2}
    \end{subfigure}
    \begin{subfigure}{0.23\textwidth}
        \centering
        \includegraphics[width=\linewidth]{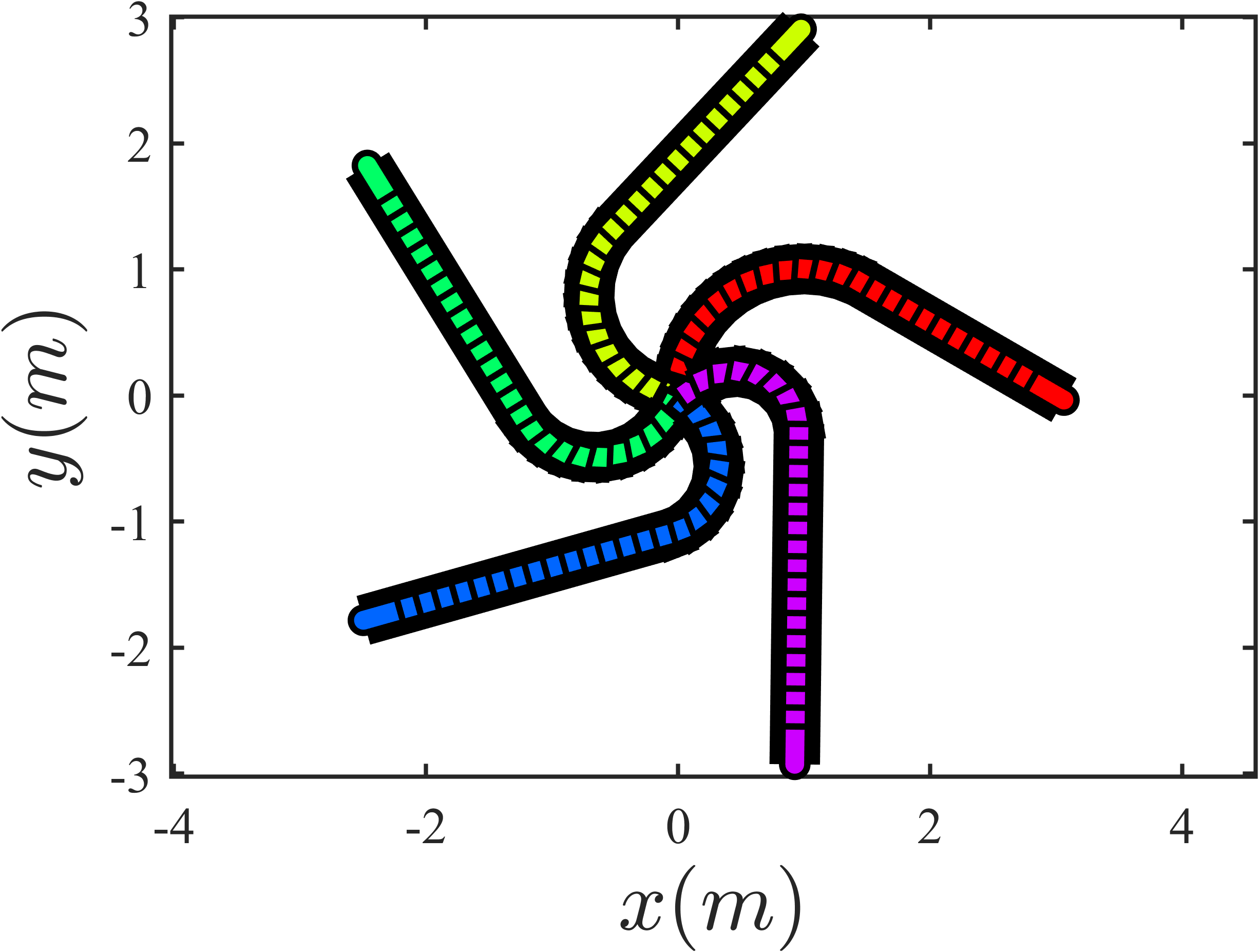}
        \label{fig:s1subfig2}
    \end{subfigure}
    \begin{subfigure}{0.23\textwidth}
        \centering
        \includegraphics[width=\linewidth]{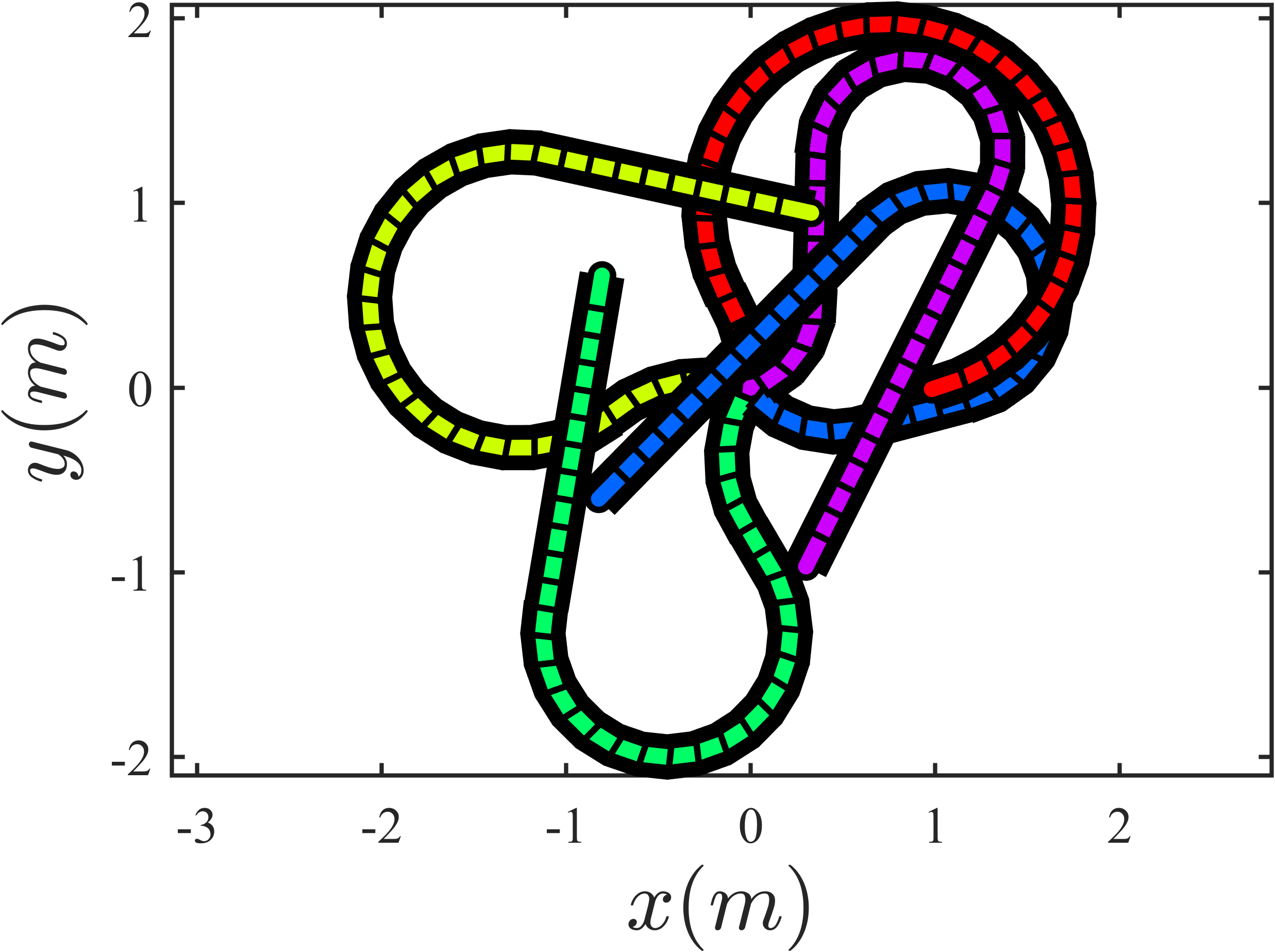}
        \label{f158165}
    \end{subfigure}
\caption{The first simulation results. Initialize $N=5$ robots at the origin. The initial heading angles $\theta_i$ are set by a counterclockwise rotation of $\frac{\pi}{2}$ applied to the vector pointing from the robot to its target: 
$[\cos\theta_i, \sin\theta_i]^\top = E \frac{\boldsymbol{p}^d_i - \boldsymbol{p}_i}{\|\boldsymbol{p}^d_i - \boldsymbol{p}_i\|},$ 
where $E$ is a rotation matrix. Target positions $\boldsymbol{p}^d_i$ are uniformly distributed along a circle of radius $d$ and 
$\boldsymbol{p}^d_i = [d \cos(\psi i),d \sin(\psi i)]^\top, 
\psi = \frac{2\pi}{N+1}, i \in \mathbb{Z}_1^5$. Four experimental groups are conducted with circle radius $d = 5m, 1m, 3m, 1m$, respectively. The control gain is $k_\theta = 100$ and the arrival threshold is $\varepsilon = 0.01m$, i.e., the simulation terminated when $\|\boldsymbol{p}_i(t^*)-\boldsymbol{p}^d_i\|\leq \varepsilon$ for all robots. In the first two groups, the linear velocity is set as $v_i = 1 + 0.5(i-1)m/s$ and the maximum angular velocity as $\bar{\omega}_i = 1rad/s$. In the last two groups, the linear velocity is $v_i = 1m/s$ and the maximum angular velocity is $\bar{\omega}_i = 1 + 0.25(i-1)rad/s$ for all $i \in \mathbb{Z}_1^5$.}
\label{fig:0005}   
\end{figure}

\section{Simulations And Experiments}
\subsection{Simulations}
We adopt a ring graph as the communication topology, in which each Dubins-car-modelled robot only communicates with its two immediate neighbors. This topology ensures very low communication and computational overhead. In the first simulation, we set $N=5$ robots to reach their respective target positions simultaneously. As shown in Fig.~\ref{fig:0005} and Fig.~\ref{fig:0006}, regardless of the initial states, all robots successfully achieved consensus on the virtual time variables and arrived at the target points simultaneously. In the second simulation, we increased the number of robots to $N=50$, requiring them to reach the target point simultaneously. As illustrated in Fig.~\ref{fig:0007}, all robots are still able to arrive at the target point at the same time. In the third simulation, we used $N=5$ robots to illustrate the effectiveness and safety of the proposed algorithm when integrated with an internal collision avoidance mechanism. As shown in Fig.~\ref{fig:0008}, the robots successfully achieve simultaneous arrival while maintaining collision-free trajectories.

\begin{figure}[!htbp] 
    \centering
    \begin{subfigure}{0.115\textwidth}
        \centering
        \includegraphics[width=\linewidth]{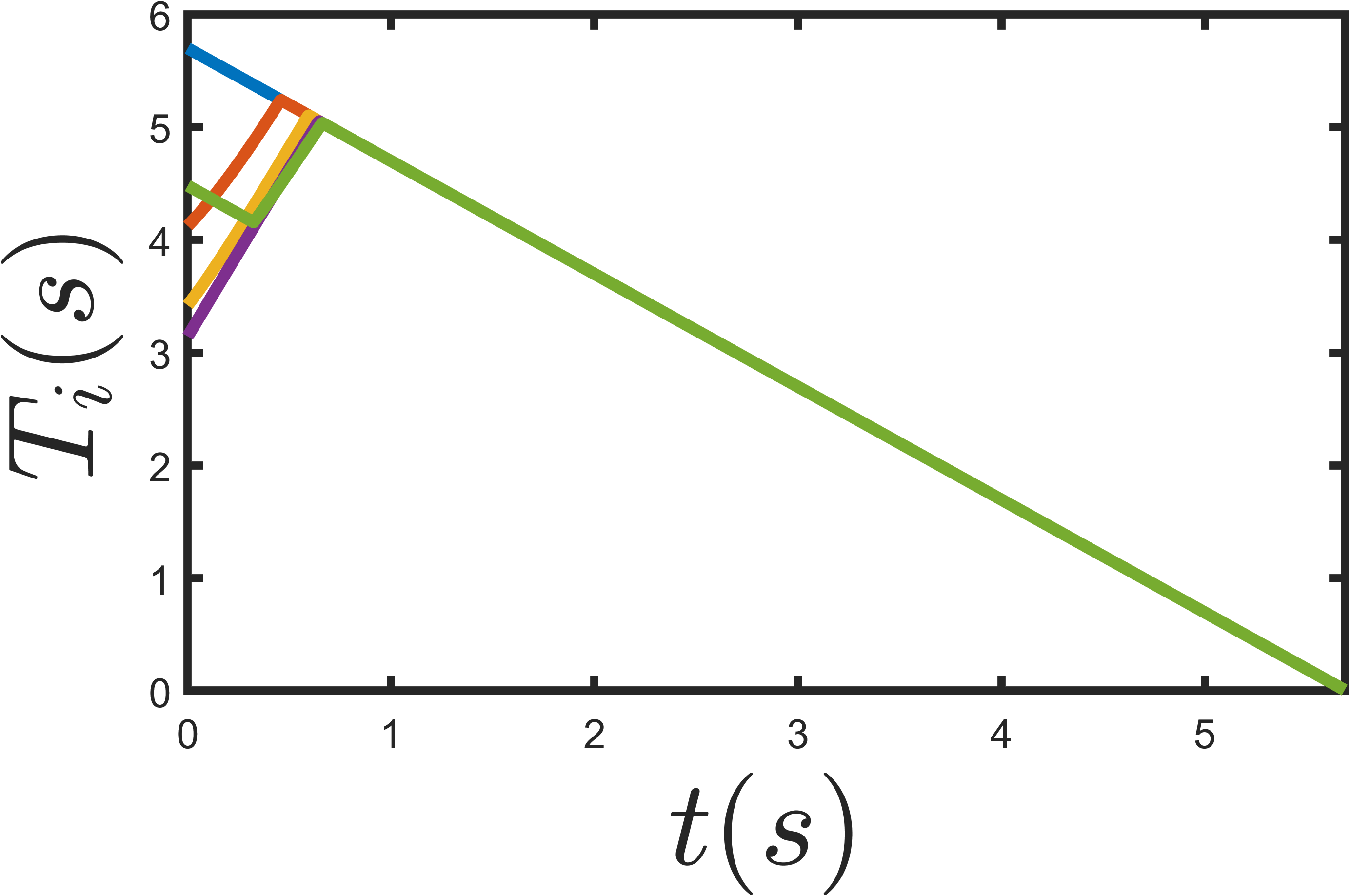}
        \label{fig:s1subfig1}
    \end{subfigure}
    \begin{subfigure}{0.115\textwidth}
        \centering
        \includegraphics[width=\linewidth]{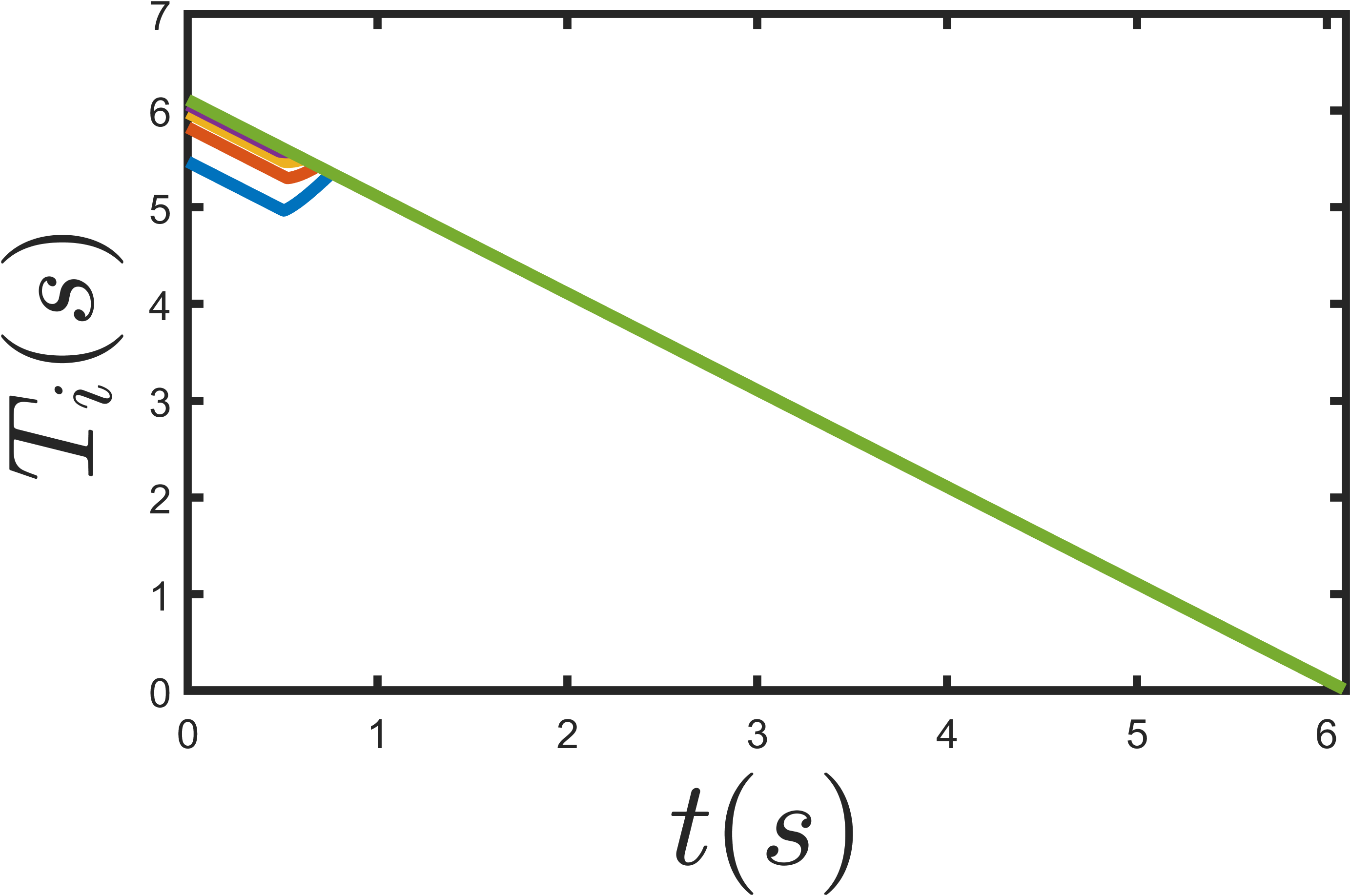}
        \label{fig:s1subfig2}
    \end{subfigure}
    \begin{subfigure}{0.115\textwidth}
        \centering
        \includegraphics[width=\linewidth]{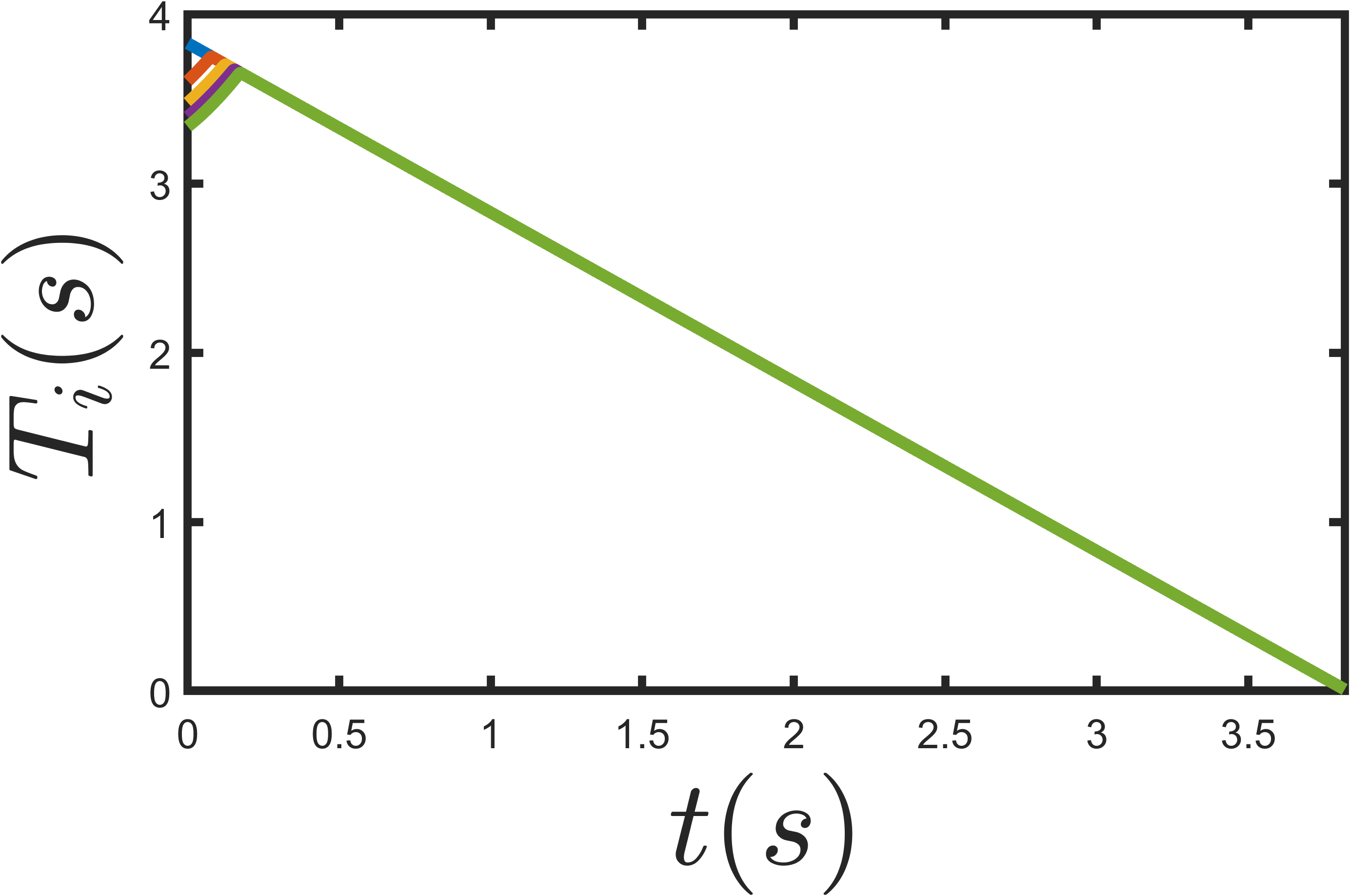}
        \label{fig:s1subfig2}
    \end{subfigure}
    \begin{subfigure}{0.115\textwidth}
        \centering
        \includegraphics[width=\linewidth]{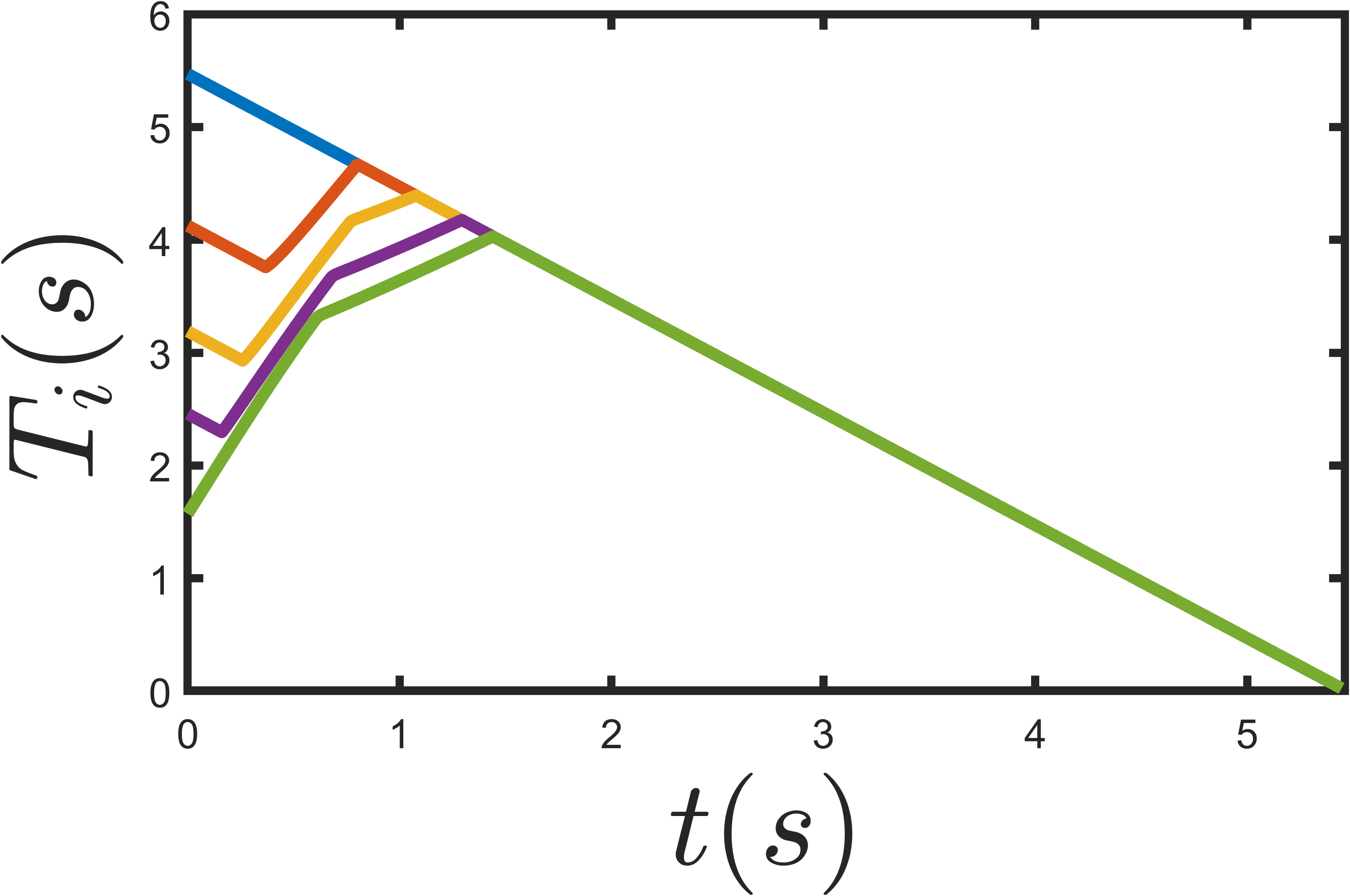}
        \label{fig:s1subfig2}
    \end{subfigure}
    
    \begin{subfigure}{0.115\textwidth}
        \centering
        \includegraphics[width=\linewidth]{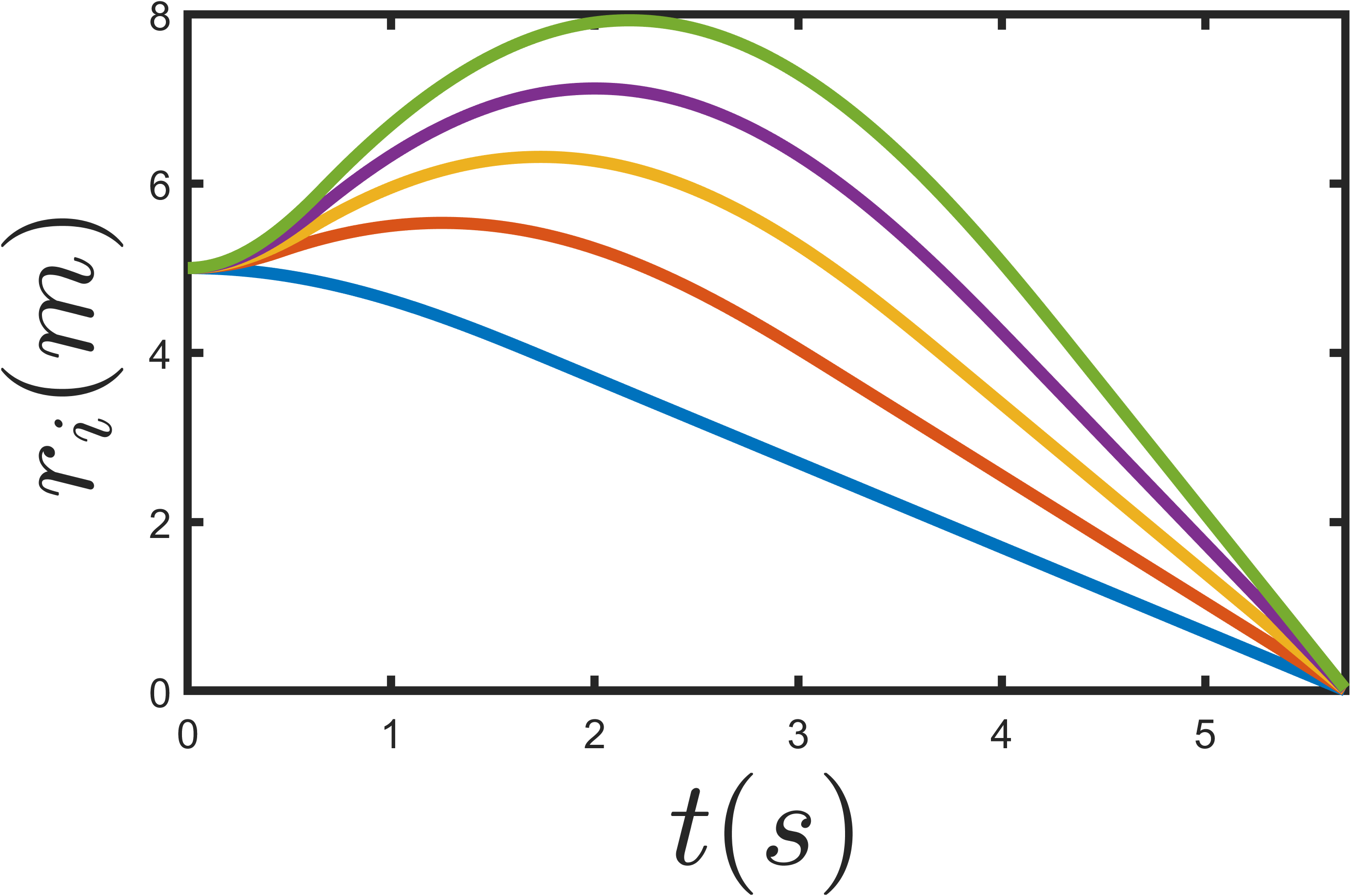}
        \label{fig:s1subfig2}
    \end{subfigure}
    \begin{subfigure}{0.115\textwidth}
        \centering
        \includegraphics[width=\linewidth]{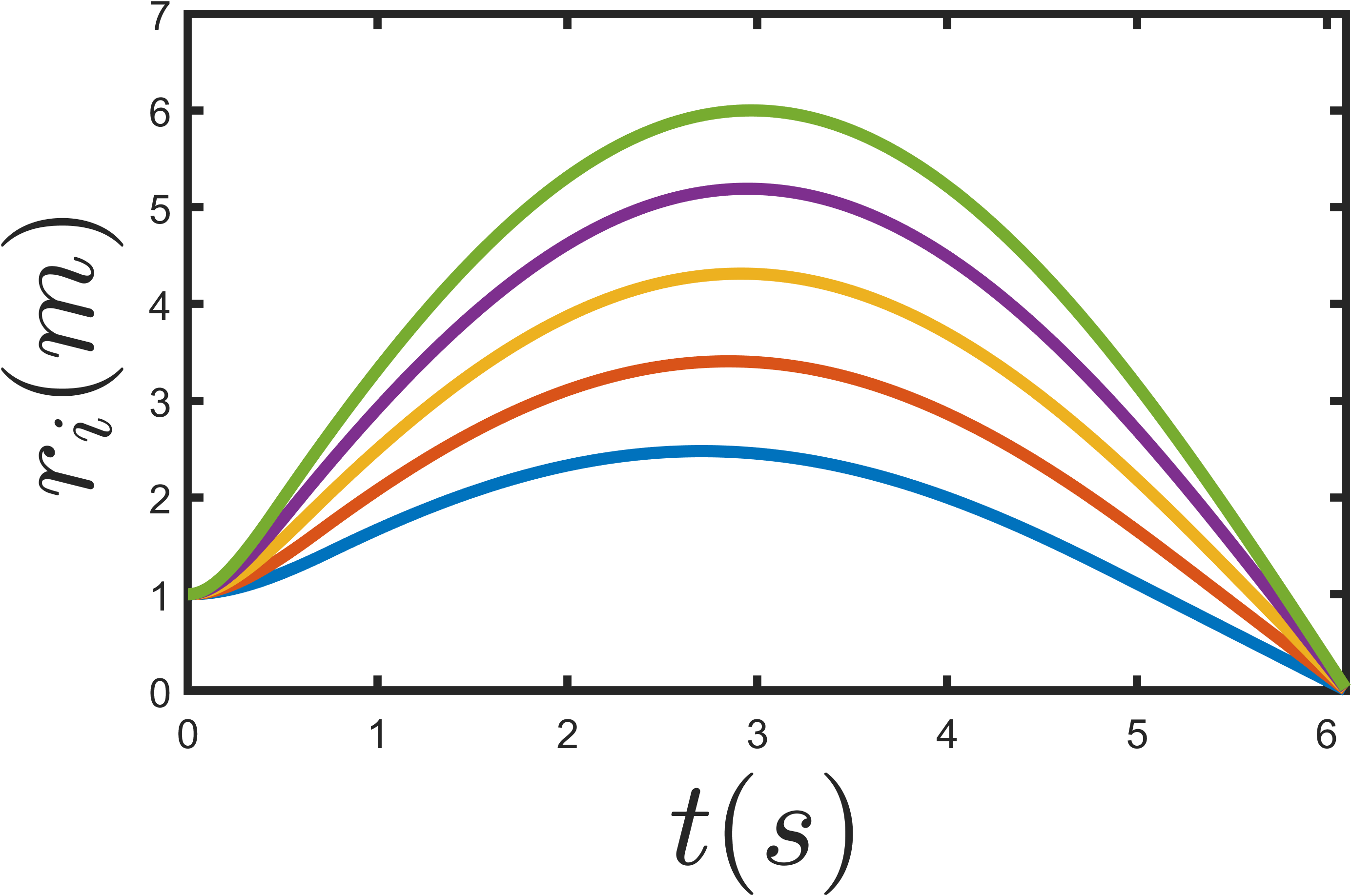}
        \label{15516}
    \end{subfigure}
    \begin{subfigure}{0.115\textwidth}
        \centering
        \includegraphics[width=\linewidth]{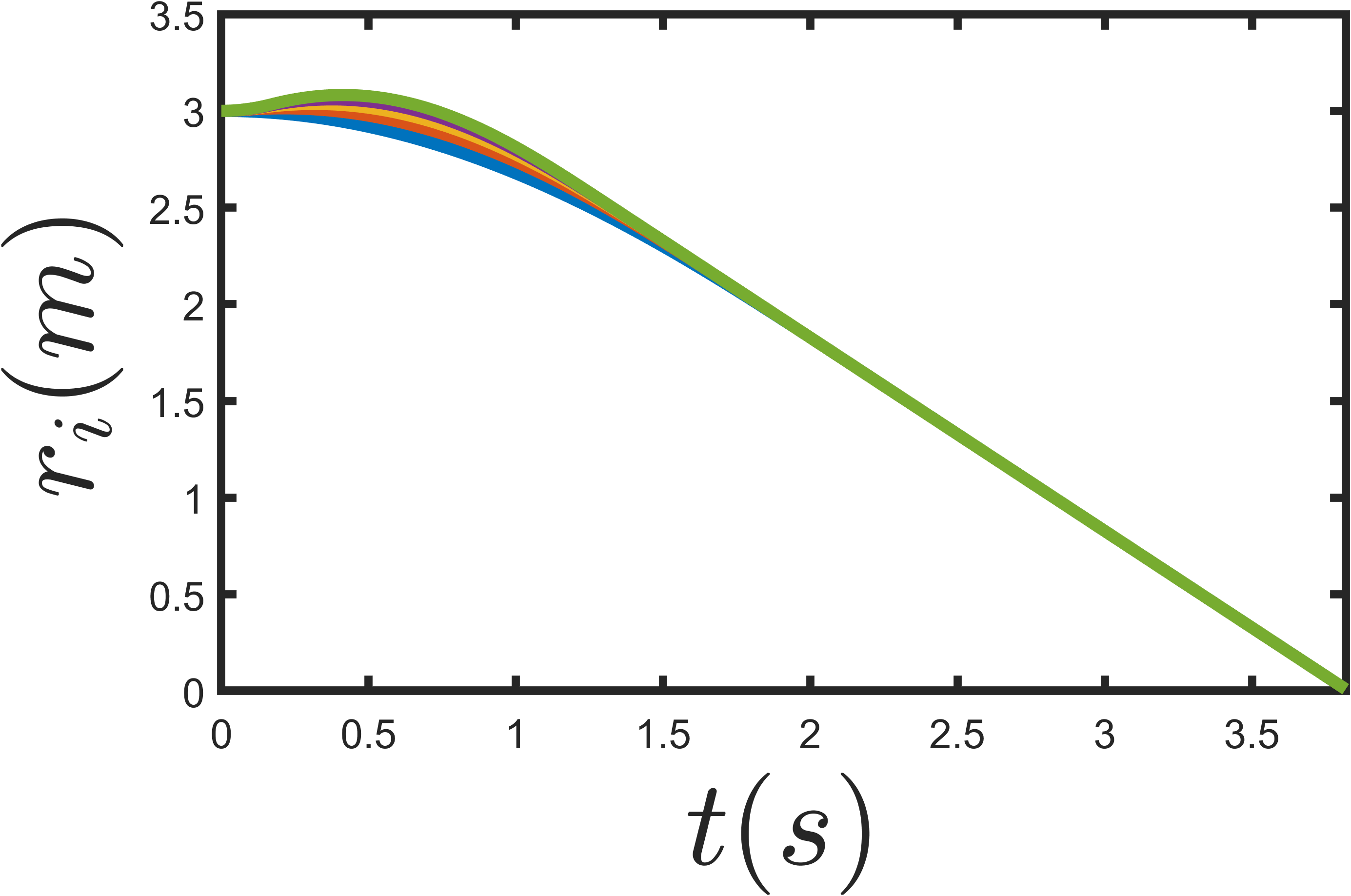}
        \label{fig:s1subfig2}
    \end{subfigure}
    \begin{subfigure}{0.115\textwidth}
        \centering
        \includegraphics[width=\linewidth]{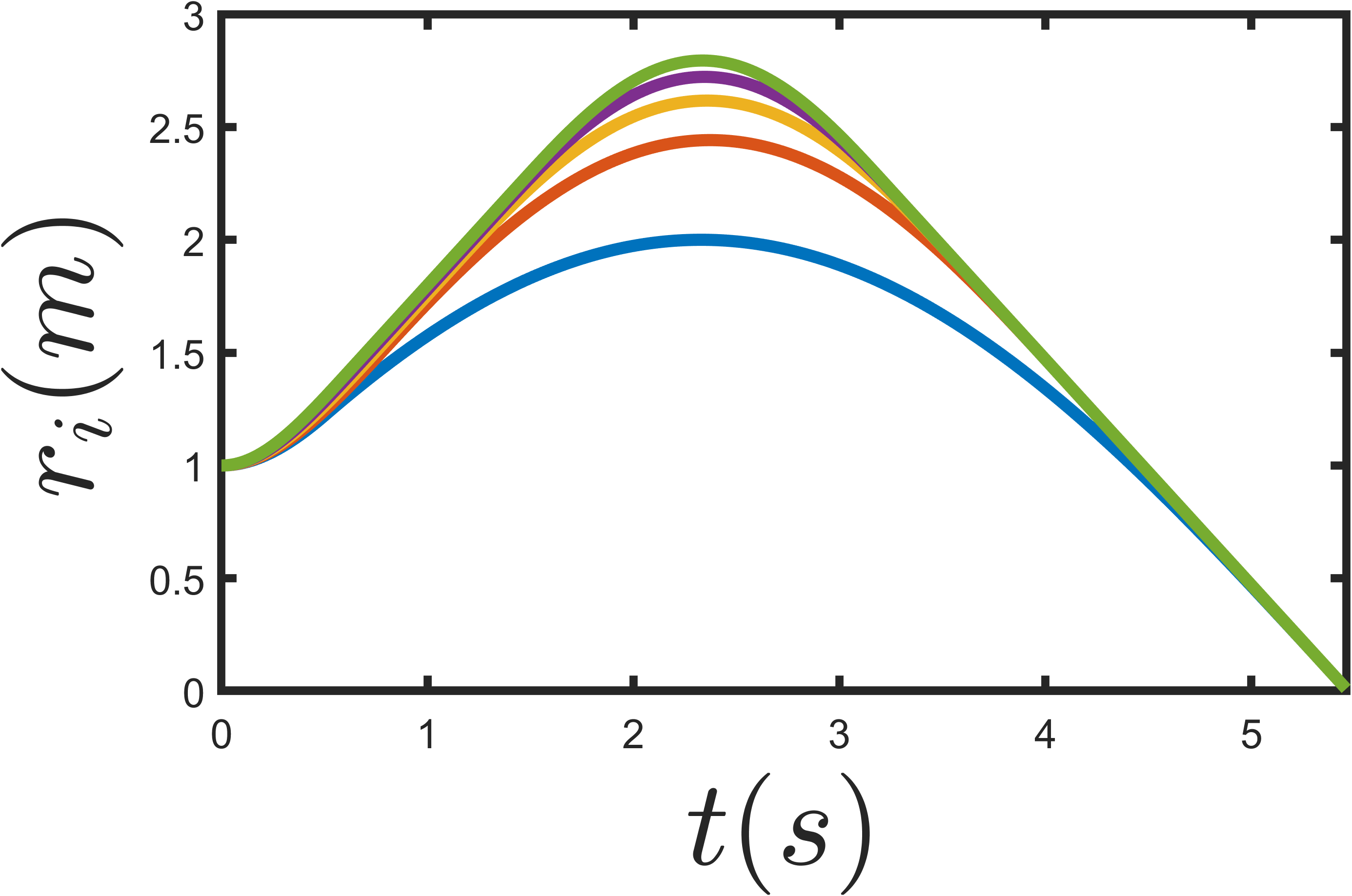}
        \label{fig:s1subfig2}
    \end{subfigure}
    
    \begin{subfigure}{0.115\textwidth}
        \centering
        \includegraphics[width=\linewidth]{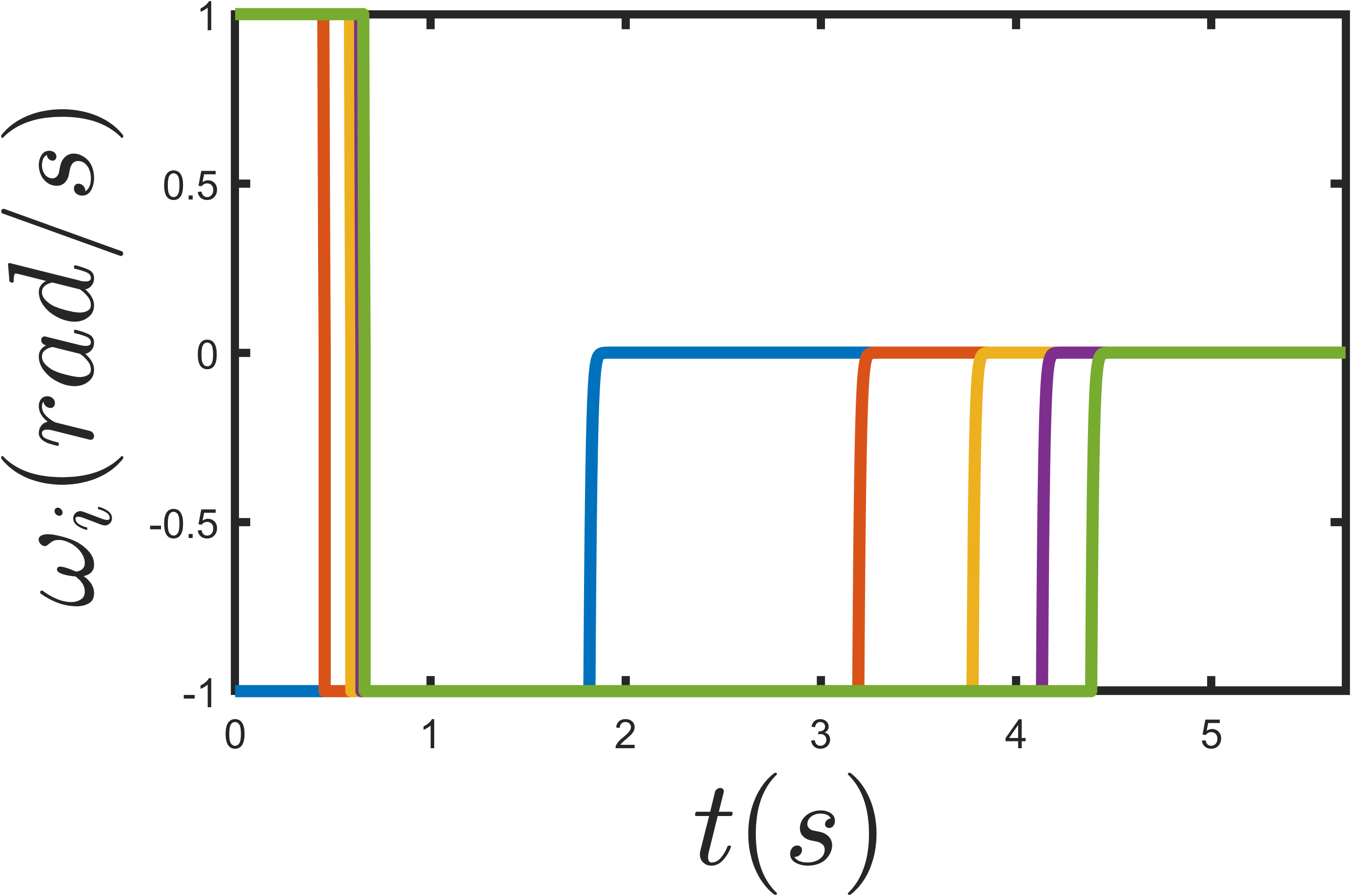}
        \label{fig:s1subfig2}
    \end{subfigure}
    \begin{subfigure}{0.115\textwidth}
        \centering
        \includegraphics[width=\linewidth]{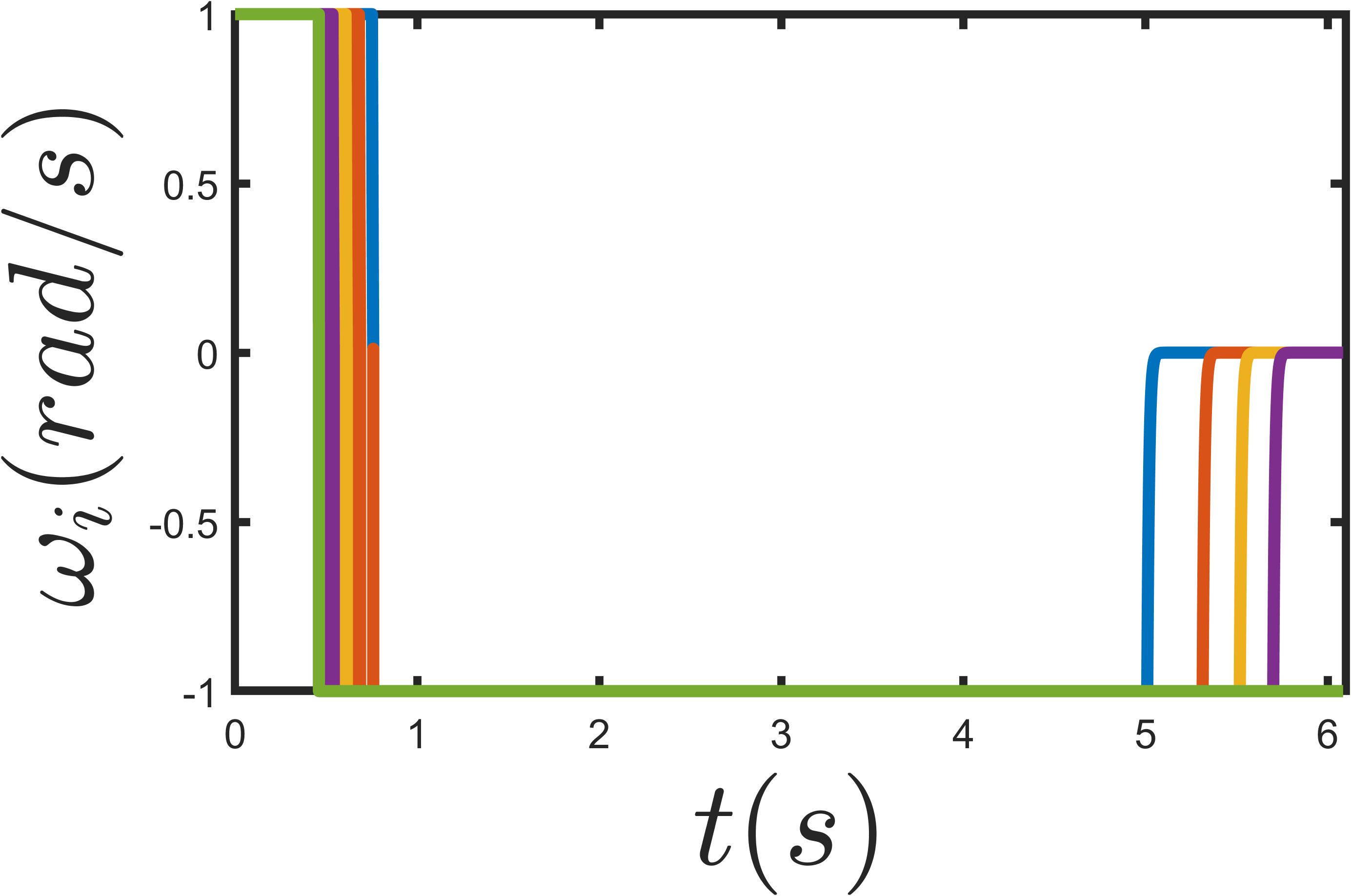}
        \label{fig:s1subfig2}
    \end{subfigure}
    \begin{subfigure}{0.115\textwidth}
        \centering
        \includegraphics[width=\linewidth]{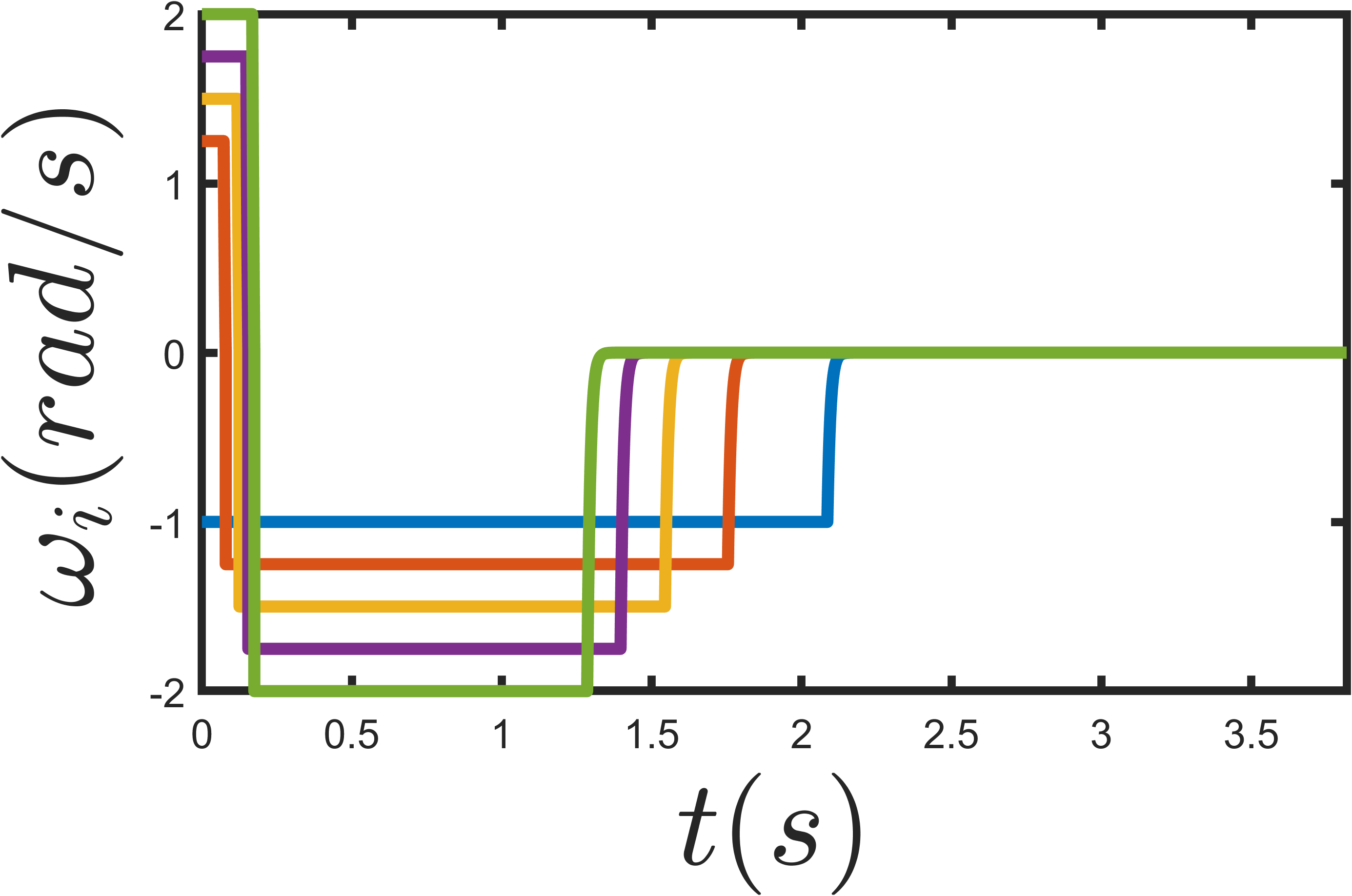}
        \label{fig:s1subfig2}
    \end{subfigure}
    \begin{subfigure}{0.115\textwidth}
        \centering
        \includegraphics[width=\linewidth]{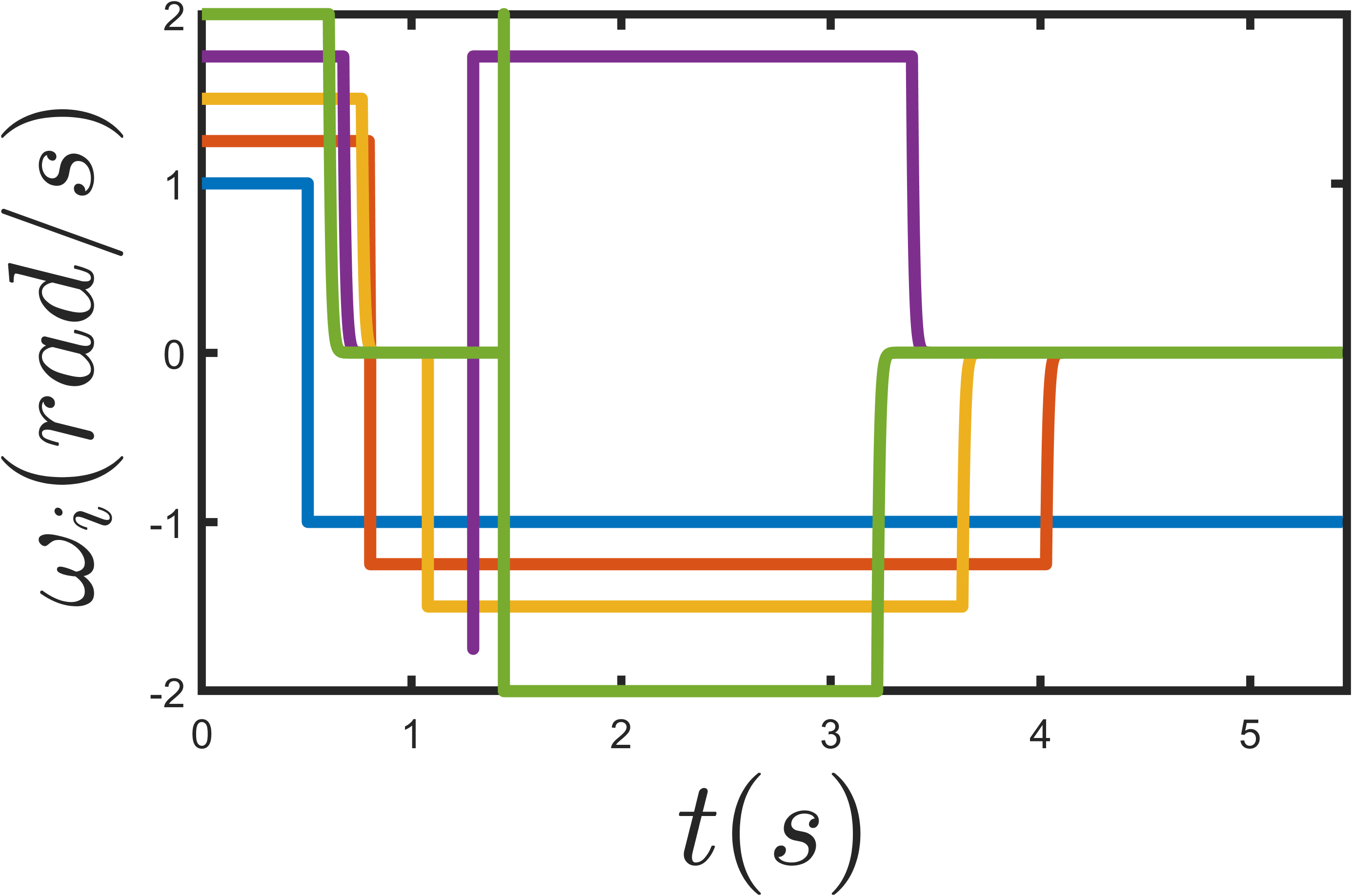}
        \label{fig:s1subfig2}
    \end{subfigure}
\caption{Data from the first simulation. The first, second, and third rows correspond to the time histories of the virtual time variable $T_i$, the distance to the target $r_i$, and the control input $\omega_i$, respectively. From left to right, the columns correspond to the four different initial state settings.
}
\label{fig:0006}
\end{figure}

\begin{figure}[!htbp]
    \centering
    \begin{subfigure}{0.23\textwidth}
        \centering
        \includegraphics[width=\linewidth]{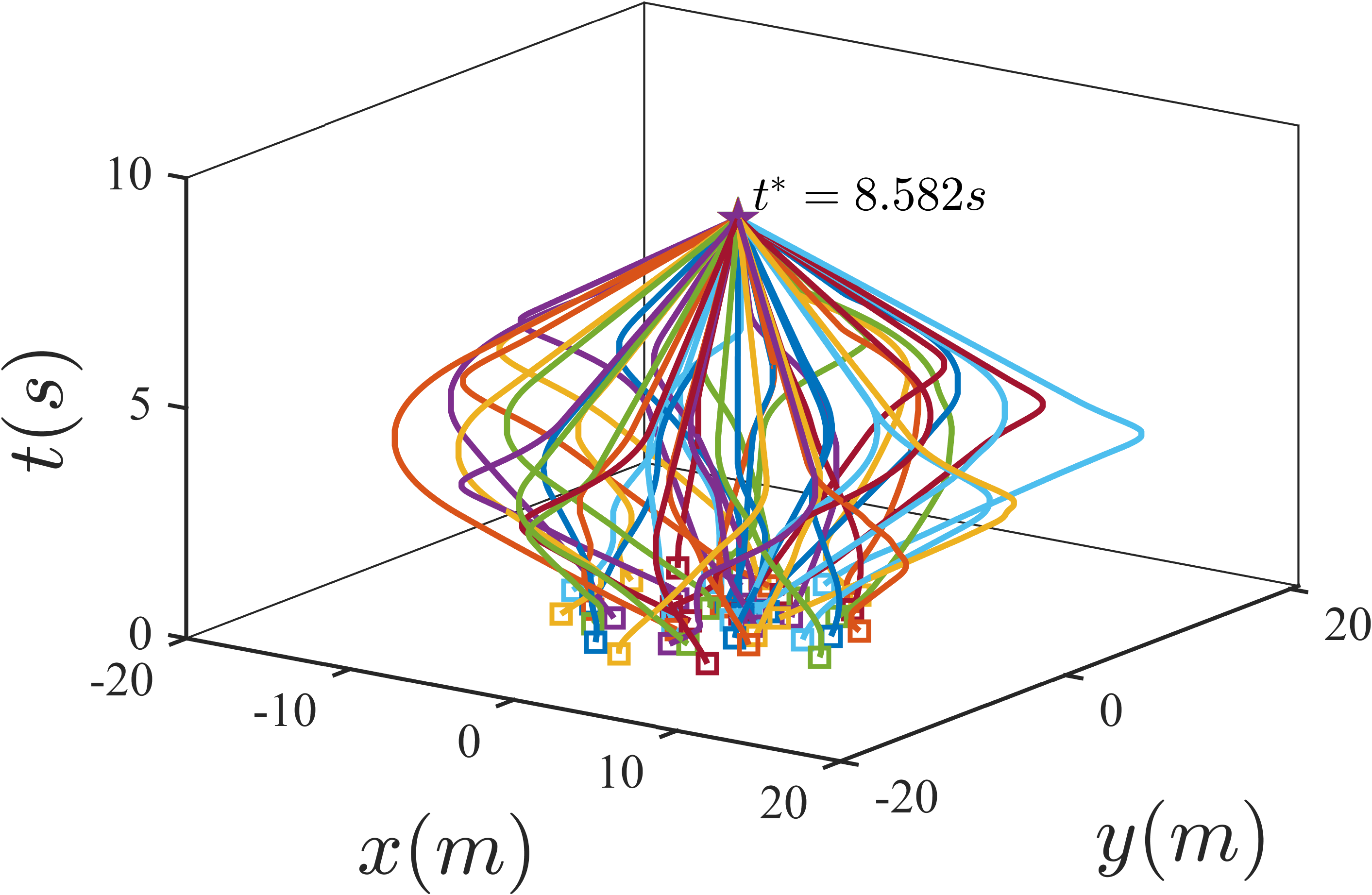}
        \label{fig:s1subfig1}
    \end{subfigure}
    \begin{subfigure}{0.23\textwidth}
        \centering
        \includegraphics[width=\linewidth]{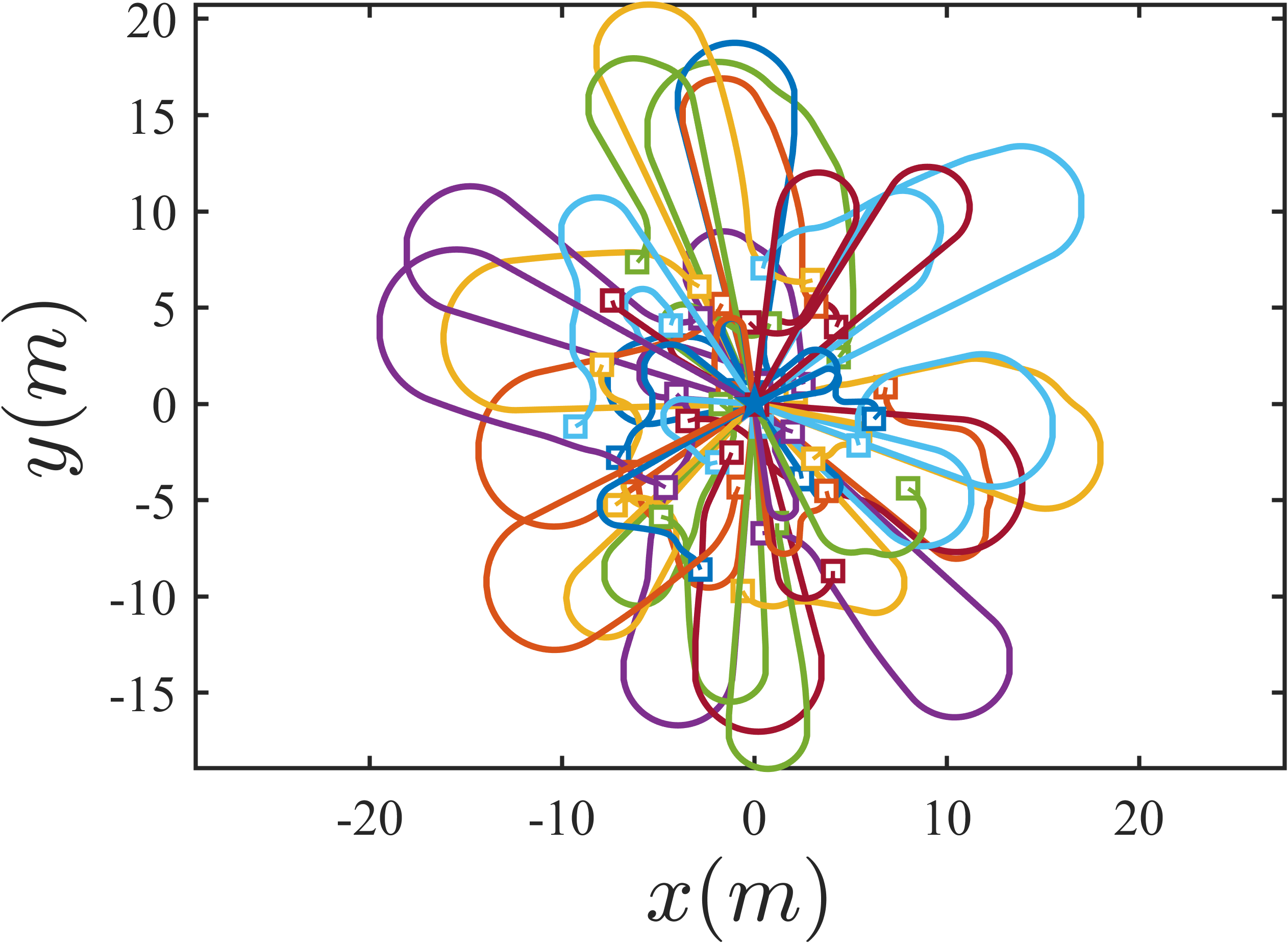}
        \label{fig:s1subfig2}
    \end{subfigure}
    
    \begin{subfigure}{0.15\textwidth}
        \centering
        \includegraphics[width=\linewidth]{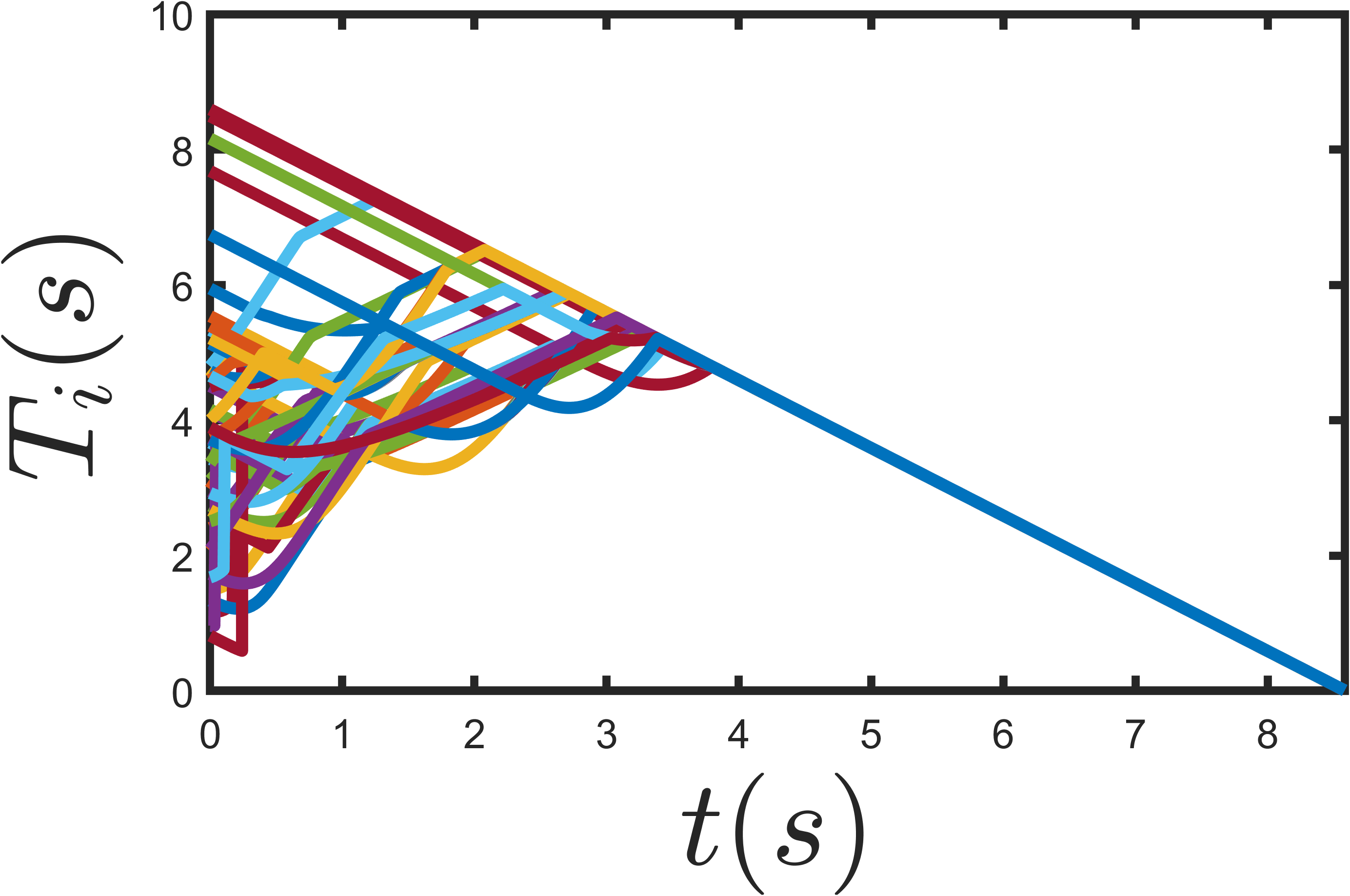}
        \label{fig:s1subfig3}
    \end{subfigure}
    \begin{subfigure}{0.15\textwidth}
        \centering
        \includegraphics[width=\linewidth]{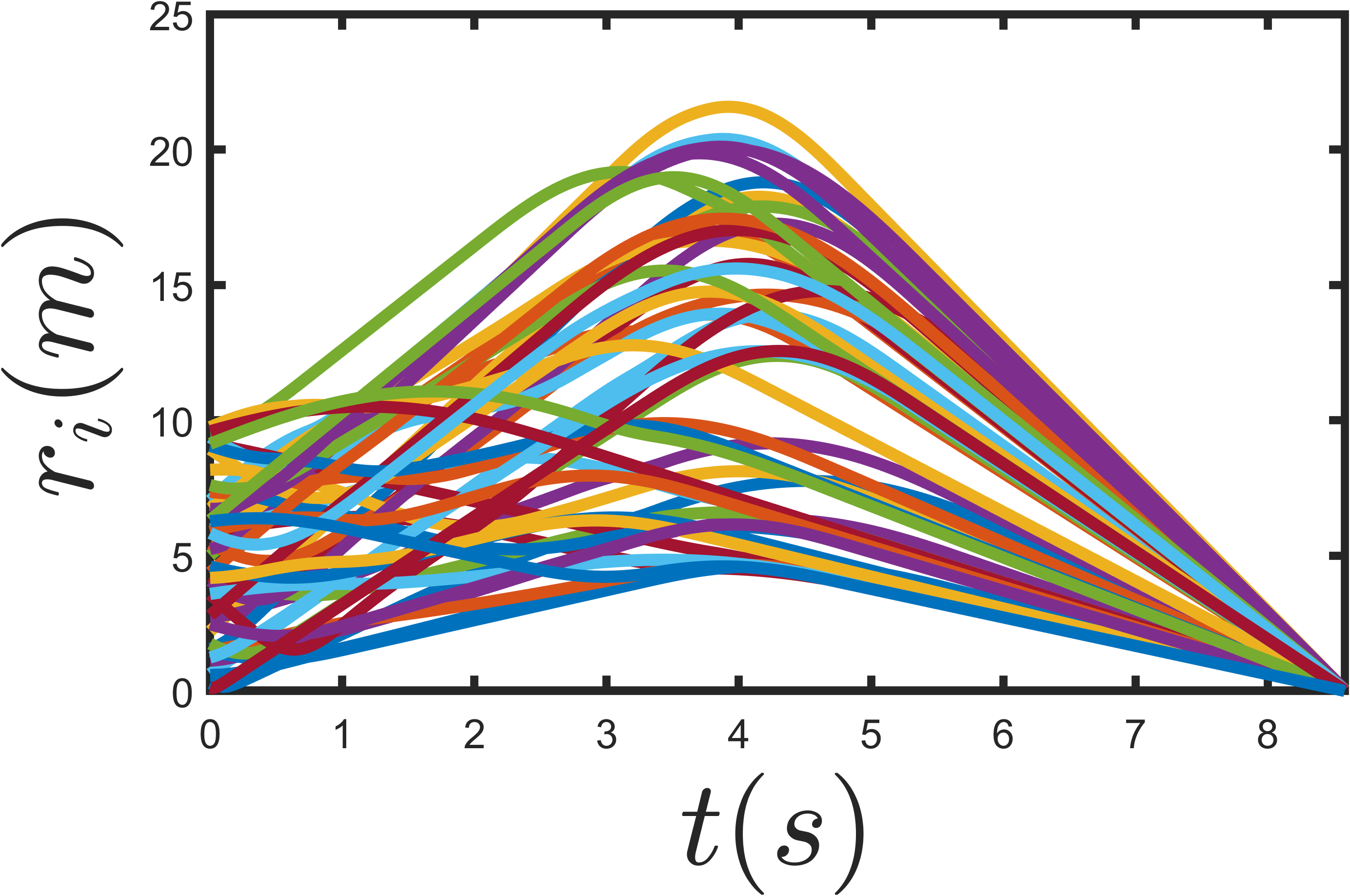}
        \label{fig:s1subfig4}
    \end{subfigure}
    \begin{subfigure}{0.15\textwidth}
        \centering
        \includegraphics[width=\linewidth]{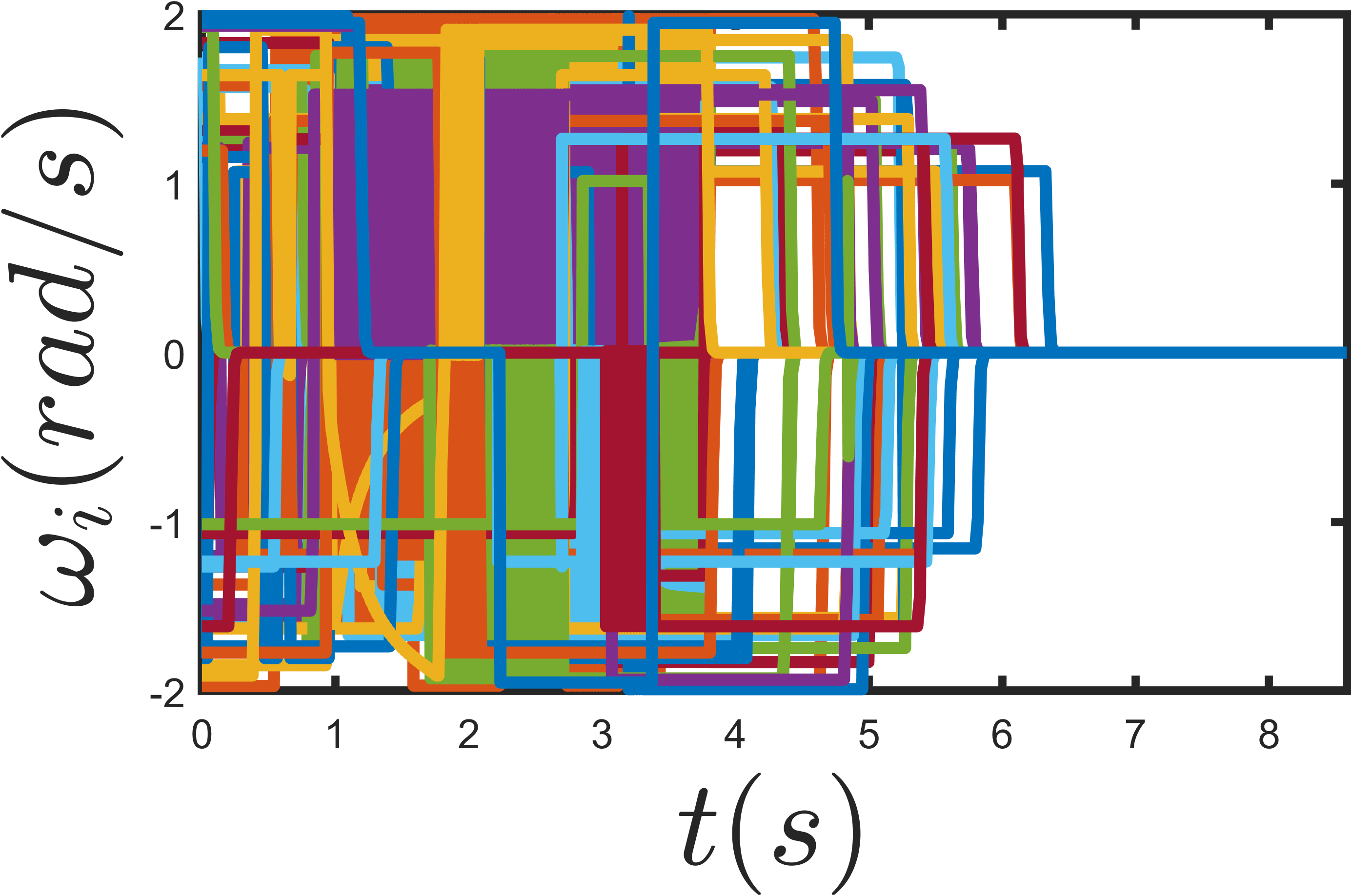}
        \label{fig:0001727}
    \end{subfigure}

    \caption{The second simulation results. In a circular region with the origin as the center and a radius of $10m$, $N=50$ curvature-constrained robots are randomly generated. The initial heading angles, linear velocities, and maximum angular velocities of the robots are randomly assigned, with $\theta_i \in [0,2\pi)$, $v_i \in [1m/s,5m/s]$, and $\bar{\omega}_i \in [1rad/s,2rad/s]$. The control gain is $k_\theta = 100$ and the arrival threshold is $\varepsilon = 0.01m$. The top row of the figure shows the robot position-time trajectories and motion paths, where small squares and stars indicate the initial and final positions, and arrows represent the initial headings. The final time is $t^*=8.583s$. The bottom row shows the time histories of the virtual time variable $T_i$, the distances to the target $r_i$, and the control inputs $\omega_i$ for $i\in \mathbb{Z}_1^{50}$.}
    \label{fig:0007}
\end{figure}

\begin{figure}[!htbp]
    \centering
    \begin{subfigure}{0.23\textwidth}
        \centering
        \includegraphics[width=\linewidth]{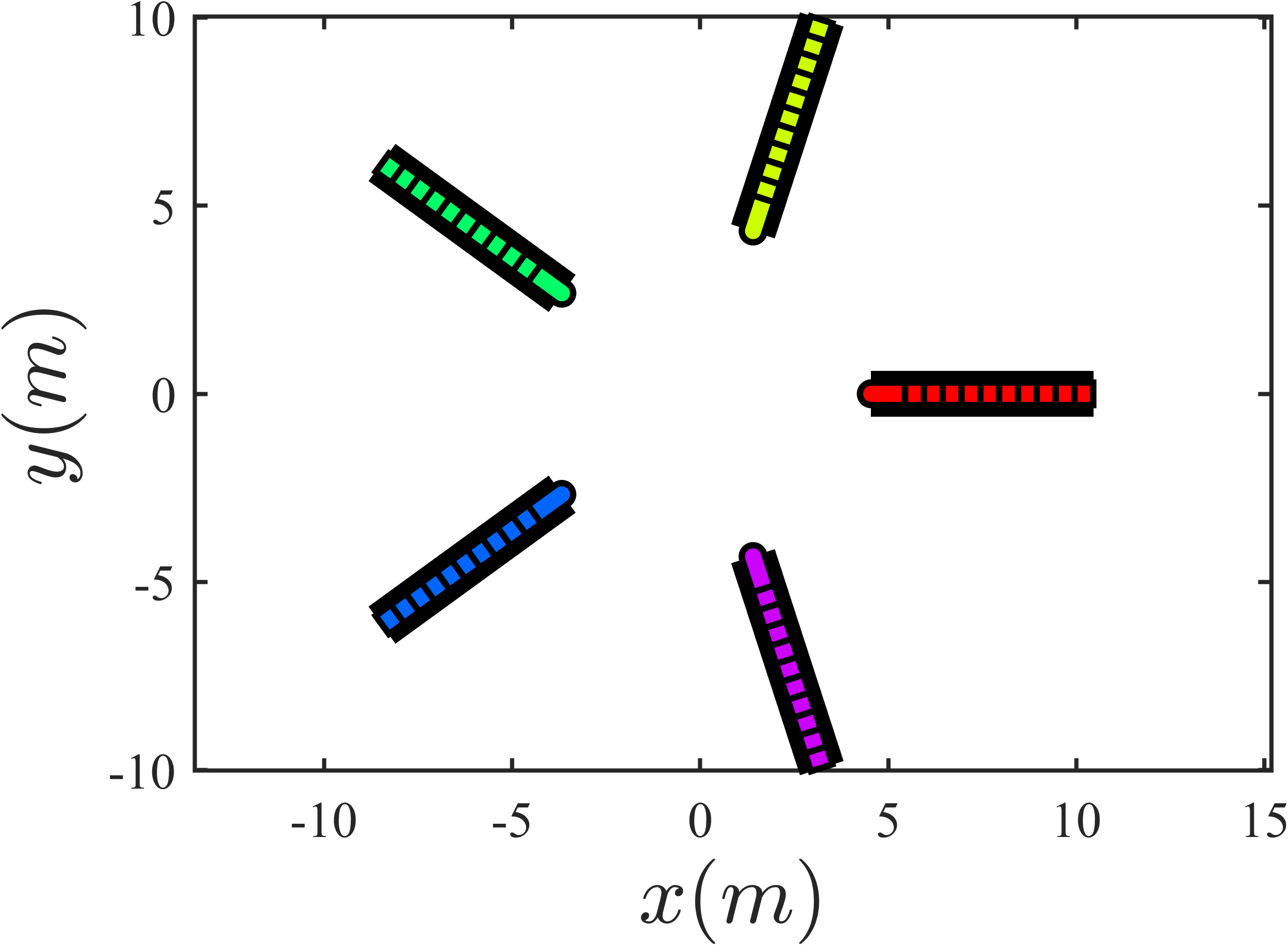}
        \caption{Snapshot at $t=t^*/4$}
        \label{fig:subfig1}
    \end{subfigure}
    \hfill
    \begin{subfigure}{0.23\textwidth}
        \centering
        \includegraphics[width=\linewidth]{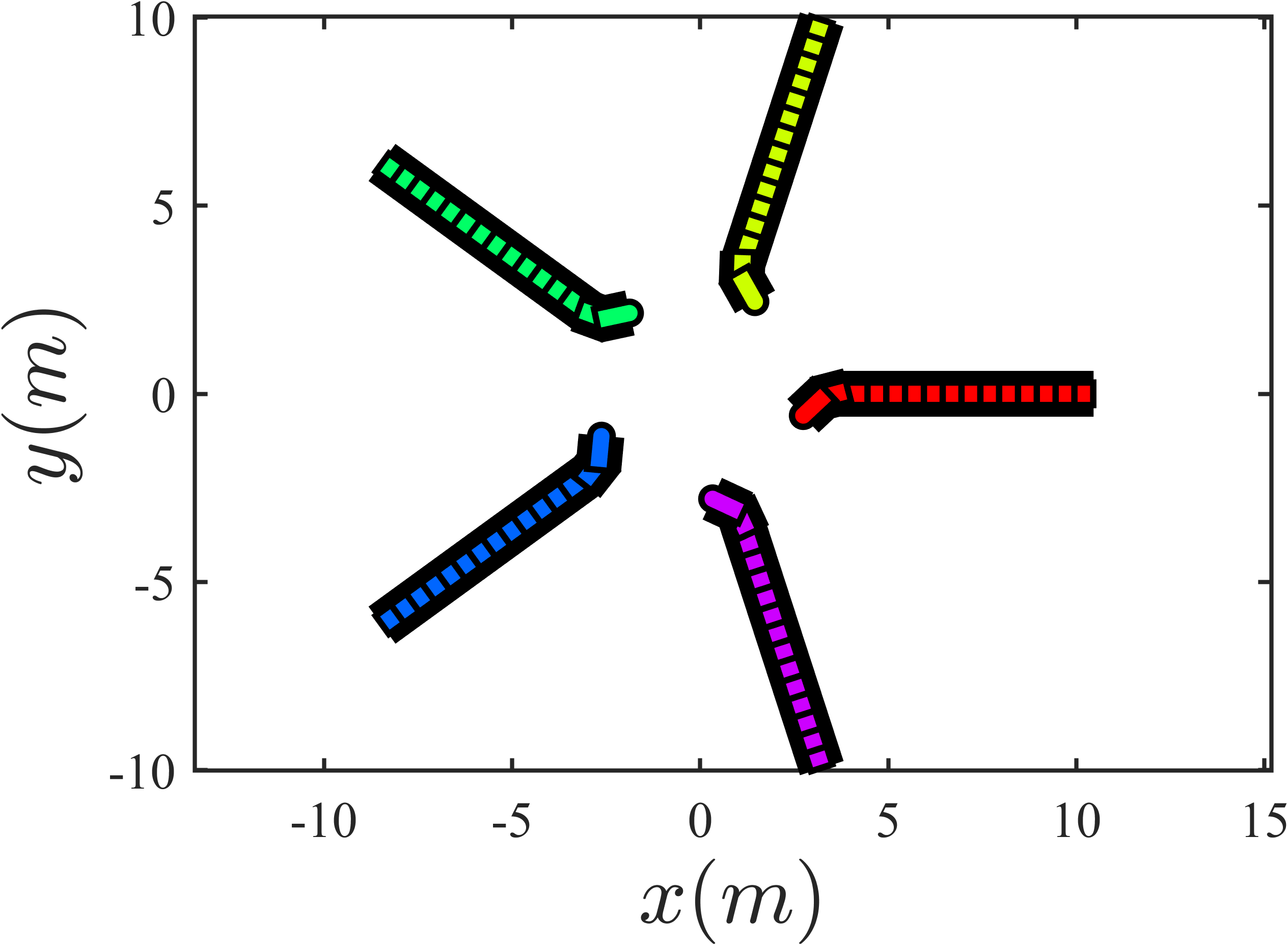}
        \caption{Snapshot at $t=t^*/3$}
        \label{fig:subfig2}
    \end{subfigure}
    
    \begin{subfigure}{0.23\textwidth}
        \centering
        \includegraphics[width=\linewidth]{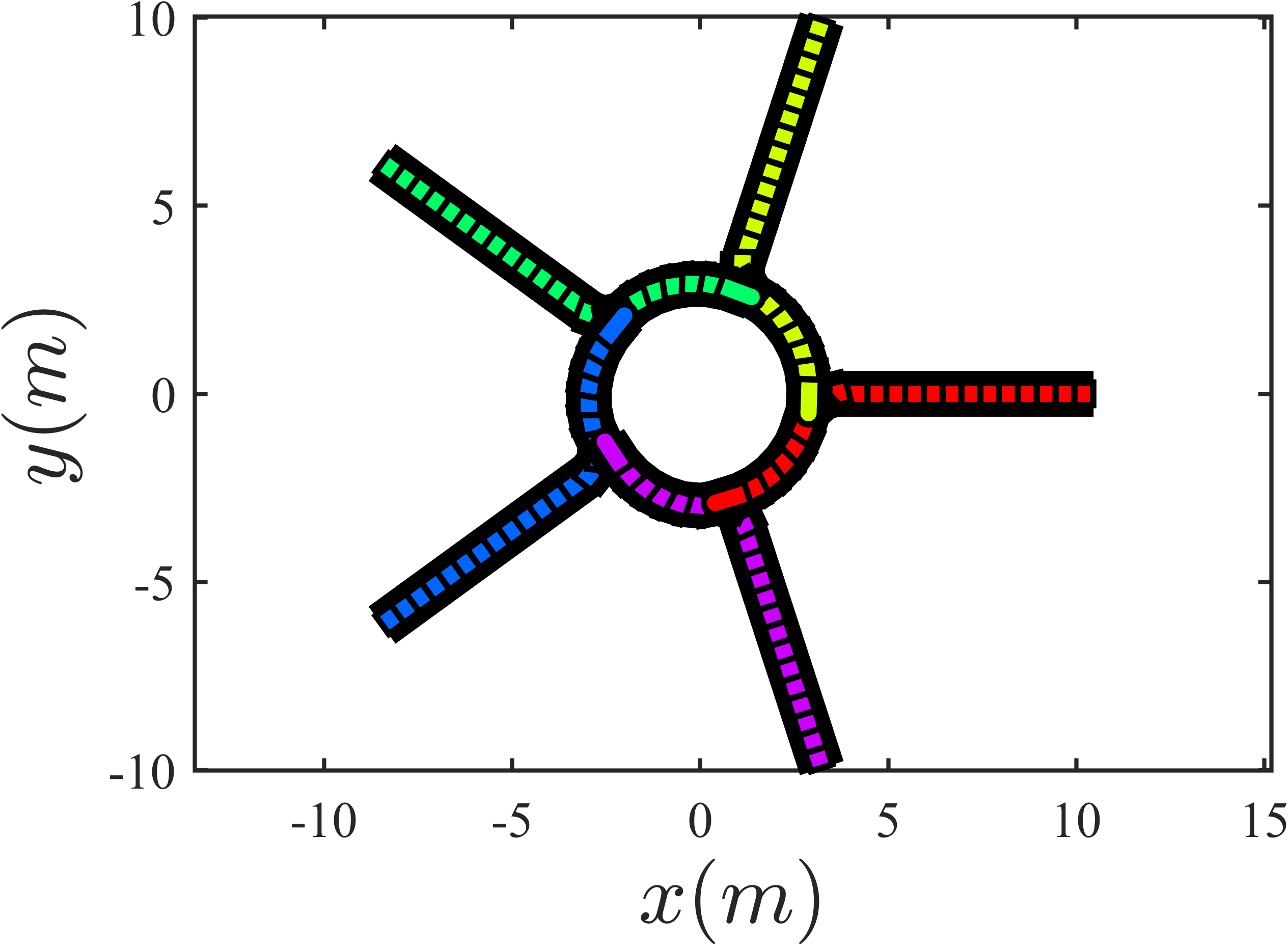}
        \caption{Snapshot at $t=t^*/2$}
        \label{fig:subfig3}
    \end{subfigure}
    \hfill
    \begin{subfigure}{0.23\textwidth}
        \centering
        \includegraphics[width=\linewidth]{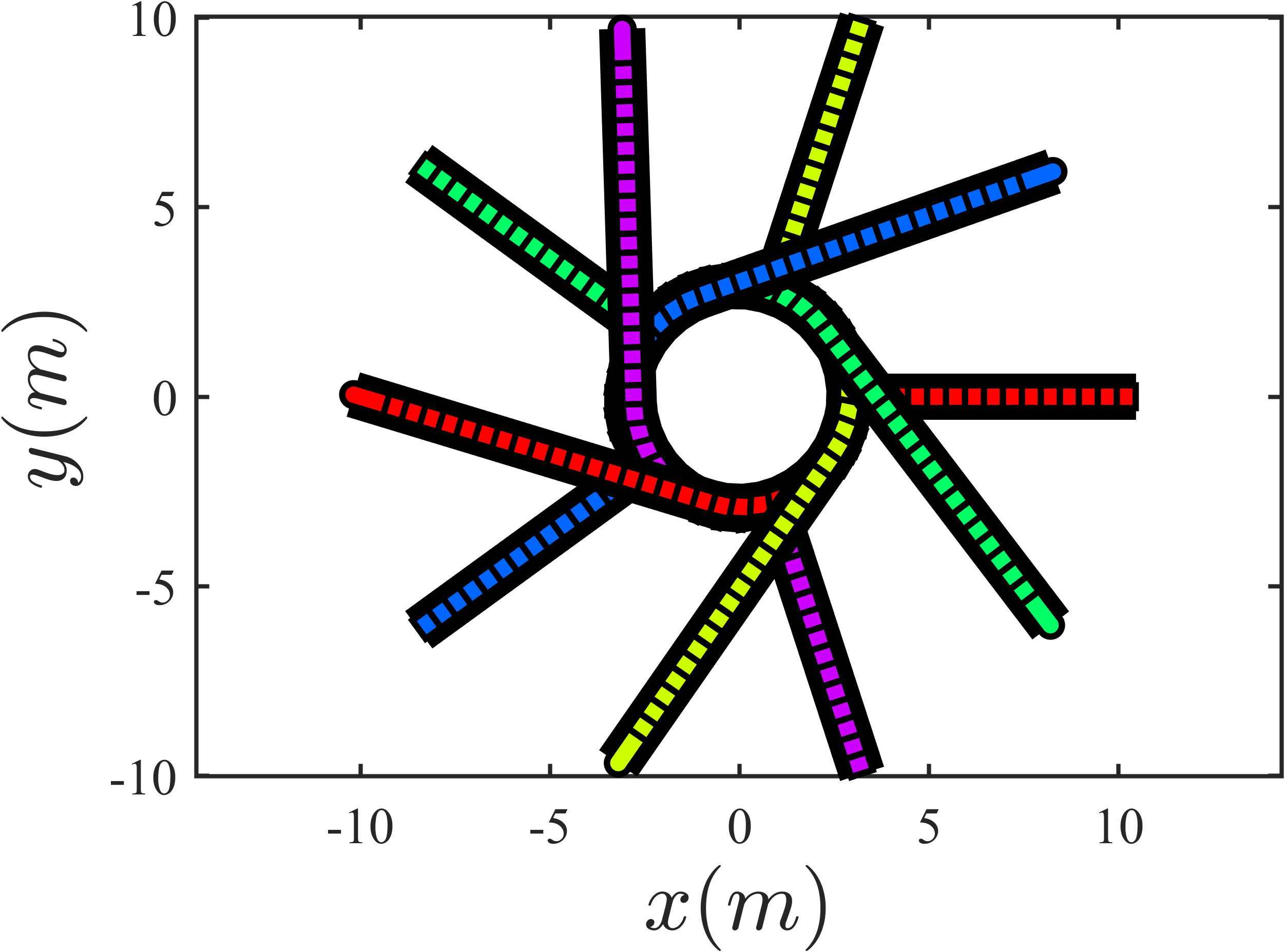}
        \caption{Snapshot at $t=t^*$}
        \label{fig:subfig4}
    \end{subfigure}
    \caption{The third simulation results. The initial states of $N=5$ robots are set as $\boldsymbol{p}_i = [10 \cos(\psi i), 10 \sin(\psi i)]^\top$, and the target positions are $\boldsymbol{p}^d_i = [10 \cos(\psi i + \frac{\pi}{2}), 10 \sin(\psi i + \frac{\pi}{2})]^\top$, where $\psi = \frac{2\pi}{N+1}$ and $i \in \mathbb{Z}_1^{5}$. The initial heading angles are aligned with the direction of the vector from the robot to its target, i.e., $[\cos\theta_i, \sin\theta_i]^\top = \frac{\boldsymbol{p}^d_i - \boldsymbol{p}_i}{\|\boldsymbol{p}^d_i - \boldsymbol{p}_i\|}$ for $i \in \mathbb{Z}_1^{5}$. The robots' linear velocities are $v_i = 1m/s$, and their maximum angular velocities are $\bar{\omega}_i = 1 + 0.1 (i-1)rad/s$. The control gain is $k_\theta = 100$ and the arrival threshold is $\varepsilon = 0.01m$. The simultaneous arrival time is $t^* = 21.754s$.}
    \label{fig:0008}
\end{figure}

\subsection{Real-World Experiments}

\begin{table}[!h]
\centering
\renewcommand{\arraystretch}{1.3} 
\captionsetup{labelformat=empty} 
\caption{\centering \scriptsize TABLE \uppercase\expandafter{\romannumeral1} \\ \centering \scriptsize INITIAL AND TARGET CONFIGURATIONS FOR EXPERIMENTS}
\label{tab:my-table}
\begin{tabular}{|c@{\hskip 4pt}|c|c|c|c|}
\hline
\multicolumn{2}{|c|}{\textbf{Example}} & $\boldsymbol{\xi}_i(0)=(\boldsymbol{p}_i,\theta_i)$ & $(v_i,\bar{\omega}_i)$ & $\boldsymbol{p}^d_i$ \\ \hline
\multirow{2}{*}{\textbf{Set 1}} & \textbf{UAV 1} & $(12.72,-1.59,3.13)$ & $(0.8,2)$ & $(0,-2)$  \\ \cline{2-5} 
                      & \textbf{UAV 2}  & $(9.37,1.95,3.12)$ & $(1,1)$ & $(0,2)$   \\ \hline
\multirow{2}{*}{\textbf{Set 2}} & \textbf{UAV 1}  & $(5.59,-2.15,0.05)$ & $(0.5,0.4)$ & $(0,-2)$  \\ \cline{2-5} 
                      & \textbf{UAV 2} & $(10.68,0.77,3.26)$ & $(1,1)$ & $(0,2)$  \\ \hline
\end{tabular}
\end{table}

\begin{figure}[!t]
    \centering
    \begin{subfigure}{0.23\textwidth}
        \centering
        \includegraphics[width=\linewidth]{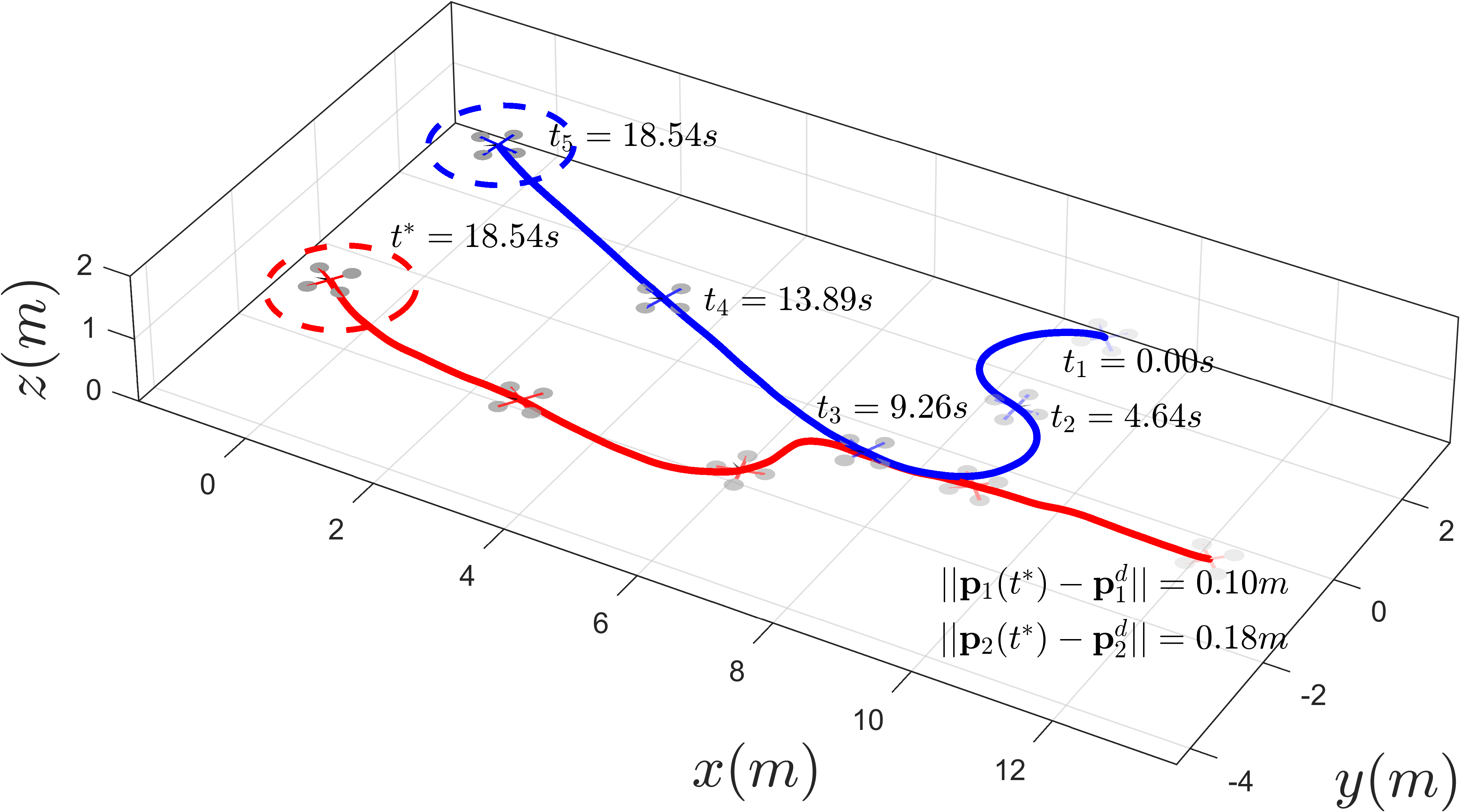}
        \label{fig:subfig1}
    \end{subfigure}
    \begin{subfigure}{0.115\textwidth}
        \centering
        \includegraphics[width=\linewidth]{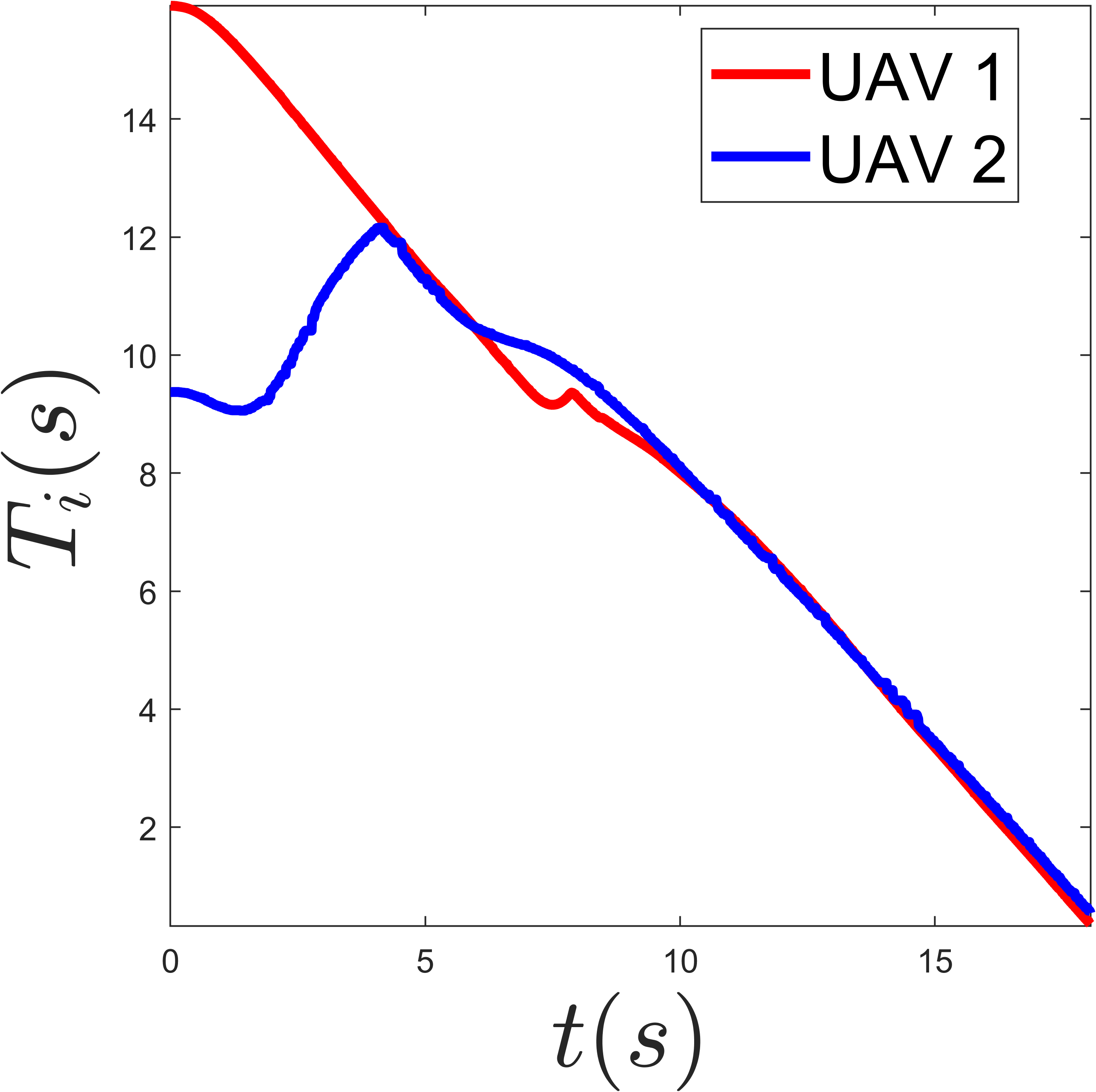}
        \label{fig:subfig2}
    \end{subfigure}
    \begin{subfigure}{0.115\textwidth}
        \centering
        \includegraphics[width=\linewidth]{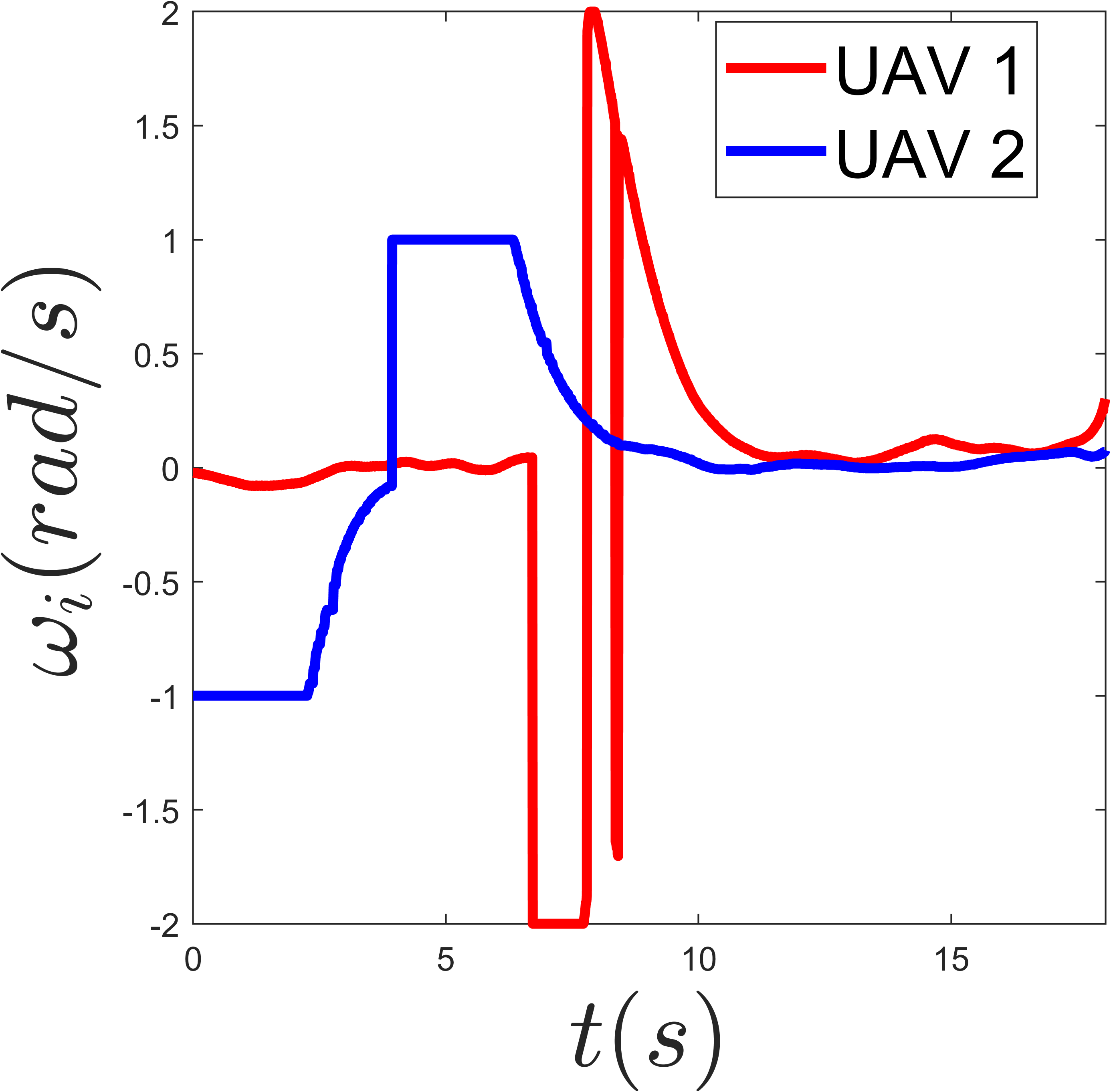}
        \label{fig:subfig2}
    \end{subfigure}
    \begin{subfigure}{0.23\textwidth}
        \centering
        \includegraphics[width=\linewidth]{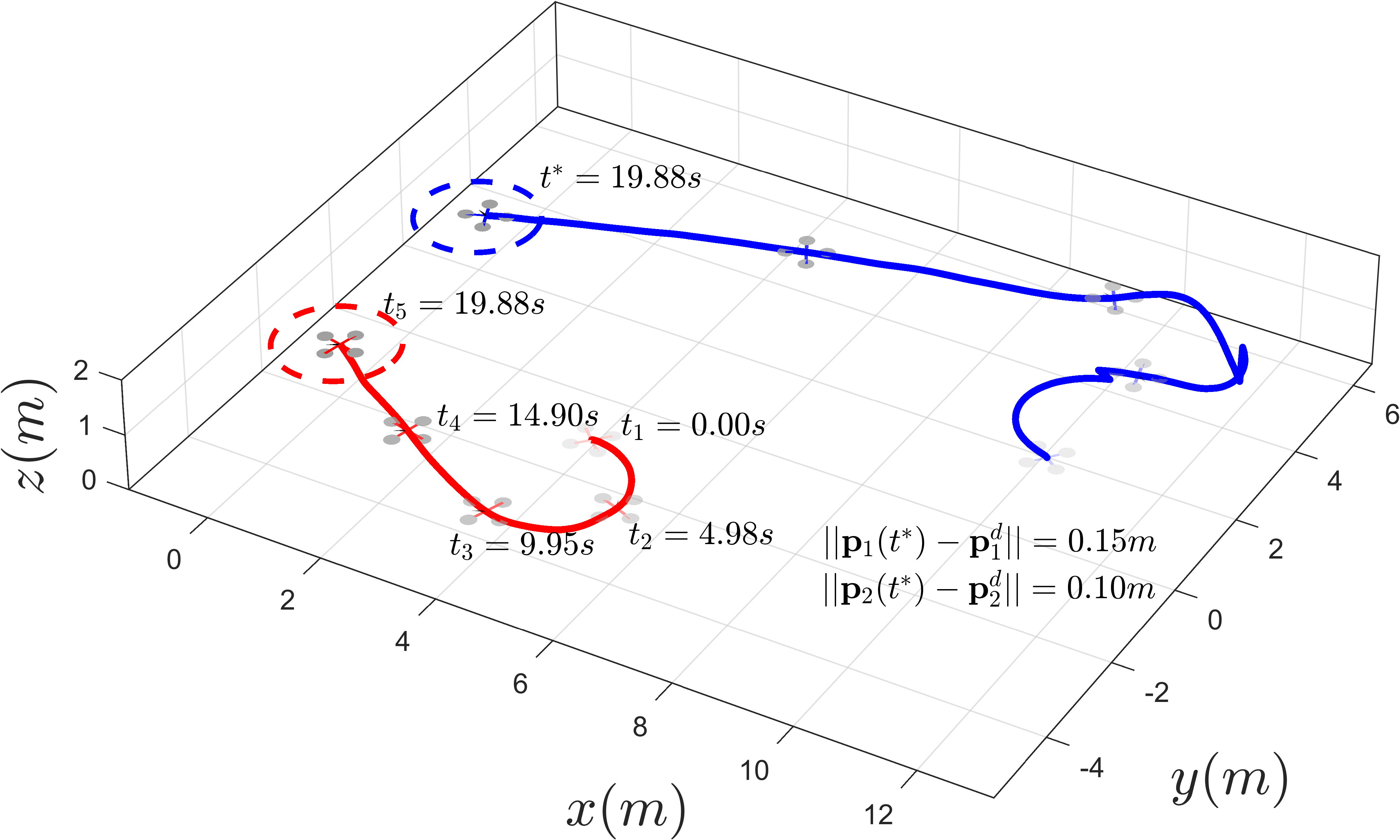}
        \label{fig:subfig1}
    \end{subfigure}
    \begin{subfigure}{0.115\textwidth}
        \centering
        \includegraphics[width=\linewidth]{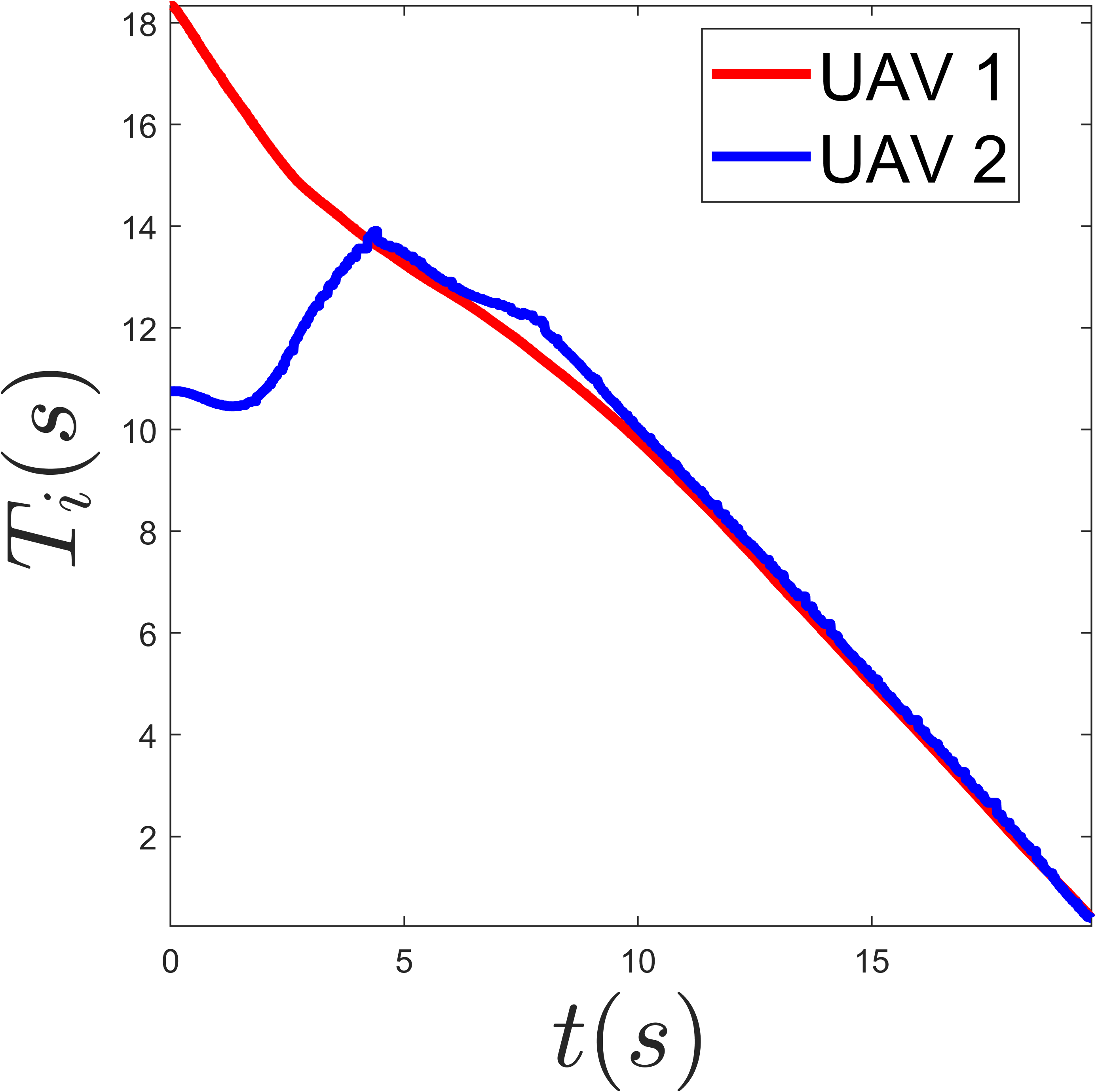}
        \label{fig:subfig2}
    \end{subfigure}
    \begin{subfigure}{0.115\textwidth}
        \centering
        \includegraphics[width=\linewidth]{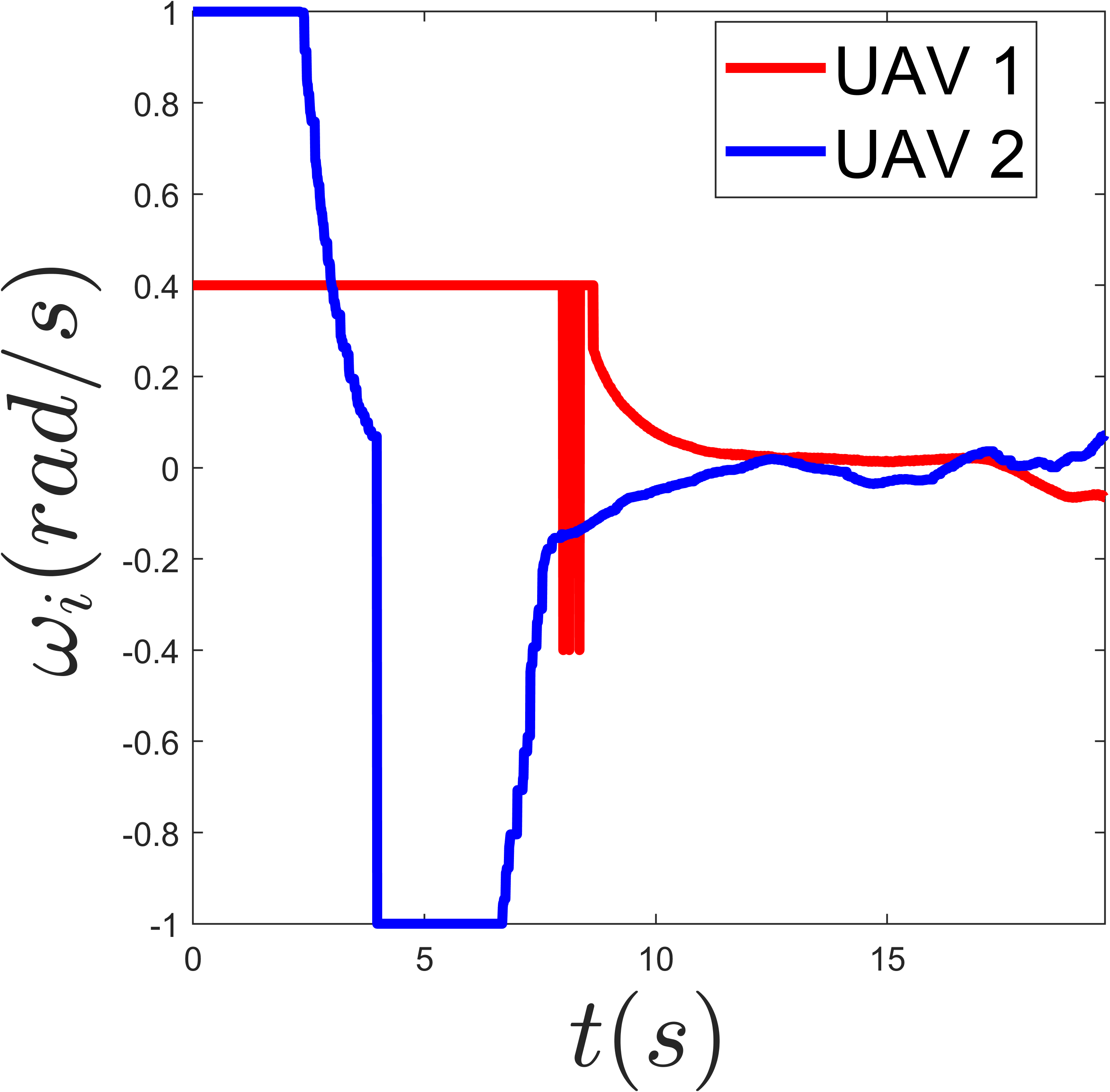}
        \label{fig:subfig2}
    \end{subfigure}
    \caption{Simultaneous arrival of two UAVs. Snapshots are shown at times $t=it^*/4,i\in\mathbb{Z}_0^{4}$. The upper figure corresponds to the initial condition set 1, while the lower figure corresponds to the initial condition set 2. From left to right, the figures represent the trajectory plot, virtual time evolution, and control input variation, respectively.}
    \label{fig:0010}
\end{figure}

In this study, we utilized two quadrotor UAVs, each equipped with PX4 flight controllers. These UAVs are capable of manually adjusting their curvature while maintaining a constant speed, a feature that can be strictly enforced to assess performance under various initial conditions. The UAVs' positioning is facilitated by an indoor motion capture system, with inter-UAV communication enabled via WiFi. Two distinct sets of initial conditions were configured for the experiment (see TABLE~\ref{tab:my-table}, with base units of meters (m), seconds (s), and radians (rad)), and a simple proportional control method was employed to maintain the flight altitude at $z=1m$. Simultaneous arrival is defined as the condition when the virtual time difference $|T_1 - T_2| \leq \delta_t = 0.5s$. The experiment concludes once any UAV reaches its target.

The results are presented in Fig.~\ref{fig:0010}. Notably, even in the presence of disturbances such as oscillations in the $z$-axis and communication delays, the error in simultaneous arrival can be kept within a small range (specifically, at time $t^*$, $\left|\|\boldsymbol{p}_1(t^*)-\boldsymbol{p}^d_1\|-\|\boldsymbol{p}_2(t^*)-\boldsymbol{p}^d_2\|\right| < 0.1m$). Additionally, the evolution of virtual time, variations in bearing angle, and UAV trajectories collectively provide indirect validation of Theorem~\ref{thm:002} and Theorem~\ref{thm:003}.

\section{Conclusion and Future Work}\label{Conclusion}
	We present a distributed control method based on a max-consensus protocol for multi-robot systems with curvature constraints and constant speeds, achieving simultaneous arrival at target points. By introducing a virtual time variable and leveraging the geometric optimization properties of Dubins paths, the proposed method ensures global time synchronization while maintaining optimality, scalability, and low communication overhead. In the control design, a hybrid strategy combining saturated proportional control and optimal control effectively addresses the local non-monotonicity in the curvature-constrained region, thereby guaranteeing system stability. Extensive simulations and experiments with multiple quadrotors demonstrate the effectiveness, robustness, and scalability of the algorithm in large-scale robot systems, and verify the safety of the approach when obstacle avoidance strategies are incorporated. 
 
    Future work includes a rigorous theoretical analysis of the zero-dynamics behavior and global convergence in curvature-constrained regions, as well as the design of more refined obstacle avoidance strategies, aiming to extend the algorithm to complex environments. Furthermore, we will focus on extending the simultaneous arrival problem to three-dimensional environments and validating it through experiments with multiple fixed-wing UAVs.

\begin{appendices}

	\end{appendices}

	\bibliographystyle{IEEEtran}
	\bibliography{IEEEabrv.bib}

@book{MesbahiEgerstedt+2010,

title = {Graph Theoretic Methods in Multiagent Networks},
author = {Mehran Mesbahi and Magnus Egerstedt},
publisher = {Princeton University Press},
address = {Princeton},
doi = {doi:10.1515/9781400835355},
isbn = {9781400835355},
year = {2010},
lastchecked = {2025-08-03}
}

@ARTICLE{10938343,
  author={Wilson, James P. and Gupta, Shalabh and Wettergren, Thomas A.},
  journal={IEEE Transactions on Robotics}, 
  title={Generalized Multispeed Dubins Motion Model}, 
  year={2025},
  volume={41},
  number={},
  pages={2861-2878},
  doi={10.1109/TRO.2025.3554436}}

@article{SHKEL2001179,
  title={Classification of the Dubins set},
  author={Shkel, Andrei M and Lumelsky, Vladimir},
  journal={Robotics and Autonomous Systems},
  volume={34},
  number={4},
  pages={179--202},
  year={2001},
  publisher={Elsevier}
}

@INPROCEEDINGS{9482855,
  author={Tran, Dzung and Casbeer, David and Milutinović, Dejan},
  booktitle={2021 American Control Conference (ACC)}, 
  title={Synthesizing Simultaneous Arrival from Single Agent Time Optimal Controllers}, 
  year={2021},
  volume={},
  number={},
  pages={3902-3907},
  doi={10.23919/ACC50511.2021.9482855}}

@ARTICLE{5746538,
  author={Snape, Jamie and Berg, Jur van den and Guy, Stephen J. and Manocha, Dinesh},
  journal={IEEE Transactions on Robotics}, 
  title={The Hybrid Reciprocal Velocity Obstacle}, 
  year={2011},
  volume={27},
  number={4},
  pages={696-706},
  doi={10.1109/TRO.2011.2120810}}

@article{Li2020,
author = {Li, Kang and Wang, Jianan and Lee, Chang-Hun and Zhou, Rui and Zhao, Shiyu},
title = {Distributed Cooperative Guidance for Multivehicle Simultaneous Arrival Without Numerical Singularities},
journal = {Journal of Guidance, Control, and Dynamics},
volume = {43},
number = {7},
pages = {1365-1373},
year = {2020},
doi = {10.2514/1.G005010},

}

@ARTICLE{8303222,
  author={Li, Zhenhong and Ding, Zhengtao},
  journal={IEEE Transactions on Control Systems Technology}, 
  title={Robust Cooperative Guidance Law for Simultaneous Arrival}, 
  year={2019},
  volume={27},
  number={3},
  pages={1360-1367},
  doi={10.1109/TCST.2018.2804348}}

@ARTICLE{11004631,
  author={Singh, Kushal P. and Rao, Aditya K. and Tripathy, Twinkle},
  journal={IEEE Transactions on Control of Network Systems}, 
  title={Finite Time Max-consensus for Simultaneous Target Interception in Switching Graph Topologies}, 
  year={2025},
  volume={},
  number={},
  pages={1-11},
  doi={10.1109/TCNS.2025.3570423}}

@article{doi:10.2514/1.59512,
author = {Anderson, Ross P. and Bakolas, Efstathios and Milutinovi\'{c}, Dejan and Tsiotras, Panagiotis},
title = {Optimal Feedback Guidance of a Small Aerial Vehicle in a Stochastic Wind},
journal = {Journal of Guidance, Control, and Dynamics},
volume = {36},
number = {4},
pages = {975-985},
year = {2013},
doi = {10.2514/1.59512},
}

@INPROCEEDINGS{351019,
  author={Xuan-Nam Bui and Boissonnat, J.-D. and Soueres, P. and Laumond, J.-P.},
  booktitle={Proceedings of the 1994 IEEE International Conference on Robotics and Automation}, 
  title={Shortest path synthesis for Dubins non-holonomic robot}, 
  year={1994},
  volume={},
  number={},
  pages={2-7 vol.1},
  doi={10.1109/ROBOT.1994.351019}}

@INPROCEEDINGS{9143934,
  author={Zadka, Benjamin and Tripathy, Twinkle and Tsalik, Ronny and Shima, Tal},
  booktitle={2020 European Control Conference (ECC)}, 
  title={A Max-Consensus Cyclic Pursuit Based Guidance Law for Simultaneous Target Interception}, 
  year={2020},
  volume={},
  number={},
  pages={662-667},
  doi={10.23919/ECC51009.2020.9143934}}

@ARTICLE{11018241,
  author={Sinha, Abhinav and Mukherjee, Dwaipayan and Kumar, Shashi Ranjan},
  journal={IEEE Transactions on Aerospace and Electronic Systems}, 
  title={Consensus-driven Deviated Pursuit for Guaranteed Simultaneous Interception of Moving Targets}, 
  year={2025},
  volume={},
  number={},
  pages={1-12},
  doi={10.1109/TAES.2025.3575049}}

@ARTICLE{1333204,
  author={Olfati-Saber, R. and Murray, R.M.},
  journal={IEEE Transactions on Automatic Control}, 
  title={Consensus problems in networks of agents with switching topology and time-delays}, 
  year={2004},
  volume={49},
  number={9},
  pages={1520-1533},
  doi={10.1109/TAC.2004.834113}}

@ARTICLE{8106791,
  author={He, Shaoming and Wang, Wei and Lin, Defu and Lei, Hongbo},
  journal={IEEE Transactions on Aerospace and Electronic Systems}, 
  title={Consensus-Based Two-Stage Salvo Attack Guidance}, 
  year={2018},
  volume={54},
  number={3},
  pages={1555-1566},
  doi={10.1109/TAES.2017.2773272}}

@ARTICLE{8318665,
  author={Kang, Shen and Wang, Jianan and Li, Guang and Shan, Jiayuan and Petersen, Ian R.},
  journal={IEEE Transactions on Aerospace and Electronic Systems}, 
  title={Optimal Cooperative Guidance Law for Salvo Attack: An MPC-Based Consensus Perspective}, 
  year={2018},
  volume={54},
  number={5},
  pages={2397-2410},
  doi={10.1109/TAES.2018.2816880}}

@ARTICLE{795787,
  author={Ando, H. and Oasa, Y. and Suzuki, I. and Yamashita, M.},
  journal={IEEE Transactions on Robotics and Automation}, 
  title={Distributed memoryless point convergence algorithm for mobile robots with limited visibility}, 
  year={1999},
  volume={15},
  number={5},
  pages={818-828},
  doi={10.1109/70.795787}}

@article{doi:10.2514/1.G004074,
author = {Tsalik, Ronny and Shima, Tal},
title = {Circular Impact-Time Guidance},
journal = {Journal of Guidance, Control, and Dynamics},
volume = {42},
number = {8},
pages = {1836-1847},
year = {2019},
doi = {10.2514/1.G004074},


}

@INPROCEEDINGS{4543489,
  author={van den Berg, Jur and Ming Lin and Manocha, Dinesh},
  booktitle={2008 IEEE International Conference on Robotics and Automation}, 
  title={Reciprocal Velocity Obstacles for real-time multi-agent navigation}, 
  year={2008},
  volume={},
  number={},
  pages={1928-1935},
  doi={10.1109/ROBOT.2008.4543489}}

@ARTICLE{7484276,
  author={Panagou, Dimitra},
  journal={IEEE Transactions on Automatic Control}, 
  title={A Distributed Feedback Motion Planning Protocol for Multiple Unicycle Agents of Different Classes}, 
  year={2017},
  volume={62},
  number={3},
  pages={1178-1193},
  doi={10.1109/TAC.2016.2576020}}

@ARTICLE{1381658,
  author={Zhiyun Lin and Francis, B. and Maggiore, M.},
  journal={IEEE Transactions on Automatic Control}, 
  title={Necessary and sufficient graphical conditions for formation control of unicycles}, 
  year={2005},
  volume={50},
  number={1},
  pages={121-127},
  doi={10.1109/TAC.2004.841121}}

@ARTICLE{4200856,
  author={Dimarogonas, Dimos V. and Kyriakopoulos, Kostas J.},
  journal={IEEE Transactions on Automatic Control}, 
  title={On the Rendezvous Problem for Multiple Nonholonomic Agents}, 
  year={2007},
  volume={52},
  number={5},
  pages={916-922},
  doi={10.1109/TAC.2007.895897}}

@ARTICLE{5783895,
  author={Yu, Jingjin and LaValle, Steven M. and Liberzon, Daniel},
  journal={IEEE Transactions on Automatic Control}, 
  title={Rendezvous Without Coordinates}, 
  year={2012},
  volume={57},
  number={2},
  pages={421-434},
  doi={10.1109/TAC.2011.2158172}}

@article{doi:10.2514/1.G001719,
author = {Tekin, Raziye and Erer, Koray S. and Holzapfel, Florian},
title = {Control of Impact Time with Increased Robustness via Feedback Linearization},
journal = {Journal of Guidance, Control, and Dynamics},
volume = {39},
number = {7},
pages = {1682-1689},
year = {2016},
doi = {10.2514/1.G001719},
}

@article{MLYNCH2003173,
title = {Optimal control of the thrusted skate},
journal = {Automatica},
volume = {39},
number = {1},
pages = {173-176},
year = {2003},
issn = {0005-1098},
doi = {https://doi.org/10.1016/S0005-1098(02)00165-6},
author = {Kevin {M. Lynch}}
}

@article{MATVEEV2011515,
title = {A method for guidance and control of an autonomous vehicle in problems of border patrolling and obstacle avoidance},
journal = {Automatica},
volume = {47},
number = {3},
pages = {515-524},
year = {2011},
issn = {0005-1098},
doi = {https://doi.org/10.1016/j.automatica.2011.01.024},
author = {Alexey S. Matveev and Hamid Teimoori and Andrey V. Savkin}
}

@ARTICLE{1597196,
  author={In-Soo Jeon and Jin-Ik Lee and Min-Jea Tahk},
  journal={IEEE Transactions on Control Systems Technology}, 
  title={Impact-time-control guidance law for anti-ship missiles}, 
  year={2006},
  volume={14},
  number={2},
  pages={260-266},
  doi={10.1109/TCST.2005.863655}}

@ARTICLE{7300456,
  author={Lau, Darwin and Eden, Jonathan and Oetomo, Denny},
  journal={IEEE Transactions on Robotics}, 
  title={Fluid Motion Planner for Nonholonomic 3-D Mobile Robots With Kinematic Constraints}, 
  year={2015},
  volume={31},
  number={6},
  pages={1537-1547},
  doi={10.1109/TRO.2015.2482078}}

@ARTICLE{10540263,
  author={He, Xiaodong and Li, Zhongkui},
  journal={IEEE Transactions on Automatic Control}, 
  title={Simultaneous Position and Orientation Planning of Nonholonomic Multirobot Systems: A Dynamic Vector Field Approach}, 
  year={2024},
  volume={69},
  number={12},
  pages={8354-8369},
  doi={10.1109/TAC.2024.3406475}}

@INPROCEEDINGS{10886648,
  author={Qiao, Yike and He, Xiaodong and Li, Zhongkui},
  booktitle={2024 IEEE 63rd Conference on Decision and Control (CDC)}, 
  title={Motion Planning of 3D Nonholonomic Robots via Curvature-Constrained Vector Fields}, 
  year={2024},
  volume={},
  number={},
  pages={5807-5812},
  doi={10.1109/CDC56724.2024.10886648}}

@INPROCEEDINGS{7799011,
  author={Andreetto, Marco and Divan, Stefano and Fontanelli, Daniele and Palopoli, Luigi},
  booktitle={2016 IEEE 55th Conference on Decision and Control (CDC)}, 
  title={Hybrid feedback path following for robotic walkers via bang-bang control actions}, 
  year={2016},
  volume={},
  number={},
  pages={4855-4860},
  doi={10.1109/CDC.2016.7799011}}

@article{doi:10.2514/1.G001349,
author = {Saleem, Abdul and Ratnoo, Ashwini},
title = {Lyapunov-Based Guidance Law for Impact Time Control and Simultaneous Arrival},
journal = {Journal of Guidance, Control, and Dynamics},
volume = {39},
number = {1},
pages = {164-173},
year = {2016},
doi = {10.2514/1.G001349},



}

@article{ZHENG2013401,
title = {Rendezvous of unicycles: A bearings-only and perimeter shortening approach},
journal = {Systems \& Control Letters},
volume = {62},
number = {5},
pages = {401-407},
year = {2013},
issn = {0167-6911},
doi = {https://doi.org/10.1016/j.sysconle.2013.02.006},
author = {Ronghao Zheng and Dong Sun}
}

@article{doi:10.1137/040620564,
author = {Lin, J. and Morse, A. S. and Anderson, B. D. O.},
title = {The Multi-Agent Rendezvous Problem. Part 2: The Asynchronous Case},
journal = {SIAM Journal on Control and Optimization},
volume = {46},
number = {6},
pages = {2120-2147},
year = {2007},
doi = {10.1137/040620564},
}

@article{doi:10.1137/040620552,
author = {Lin, J. and Morse, A. S. and Anderson, B. D. O.},
title = {The Multi-Agent Rendezvous Problem. Part 1: The Synchronous Case},
journal = {SIAM Journal on Control and Optimization},
volume = {46},
number = {6},
pages = {2096-2119},
year = {2007},
doi = {10.1137/040620552},
}
\end{document}